\newtheorem{theorem}{Theorem}[section]
\newtheorem{corollary}{Corollary}[theorem]
\newtheorem{lemma}[theorem]{Lemma}
\newtheorem{definition}[theorem]{Definition}
\newcommand{\mtt}[1]{\mathtt{#1}}
\providecommand{\customgenericname}{}
\newcommand{\newcustomtheorem}[2]{%
  \newenvironment{#1}[1]
  {%
   \renewcommand\customgenericname{#2}%
   \renewcommand\theinnercustomgeneric{##1}%
   \innercustomgeneric
  }
  {\endinnercustomgeneric}
}
\author{%
  Shiqi Wang\textsuperscript{*,1} \quad Huan Zhang\textsuperscript{*,2} \quad Kaidi Xu\textsuperscript{*,3} \vspace{3pt}\\
  \textbf{Xue Lin\textsuperscript{3} \quad Suman Jana\textsuperscript{1} \quad Cho-Jui Hsieh\textsuperscript{4} \quad Zico Kolter\textsuperscript{2}} \vspace{3pt}\\
  \textsuperscript{1}Columbia University \quad \textsuperscript{2}CMU \quad \textsuperscript{3}Northeastern University \quad \textsuperscript{4}UCLA \vspace{3pt}\\
  \texttt{\quad sw3215@columbia.edu \enskip huan@huan-zhang.com \enskip xu.kaid@northeastern.edu} \\ 
  \texttt{xue.lin@northeastern.edu \ suman@cs.columbia.edu \ chohsieh@cs.ucla.edu} \\
  \texttt{zkolter@cs.cmu.edu} \vspace{5pt}\\
  * \textit{Equal Contribution}\\
}
\DeclareDocumentCommand\W{ g g }{%
        \IfNoValueTF {#1} {\mathbf{W}} {
            \IfNoValueTF {#2} {\mathbf{W}^{(#1)}}{\mathbf{W}^{(#1)}_{#2}}
        }
}
\DeclareDocumentCommand\bias{ g g }{%
        \IfNoValueTF {#1} {\mathbf{b}} {
            \IfNoValueTF {#2} {\mathbf{b}^{(#1)}}{\mathbf{b}^{(#1)}_{#2}}
        }
}
\DeclareDocumentCommand\betavar{ g g }{%
        \IfNoValueTF {#1} {\bm{\beta}} {
            \IfNoValueTF {#2} {{\bm{\beta}^{(#1)}}{}}{\bm{\beta}^{(#1)}_{#2}}
        }
}
\DeclareDocumentCommand\xivar{ g g }{%
        \IfNoValueTF {#1} {\bm{\xi}} {
            \IfNoValueTF {#2} {{\bm{\xi}^{(#1)}}{}}{\bm{\xi}^{(#1)}_{#2}}
        }
}
\DeclareDocumentCommand\xivarn{ g g }{%
        \IfNoValueTF {#1} {\bm{\xi^-}} {
            \IfNoValueTF {#2} {\bm{\xi^-}^{+(#1)}}{\bm{\xi^-}^{+(#1)}_{#2}}
        }
}
\DeclareDocumentCommand\xivarp{ g g }{%
        \IfNoValueTF {#1} {\bm{\xi^+}} {
            \IfNoValueTF {#2} {\bm{\xi^+}^{+(#1)}}{\bm{\xi^+}^{+(#1)}_{#2}}
        }
}
\DeclareDocumentCommand\nuvar{ g g }{%
        \IfNoValueTF {#1} {\bm{\nu}} {
            \IfNoValueTF {#2} {{\bm{\nu}^{(#1)}}{}}{\bm{\nu}^{(#1)}_{#2}{}}
        }
}
\DeclareDocumentCommand\hnuvar{ g g }{%
        \IfNoValueTF {#1} {\bm{\hat{\nu}}} {
            \IfNoValueTF {#2} {{\bm{\hat{\nu}}^{(#1)}}{}}{\bm{\hat{\nu}}^{(#1)}_{#2}{}}
        }
}
\DeclareDocumentCommand\muvar{ g g }{%
        \IfNoValueTF {#1} {\bm{\mu}} {
            \IfNoValueTF {#2} {{\bm{\mu}^{(#1)}}{}}{\bm{\mu}^{(#1)}_{#2}}
        }
}
\DeclareDocumentCommand\gammavar{ g g }{%
        \IfNoValueTF {#1} {\bm{\gamma}} {
            \IfNoValueTF {#2} {{\bm{\gamma}^{(#1)}}{}}{\bm{\gamma}^{(#1)}_{#2}}
        }
}
\DeclareDocumentCommand\lambdavar{ g g }{%
        \IfNoValueTF {#1} {\bm{\lambda}} {
            \IfNoValueTF {#2} {{\bm{\lambda}^{(#1)}}{}}{\bm{\lambda}^{(#1)}_{#2}}
        }
}
\DeclareDocumentCommand\tbetavar{ g g }{%
        \IfNoValueTF {#1} {{\bm{\tilde{\beta}}}} {
            \IfNoValueTF {#2} {{{\bm{\tilde{\beta}}}^{(#1)}}{}}{{{\bm{\tilde{\beta}}}^{(#1)}_{#2}}}
        }
}
\DeclareDocumentCommand\alphavar{ g g }{%
        \IfNoValueTF {#1} {\bm{\alpha}} {
            \IfNoValueTF {#2} {{\bm{\alpha}^{(#1)}}}{\bm{\alpha}^{(#1)}_{#2}}
        }
}
\DeclareDocumentCommand\D{ g g }{%
        \IfNoValueTF {#1} {\mathbf{D}} {
            \IfNoValueTF {#2} {\mathbf{D}^{(#1)}}{\mathbf{D}^{(#1)}_{#2}}
        }
}
\DeclareDocumentCommand\A{ g g }{%
        \IfNoValueTF {#1} {\mathbf{A}} {
            \IfNoValueTF {#2} {\mathbf{A}^{(#1)}}{\mathbf{A}^{(#1)}_{#2}}
        }
}
\DeclareDocumentCommand\AA{ g g }{
        \IfNoValueTF {#1} {\mathbf{\Omega}} {
            \IfNoValueTF {#2} {\mathbf{\Omega}(#1, #1)}{\mathbf{\Omega}(#1, #2)}
        }
}
\DeclareDocumentCommand\S{ g g }{%
        \IfNoValueTF {#1} {\mathbf{S}} {
            \IfNoValueTF {#2} {\mathbf{S}^{(#1)}}{\mathbf{S}^{(#1)}_{#2}}
        }
}
\DeclareDocumentCommand\K{ g g }{%
        \IfNoValueTF {#1} {\mathbf{K}} {
            \IfNoValueTF {#2} {\mathbf{K}^{(#1)}}{\mathbf{K}^{(#1)}_{#2}}
        }
}
\DeclareDocumentCommand\B{ g g }{%
        \IfNoValueTF {#1} {\mathbf{B}} {
            \IfNoValueTF {#2} {\mathbf{B}^{(#1)}}{\mathbf{B}^{(#1)}_{#2}}
        }
}
\DeclareDocumentCommand\lowerb{ g g }{%
        \IfNoValueTF {#1} {{\mathbf{\underline{b}}}} {
            \IfNoValueTF {#2} {{\mathbf{\underline{b}}}^{(#1)}}{{\mathbf{\underline{b}}}^{(#1)}_{#2}}
        }
}
\DeclareDocumentCommand\z{ g g }{%
        \IfNoValueTF {#1} {z} {
            \IfNoValueTF {#2} {z^{(#1)}}{z^{(#1)}_{#2}}
        }
}
\DeclareDocumentCommand\hz{ g g }{%
        \IfNoValueTF {#1} {\hat{z}} {
            \IfNoValueTF {#2} {\hat{z}^{(#1)}}{\hat{z}^{(#1)}_{#2}}
        }
}
\DeclareDocumentCommand\bu{ g g }{%
        \IfNoValueTF {#1} {\mathbf{u}} {
            \IfNoValueTF {#2} {\mathbf{u}^{(#1)}}{\mathbf{u}^{(#1)}_{#2}}
        }
}
\DeclareDocumentCommand\bl{ g g }{%
        \IfNoValueTF {#1} {\mathbf{l}} {
            \IfNoValueTF {#2} {\mathbf{l}^{(#1)}}{\mathbf{l}^{(#1)}_{#2}}
        }
}
\DeclareDocumentCommand\aaa{ g }{%
        \IfNoValueTF {#1} {\bm{a}} {
            {\bm{a}^{({#1})}}
        }
}
\DeclareDocumentCommand\haaa{ g }{%
        \IfNoValueTF {#1} {\bm{\hat{a}}} {
            {\bm{\hat{a}}^{({#1})}}
        }
}
\DeclareDocumentCommand\bbb{ g g }{%
        \IfNoValueTF {#1} {\mathbf{P}} {
            \IfNoValueTF {#2} {{\mathbf{P}_{#1}}}{{\mathbf{P}_{#1}^{({#2})}}}
        }
}
\DeclareDocumentCommand\hbbb{ g g }{%
        \IfNoValueTF {#1} {\mathbf{\hat{P}}} {
            \IfNoValueTF {#2} {{\mathbf{\hat{P}}_{#1}}}{{\mathbf{\hat{P}}_{#1}^{({#2})}}}
        }
}
\DeclareDocumentCommand\ccc{ g g }{%
        \IfNoValueTF {#1} {\mathbf{q}} {
            \IfNoValueTF {#2} {{\mathbf{q}_{#1}}}{{\mathbf{q}_{#1}^{(#2)}}{}}
        }
}
\DeclareDocumentCommand\constc{ g }{%
        \IfNoValueTF {#1} {c} {
            {c^{({#1})}}
        }
}
\DeclareDocumentCommand\setz{ g g }{%
        \IfNoValueTF {#1} {\mathcal{Z}} {
            \IfNoValueTF {#2} {\mathcal{Z}^{(#1)}}{\mathcal{Z}^{(#1)}_{#2}}
        }
}
\DeclareDocumentCommand\setzp{ g g }{%
        \IfNoValueTF {#1} {\mathcal{Z^+}} {
            \IfNoValueTF {#2} {\mathcal{Z}^{+(#1)}}{\mathcal{Z}^{+(#1)}_{#2}}
        }
}
\DeclareDocumentCommand\setzn{ g g }{%
        \IfNoValueTF {#1} {\mathcal{Z^-}} {
            \IfNoValueTF {#2} {\mathcal{Z}^{-(#1)}}{\mathcal{Z}^{-(#1)}_{#2}}
        }
}
\DeclareDocumentCommand\tsetz{ g g }{%
        \IfNoValueTF {#1} {\tilde{\mathcal{Z}}} {
            \IfNoValueTF {#2} {\tilde{\mathcal{Z}}^{(#1)}}{\tilde{\mathcal{Z}}^{(#1)}_{#2}}
        }
}
\DeclareDocumentCommand\tz{ g g }{%
        \IfNoValueTF {#1} {\tilde{z}} {
            \IfNoValueTF {#2} {\tilde{z}^{(#1)}}{\tilde{z}^{(#1)}_{#2}}
        }
}
\DeclareDocumentCommand\f{ g g }{%
        \IfNoValueTF {#1} {f} {
            \IfNoValueTF {#2} {f^{(#1)}}{f^{(#1)}_{#2}}
        }
}
\DeclareDocumentCommand\lf{ g g }{%
        \IfNoValueTF {#1} {\underline{f}} {
            \IfNoValueTF {#2} {\underline{f}^{(#1)}}{\underline{f}^{(#1)}_{#2}}
        }
}
\newcommand{\ReLU}{\mathrm{ReLU}}
\def\eqref#1{Eq.~\ref{#1}}
\def\1{\bm{1}}
\DeclareMathAlphabet{\mathsfit}{\encodingdefault}{\sfdefault}{m}{sl}
\SetMathAlphabet{\mathsfit}{bold}{\encodingdefault}{\sfdefault}{bx}{n}
\def\gC{{\mathcal{C}}}
\def\gZ{{\mathcal{Z}}}
\def\sP{{\mathbb{P}}}
\newcommand{\R}{\mathbb{R}}
\DeclareMathOperator{\Tr}{Tr}
\title{Beta-CROWN: Efficient Bound Propagation with Per-neuron Split Constraints for Neural Network Robustness Verification}
\begin{document}

\maketitle

\begin{abstract}

Bound propagation based incomplete neural network verifiers such as CROWN are very efficient and can significantly accelerate branch-and-bound (BaB) based complete verification of neural networks. However, bound propagation cannot fully handle the neuron split constraints introduced by BaB commonly handled by expensive linear programming (LP) solvers, leading to loose bounds and hurting verification efficiency. 
In this work, we develop $\beta$-CROWN, a new bound propagation based method that can fully encode neuron splits via optimizable parameters $\betavar$ constructed from either primal or dual space.
When jointly optimized in intermediate layers, $\beta$-CROWN generally produces better bounds than typical LP verifiers with neuron split constraints, while being as efficient and parallelizable as CROWN on GPUs. Applied to complete robustness verification benchmarks, $\beta$-CROWN with BaB is up to three orders of magnitude faster than LP-based BaB methods, and is notably faster than all existing approaches while producing lower timeout rates. By terminating BaB early, our method can also be used for efficient incomplete verification.  We consistently achieve higher verified accuracy in many settings compared to powerful incomplete verifiers, including those based on convex barrier breaking techniques. Compared to the typically tightest but very costly semidefinite programming (SDP) based incomplete verifiers, we obtain higher verified accuracy with three orders of magnitudes less verification time. Our algorithm empowered the $\alpha,\!\beta$-CROWN (\texttt{alpha-beta-CROWN}) verifier, the winning tool in VNN-COMP 2021. Our code is available at \url{http://PaperCode.cc/BetaCROWN}.

\end{abstract}

\section{Introduction}

As neural networks (NNs) are being deployed in safety-critical applications, it becomes increasingly important to formally verify their behaviors under potentially malicious inputs.
Broadly speaking, the neural network verification problem involves proving certain desired relationships between inputs and outputs (often referred to as \emph{specifications}), such as safety or robustness guarantees, for all inputs inside some domain. Canonically, the problem can be cast as finding the global minima of some functions on the network's outputs (e.g., the difference between the predictions of the true label and another target label), within a bounded input set as constraints.  This is a challenging problem due to the non-convexity and high dimensionality of neural networks.

We first focus on \emph{complete} verification: the verifier should give a definite ``yes/no'' answer given sufficient time. Many complete verifiers rely on the branch and bound (BaB) method~\citep{bunel2018unified} involving (1) branching by recursively splitting the original verification problem into subdomains (e.g., splitting a ReLU neuron into positive/negative linear regions by adding split constraints) and (2) bounding each subdomain with specialized incomplete verifiers. Traditional BaB-based verifiers use expensive linear programming (LP) solvers~\citep{ehlers2017formal,lu2019neural,bunel2020branch} as incomplete verifiers which can fully encode neuron split constraints. 
Meanwhile, a recent verifier, Fast-and-Complete~\citep{xu2020fast}, demonstrates that cheap incomplete verifiers can significantly accelerate complete verification on GPUs over LP-based ones thanks to their efficiency. Many cheap incomplete verifiers are based on \emph{bound propagation methods}~\citep{zhang2018efficient,wong2018provable,wang2018formal,dvijotham2018dual,gehr2018ai2,singh2019abstract,xu2020automatic}, i.e., maintaining and propagating tractable and sound bounds through networks, and CROWN~\citep{zhang2018efficient} is a representative which propagates a linear or quadratic bound.

However, unlike LP based verifiers, existing bound propagation methods lack the power to handle neuron split constraints introduced by BaB. For instance, given inputs $x,y\in[-1,1]$, they can bound a ReLU's input $x+y$ as $[-2,2]$ but they have no means to consider neuron split constraints such as $x-y\geq0$ introduced by splitting another ReLU to the positive linear region. Such a problem causes looser bounds and unnecessary branching, hurting the verification efficiency. Even worse, without considering these split constraints, bound propagation methods cannot detect many infeasible subdomains in BaB~\citep{xu2020fast}, leading to incompleteness unless costly checking is performed.

In our work, we develop a new, fast bound propagation based incomplete verifier, $\beta$-CROWN. It solves an optimization problem equivalent to the expensive LP based methods with neuron split constraints while still enjoying the efficiency of bound propagation methods.
$\beta$-CROWN contains optimizable parameters $\betavar$ which come from propagation of Lagrangian multipliers, and any valid settings of these parameters yield sound bounds for verification. These parameters are optimized using a few steps of (super)gradient ascent to achieve bounds as tight as possible. Optimizing $\betavar$ can also eliminate many infeasible subdomains and avoid further useless branching.
Furthermore, we can jointly optimize intermediate layer bounds similar to~\citep{xu2020automatic} but also with the additional parameters $\betavar$, allowing $\beta$-CROWN to tighten relaxations and outperform typical LP verifiers with fixed intermediate layer bounds.
Unlike traditional LP-based BaB methods, $\beta$-CROWN can be efficiently implemented with an automatic differentiation framework on GPUs to fully exploit the power of modern accelerators. 
The combination of $\beta$-CROWN and BaB ($\beta$-CROWN BaB) produces a complete verifier with GPU acceleration, reducing the verification time of traditional LP based BaB verifiers~\citep{bunel2018unified} by up to \emph{three orders of magnitudes} on a commonly used benchmark suite on CIFAR-10~\citep{Bunel2020,DePalma2021}. 
Compared to all state-of-the-art GPU-based complete verifiers~\citep{bunel2020branch,xu2020fast,DePalma2021,lu2019neural,Bunel2020,de2021improved}, our approach is noticeably faster with lower timeout rates. Our algorithm empowered the tool $\alpha,\!\beta$-CROWN (\texttt{alpha-beta-CROWN}), which won the 2nd International Verification of Neural Networks Competition~\citep{bak2021second} (VNN-COMP 2021) with the highest total score and verified the most number of problem instances in 8 benchmarks.

Finally, by terminating our complete verifier $\beta$-CROWN BaB early, our approach can also function as a more accurate incomplete verifier by returning an incomplete but sound lower bound of all subdomains explored so far. We achieve better verified accuracy on a few benchmarking models over powerful incomplete verifiers including those based on tight linear relaxations~\citep{singh2019beyond,tjandraatmadja2020convex,muller2021precise} and semidefinite relaxations~\citep{dathathri2020enabling}. Compared to the typically tightest but very costly incomplete verifier SDP-FO~\citep{dathathri2020enabling} based on the semidefinite programming (SDP) relaxations~\citep{raghunathan2018semidefinite,dvijotham2020efficient}, our method obtains consistently higher verified accuracy while reducing verification time by three orders of magnitudes.

\vspace{-1mm}
\section{Background}
\label{ref:verification_problem}

\vspace{-5pt}
\subsection{The neural network verification problem and its LP relaxation}
\label{sec:nn_verification_problem}
We define the input of a neural network as $x~\in~\R^{d_0}$, and define the weights and biases of an $L$-layer neural network as $\W{i} \in \R^{d_i \times d_{i-1}}$ and $\bias{i} \in \R^{d_i}$ ($i \in \{1, \cdots, L\}$) respectively. For simplicity we assume that $d_L=1$ so $\W{L}$ is a vector and $\bias{L}$ is a scalar. The neural network function $f: \R^{d_0}\rightarrow\R$ is defined as $f(x) = \z{L}(x)$, where $\z{i}(x)=\W{i} \hz{i-1}(x) + \bias{i}$, $\hz{i}(x) = \sigma(\z{i}(x))$ and $\hz{0}(x) = x$. $\sigma$ is the activation function and we use $\ReLU$ throughout this paper. 
When the context is clear, we omit $\cdot(x)$ and use $\z{i}{j}$ and $\hz{i}{j}$ to represent the \emph{pre-activation} and \emph{post-activation} values of the $j$-th neuron in the $i$-th layer. Neural network verification seeks the solution of the optimization problem in~\eqref{eq:nn_verify}:
\begin{equation}
 \min f(x) := \z{L}(x)\quad
    \text{s.t. } \z{i} = \W{i}\hz{i-1} + \bias{i}, \hz{i} = \sigma(\z{i}), x \in \gC, i \in \{1, \cdots, L-1\} 
\label{eq:nn_verify}
\end{equation}
The set $\gC$ defines the allowed input region and our aim is to find the minimum of $f(x)$ for $x \in \gC$, and throughout this paper we consider $\gC$ as an $\ell_\infty$ ball around a data example $x_0$: $\gC = \{ x \ | \ \| x - x_0 \|_\infty \leq \epsilon \}$ but other $\ell_p$ norms can also be supported. In practical settings, we typically have ``specifications'' to verify, which are (usually linear) functions of neural network outputs describing the desired behavior of neural networks. For example, to guarantee robustness we typically investigate the margin between logits. Because the specification can also be seen as an output layer of NN and merged into $f(x)$ under verification, we do not discuss it in detail in this work. We consider the canonical specification $f(x) > 0$: if we can prove that $f(x) > 0, \ \forall x \in \gC$, we say $f(x)$ is verified.

When $\gC$ is a convex set, \eqref{eq:nn_verify} is still a non-convex problem because the constraints $\hz{i} = \sigma(\z{i})$ are non-convex. 
Given unlimited time, \emph{complete} verifiers can solve~\eqref{eq:nn_verify} exactly: $\f^* =\min f(x), \ \forall x \in \gC$, so we can always conclude if the specification holds or not for any problem instance. On the other hand, \emph{incomplete} verifiers usually relax the non-convexity of neural networks to obtain a tractable lower bound of the solution $\lf \leq f^*$. If $\lf \geq 0$, then $f^* > 0$ so $f(x)$ can be verified; when $\lf < 0$, we are not able to infer the sign of $f^*$ so cannot conclude if the specification holds or not. 

A commonly used incomplete verification technique is to relax non-convex ReLU constraints with linear constraints and turn the verification problem into a linear programming (LP) problem, which can then be solved with linear solvers. We refer to it as the ``LP verifier'' in this paper.
Specifically, given $\ReLU(\z{i}{j}):=\max(0, \z{i}{j})$ and its intermediate layer bounds $\bl{i}{j} \leq \z{i}{j} \leq \bu{i}{j}$, each ReLU can be categorized into three cases: (1) if $\bl{i}{j} \geq 0$ (ReLU in linear region) then $\hz{i}{j} = \z{i}{j}$; (2) if $\bu{i}{j} \leq 0$ (ReLU in inactive region) then $\hz{i}{j} = 0$; (3) if $\bl{i}{j} \leq 0 \leq \bu{i}{j}$ (ReLU is \emph{unstable}) then three linear bounds are used: $\hz{i}{j} \geq 0$, $\hz{i}{j} \geq \z{i}{j}$, and $\hz{i}{j} \leq$ {\small$ \textstyle\frac{\bu{i}{j}}{\bu{i}{j} - \bl{i}{j}} \left ( \z{i}{j} - \bl{i}{j} \right )$}; they are often referred to as the ``triangle'' relaxation~\citep{ehlers2017formal,wong2018provable}. 
The intermediate layer bounds $\bl{i}$ and $\bu{i}$ are usually obtained from a cheaper bound propagation method (see next subsection). LP verifiers can provide relatively tight bounds but linear solvers are still expensive especially when the network is large. Also, unlike our $\beta$-CROWN, they have to use fixed intermediate bounds and cannot use the joint optimization of intermediate layer bounds (Section~\ref{sec:optimization_variable}) to tighten relaxation.

\vspace{-5pt}
\subsection{CROWN: efficient incomplete verification by propagating linear bounds}
\label{sec:background_crown} 
\vspace{-5pt}

Another cheaper way to give a lower bound for the objective in~\eqref{eq:nn_verify} is through sound bound propagation. CROWN~\citep{zhang2018efficient} is a representative method that propagates a linear bound of $f(x)$ w.r.t.\ every intermediate layer in a backward manner until reaching the input $x$. 
CROWN uses two linear constraints to relax unstable $\ReLU$ neurons: a linear upper bound {\small $\hz{i}{j} \leq \frac{\bu{i}{j}}{\bu{i}{j} - \bl{i}{j}} \left ( \z{i}{j} - \bl{i}{j} \right )$} and a linear lower bound {$\hz{i}{j} \geq \alphavar{i}{j} \z{i}{j}$ ($0 \leq \alphavar{i}{j} \leq 1$)}. We can then bound the output of a ReLU layer:
\begin{lemma}[ReLU relaxation in CROWN]
\label{lemma:relax_single}
Given $w, v \in \R^d, \bl \leq v \leq \bu$ (element-wise), we have 
\[
w^\top \ReLU(v) \geq w^\top \D v + b^\prime, 
\]
where $\D$ is a diagonal matrix containing free variables $0 \leq \alphavar_j \leq 1$ only when $\bu_j > 0 > \bl_j$ and $w_j \geq 0$, while its rest values as well as constant $b'$ are determined by $\bl, \bu, w$.
\end{lemma}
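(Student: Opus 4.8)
The plan is to reduce the vector inequality to a coordinate-wise one and then verify it by a short case analysis on each neuron. Since $\D$ is diagonal, the right-hand side separates as $w^\top \D v = \sum_{j} w_j [\D]_{jj} v_j$, while the left-hand side is $w^\top \ReLU(v) = \sum_j w_j \ReLU(v_j)$. Writing $b^\prime = \sum_j b^\prime_j$, it therefore suffices to exhibit, for each coordinate $j$, a scalar $[\D]_{jj}$ and a constant $b^\prime_j$ (depending only on $w_j, \bl_j, \bu_j$, except where a free $\alphavar_j$ is permitted) such that $w_j \ReLU(v_j) \geq w_j [\D]_{jj} v_j + b^\prime_j$ holds for all $v_j \in [\bl_j, \bu_j]$; summing these scalar inequalities yields the claim.

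First I would dispose of the two stable cases, where the bound is in fact exact. If $\bl_j \geq 0$ then $\ReLU(v_j) = v_j$ throughout the interval, so I take $[\D]_{jj} = 1$ and $b^\prime_j = 0$; if $\bu_j \leq 0$ then $\ReLU(v_j) = 0$, so I take $[\D]_{jj} = 0$ and $b^\prime_j = 0$. In both cases equality holds and no free parameter is introduced, as required.

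The interesting case is an unstable neuron $\bl_j < 0 < \bu_j$, and here the sign of $w_j$ dictates which one-sided relaxation keeps the inequality pointing the right way after multiplication. When $w_j \geq 0$ I use the lower relaxation $\ReLU(v_j) \geq \alphavar_j v_j$ for any $\alphavar_j \in [0,1]$; its validity follows by checking $v_j \geq 0$ and $v_j < 0$ separately (using $\alphavar_j \leq 1$ and $\alphavar_j \geq 0$ respectively), and multiplying through by the nonnegative $w_j$ preserves the direction, giving a free $[\D]_{jj} = \alphavar_j$ and $b^\prime_j = 0$. When $w_j < 0$ I instead use the triangle upper relaxation $\ReLU(v_j) \leq \frac{\bu_j}{\bu_j - \bl_j}\left(v_j - \bl_j\right)$, which is the chord of the convex ReLU through $(\bl_j,0)$ and $(\bu_j,\bu_j)$ and hence lies above it on the whole interval; multiplying by the negative $w_j$ flips it into a valid lower bound, yielding the fixed slope $[\D]_{jj} = \frac{\bu_j}{\bu_j - \bl_j}$ and the fixed offset $b^\prime_j = -w_j \frac{\bu_j}{\bu_j - \bl_j}\bl_j$, with no free parameter. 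Collecting all four cases shows that a free $\alphavar_j$ appears exactly when $\bu_j > 0 > \bl_j$ and $w_j \geq 0$, matching the statement.

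The only real subtlety — which I would flag as the thing to get right rather than a genuine obstacle — is the coupling between the sign of $w_j$ and the direction of the inequality: lower-bounding $w^\top \ReLU(v)$ requires lower-bounding $\ReLU(v_j)$ on coordinates with $w_j \geq 0$ but upper-bounding it on coordinates with $w_j < 0$, and one must confirm that each chosen affine relaxation is valid across the entire interval $[\bl_j,\bu_j]$ (convexity of ReLU makes the chord a uniform upper bound, and the two-sign check makes $\alphavar_j v_j$ a uniform lower bound). Everything else is bookkeeping of the per-neuron slopes and offsets into $\D$ and $b^\prime$.
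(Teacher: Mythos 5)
Your proof is correct and takes essentially the same approach as the paper's: a coordinate-wise decomposition, exact handling of the two stable cases, and for unstable neurons choosing the lower relaxation $\alphavar_j v_j$ when $w_j \geq 0$ and the chord upper bound $\frac{\bu_j}{\bu_j-\bl_j}(v_j - \bl_j)$ when $w_j < 0$, with the sign of $w_j$ flipping the inequality as needed. The slopes and offsets you collect into $\D$ and $b'$ agree exactly with the paper's definitions of $\D_{j,j}$ and $\lowerb_j$.
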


Detailed forms of each term are listed in Appendix~\ref{ref:app_proof}.
Lemma~\ref{lemma:relax_single} can be repeatedly applied, resulting in an efficient back-substitution procedure to derive a linear lower bound of NN output w.r.t.\ $x$:

\begin{lemma}[CROWN bound~\citep{zhang2018efficient}]
Given an $L$-layer ReLU NN {$f(x): \R^{d_0} \rightarrow \R$ with weights $\W{i}$}, biases $\bias{i}$, pre-ReLU bounds {$\bl{i} \leq \z{i} \leq \bu{i}$ ($1 \leq i \leq L$}) and input constraint {$x \in \gC$}. We have
\[
\min_{x \in \gC} f(x) \geq \min_{x \in \gC} \aaa_{\mathrm{\scriptscriptstyle CROWN}}^\top x + c_{\mathrm{\scriptscriptstyle CROWN}}
\]
where $\aaa_{\mathrm{\scriptscriptstyle CROWN}}$ and $c_{\mathrm{\scriptscriptstyle CROWN}}$ can be computed using $\W{i}, \bias{i}, \bl{i}, \bu{i}$ in polynomial time.
\end{lemma}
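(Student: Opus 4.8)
The plan is to prove the bound by backward induction over the layers, peeling off one ReLU layer at a time with Lemma~\ref{lemma:relax_single} while back-substituting the affine layers exactly. The key structural observation is that the affine relations $\z{i} = \W{i}\hz{i-1} + \bias{i}$ can be substituted with \emph{equality}, so only the nonconvex layers $\hz{i} = \ReLU(\z{i})$ require relaxation, and each such relaxation is precisely what Lemma~\ref{lemma:relax_single} supplies. Thus the whole argument reduces to composing a chain of pointwise lower bounds, one per layer, and finally minimizing over $\gC$.

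First I would set up the inductive hypothesis: for each $i$ from $L-1$ down to $0$ there are a row vector $\A{i}$ and a scalar $c^{(i)}$, computable from $\W{j},\bias{j},\bl{j},\bu{j}$ for $j>i$, such that $f(x) \geq \A{i}\hz{i} + c^{(i)}$ for all $x \in \gC$. The base case $i = L-1$ holds with equality, since $f(x) = \z{L} = \W{L}\hz{L-1} + \bias{L}$, so I take $\A{L-1} = \W{L}$ and $c^{(L-1)} = \bias{L}$.

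For the inductive step, suppose $f(x) \geq \A{i}\hz{i} + c^{(i)}$. Using $\hz{i} = \ReLU(\z{i})$ and the given intermediate bounds $\bl{i} \leq \z{i} \leq \bu{i}$, I apply Lemma~\ref{lemma:relax_single} with $w = (\A{i})^\top$ and $v = \z{i}$ to obtain a diagonal matrix $\D{i}$ (whose free entries are the $\alphavar$ variables, present only on unstable neurons with nonnegative coefficient) and a constant $b'$ determined by $\A{i},\bl{i},\bu{i}$, giving $\A{i}\ReLU(\z{i}) \geq \A{i}\D{i}\z{i} + b'$. Substituting the exact affine relation $\z{i} = \W{i}\hz{i-1} + \bias{i}$ then yields $f(x) \geq (\A{i}\D{i}\W{i})\hz{i-1} + (\A{i}\D{i}\bias{i} + b' + c^{(i)})$, so the hypothesis propagates with $\A{i-1} = \A{i}\D{i}\W{i}$ and $c^{(i-1)} = \A{i}\D{i}\bias{i} + b' + c^{(i)}$. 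Unrolling down to $\hz{0} = x$ produces $f(x) \geq \A{0}x + c^{(0)}$ for all $x \in \gC$; setting $\aaa_{\mathrm{\scriptscriptstyle CROWN}} = (\A{0})^\top$ and $c_{\mathrm{\scriptscriptstyle CROWN}} = c^{(0)}$ and taking $\min_{x\in\gC}$ of both sides (a pointwise inequality is preserved under $\min$) delivers the claim. Since each step is one matrix multiplication, the total cost is polynomial in the widths and depth.

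The one place that needs care, and where I expect the main subtlety to lie, is the sign bookkeeping in the back-substitution: whether a given coordinate of $\A{i}\ReLU(\z{i})$ is correctly lower-bounded by the upper or the lower linear relaxation of ReLU depends on the sign of the corresponding entry of the \emph{accumulated} coefficient $\A{i}$, not of the original weight $\W{i}$. This is exactly the case analysis already packaged inside Lemma~\ref{lemma:relax_single}, so once the lemma is invoked with the current coefficient $\A{i}$ playing the role of $w$, the relaxation direction is chosen correctly coordinate by coordinate and the sum remains a valid lower bound. The remaining steps are routine, because affine substitution is exact and lower-bounding a lower bound still yields a lower bound, so the chain composes without loss.
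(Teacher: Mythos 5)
Your proof is correct and follows essentially the same route as the paper: a backward induction that applies Lemma~\ref{lemma:relax_single} at each ReLU layer (with $w$ set to the accumulated coefficient $\A{i}$, which handles the sign bookkeeping) and substitutes the affine layers exactly, which is precisely the back-substitution procedure the paper describes in Section~2.2 and carries out in full generality (with the extra $\betavar$ terms) in its Appendix~A.1 proof of Theorem~\ref{thm:beta_CROWN}; your argument is that proof specialized to $\betavar = 0$.
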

When $\gC$ is an $\ell_p$ norm ball, minimization over the linear function can be easily solved using H\"older's inequality. The main benefit of CROWN is its efficiency: CROWN can be efficiently implemented on machine learning accelerators such as GPUs~\cite{xu2020automatic} and TPUs~\citep{zhang2019towards}, and it can be a few magnitudes faster than an LP verifier which is hard to parallelize on GPUs. CROWN was generalized to general architectures~\cite{xu2020automatic,shi2020robustness} while we only demonstrate it for feedforward ReLU networks for simplicity. Additionally, \citet{xu2020fast} showed that it is possible to optimize the slope of the lower bound, $\alphavar$, using gradient ascent, to further tighten the bound (sometimes referred to as $\alpha$-CROWN).

\vspace{-3pt}
\subsection{Branch and Bound and Neuron Split Constraints}
\label{ref:branch_and_bound}
\vspace{-3pt}
Branch and bound (BaB) method is widely adopted in complete verifiers~\citep{bunel2018unified}: we divide the domain of the verification problem $\gC$ into two subdomains $\gC_1 = \{ x \in \gC, \z{i}{j} \geq 0 \}$ and $\gC_2 = \{ x \in \gC, \z{i}{j} < 0 \}$ where $\z{i}{j}$ is an unstable ReLU neuron in $\gC$ but now becomes linear for each subdomain. Incomplete verifiers can then estimate the lower bound of each subdomain with relaxations.
If the lower bound produced for subdomain $\gC_i$ (denoted by $\lf_{\gC_i}$) is greater than 0, $\gC_i$ is verified; otherwise, we further branch over domain $\gC_i$ by splitting another unstable ReLU neuron. The process terminates when all subdomains are verified. The completeness is guaranteed when all unstable ReLU neurons are split.

\paragraph{LP verifier with neuron split constraints.} A popular incomplete verifier used in BaB is the LP verifier. Essentially, when we split the $j$-th $\ReLU$ in layer $i$, we can simply add $\z{i}{j} \geq 0$ or $\z{i}{j} < 0$ to~\eqref{eq:nn_verify} and get a linearly relaxed lower bound to each subdomain.
We denote the $\setzp{i}$ and $\setzn{i}$ as the set of neuron indices with positive and negative split constraints in layer $i$. 
We define the split constraints at layer $i$ as $\setz{i} := \{ \z{i} \ | \ \z{i}{j_1} \geq 0, \z{i}{j_2} < 0, \forall j_1 \in \setzp{i}, \forall j_2 \in \setzn{i} \}$. We denote the vector of all pre-ReLU neurons as $z$, and we define a set $\setz$ to represent the split constraints on $z$: $\setz = \setz{1} \cap \setz{2} \cap \cdots \cap \setz{L-1}$. For convenience, we also use the shorthand $\tsetz{i} := \setz{1} \cap \cdots \cap \setz{i}$ and $\tz{i} := \{ \z{1}, \z{2}, \cdots, \z{i} \}$. LP verifiers can easily handle these neuron split constraints but are more expensive than bound propagation methods like CROWN and cannot be accelerated on GPUs.

\paragraph{Branching strategy.} Branching strategies (selecting which ReLU neuron to split) are generally agnostic to the incomplete verifier used in BaB but do affect the overall BaB performance. {BaBSR}~\citep{bunel2020branch} is a widely used strategy in complete verifiers, which is based on an fast estimates on objective improvements after splitting each neuron. The neuron with highest estimated improvement is selected for branching. Recently, Filtered Smart Branching ({FSB})~\citep{de2021improved} improves {BaBSR} by mimicking strong branching - 
it utilizes bound propagation methods to evaluate the best a few candidates proposed by BaBSR and chooses the one with largest improvement. Graph neural network (GNN) based branching was also proposed~\citep{lu2019neural}. Our $\beta$-CROWN BaB is a general complete verification framework fit for any potential branching strategy, and we evaluate both BaBSR and FSB in experiments.
\section{$\beta$-CROWN for Complete and Incomplete Verification}
\vspace{-5pt}
\label{sec:algorithm}

In this section, we first give intuitions on how $\beta$-CROWN handles neuron split constraints without costly LP solvers. Then we formally state the main theorem of $\beta$-CROWN from both primal and dual spaces, and discuss how to tighten the bounds using free parameters $\alphavar$, $\betavar$. Lastly, we propose $\beta$-CROWN BaB, a complete verifier that is also a strong incomplete verifier when stopped early.

\vspace{-5pt}
\subsection{$\beta$-CROWN: Linear Bound Propagation with Neuron Split Constraints}
\label{sec:beta_crown_primal}
\vspace{-5pt}

The NN verification problem under neuron split constraints can be seen as an optimization problem: %
\begin{equation}
\min_{x \in \gC, z \in \gZ} f(x).
\label{eq:constrained_obj}
\end{equation}
Bound propagation methods like CROWN can give a relatively tight lower bound for $\min_{x \in \gC} f(x)$ but they \emph{cannot handle the neuron split constraints} $z \in \gZ$. Before we present our main theorem, we first show the intuition on how to apply split constraints to the bound propagation process.

To encode the neuron splits, we first define diagonal matrix $\S{i}~\in~\R^{d_i \times d_i}$ in~\eqref{eq:S} where $i \in [1, \cdots L-1], j \in [1, \cdots, d_i]$ are indices of layers and neurons, respectively:
\begin{equation}
\S{i}{j,j} = -1 (\text{if split }\z{i}{j} \geq 0);\quad\S{i}{j,j} = +1 (\text{if split }\z{i}{j} < 0);\quad\S{i}{j,j} = 0 (\text{if no split }\z{i}{j})
\label{eq:S}
\end{equation}
We start from the last layer and derive linear bounds for each intermediate layer $\z{i}$ and $\hz{i}$ with both constraints $x \in \gC$ and $\z \in \setz$. We also assume that pre-ReLU bounds $\bl{i} \leq \z{i} \leq \bu{i}$ for each layer $i$ are available (see discussions in Sec.~\ref{sec:optimization_variable} on these intermediate layer bounds). We initially have: 
\begin{equation}
\begin{split}
\min_{\substack{x \in \gC, z \in \setz}} f(x) = \min_{\substack{x \in \gC, z \in \setz}} \W{L} \hz{L-1} + \bias{L}.
\end{split}
\label{eq:layer_L_1}
\end{equation}
Since $\hz{L-1}=\ReLU(\z{L-1})$, we can apply Lemma~\ref{lemma:relax_single} to relax the ReLU neuron at layer $L-1$, and obtain a linear lower bound for $f(x)$ w.r.t.\ $\z{L-1}$ (we omit all constant terms to avoid clutter): 
\[
\begin{split}\small
\min_{\substack{x \in \gC, z \in \setz}} f(x) & \geq \min_{\substack{x \in \gC, z \in \setz}} \W{L} \D{L-1} \z{L-1} + \text{const}.
\end{split}
\]
To enforce the split neurons at layer $L-1$, we use a Lagrange function with $\betavar{L-1}^\top \S{L-1}$ multiplied on $\z{L-1}$:
\begin{equation}\small
\begin{split}
\min_{\substack{x \in \gC, z \in \setz}} &f(x) \geq \min_{\substack{x \in \gC \\ \tz{L-2} \in \tsetz{L-2}}} \max_{\betavar{L-1} \geq 0} \W{L} \D{L-1} \z{L-1} +  {\betavar{L-1}}^\top \S{L-1} \z{L-1} + \text{const}  \\
&\geq \max_{\betavar{L-1} \geq 0} \min_{\substack{x \in \gC \\ \tz{L-2} \in \tsetz{L-2}}} \Big (\W{L} \D{L-1} + {\betavar{L-1}}^\top \S{L-1} \Big ) \z{L-1} + \text{const}
\label{eq:layer_activation}
\end{split}
\end{equation}
The first inequality is due to the definition of the Lagrange function: we remove the constraint $\z{L-1} \in \setz{L-1}$ and use a multiplier to replace this constraint. The second inequality is due to weak duality. Due to the design of $\S{L-1}$, neuron split $\z{L-1}{j} \geq 0$ has a negative multiplier $- \betavar{L-1}{j}$ and split $\z{L-1}{j} < 0$ has a positive multiplier $ \betavar{L-1}{j}$. Any $\betavar{L-1} \geq 0$  yields a lower bound for the constrained optimization problem. Then we substitute $\z{L-1}$ with $\W{L-1} \hz{L-2} + \bias{L-1}$ for next layer:
\begin{equation}\small
\min_{\substack{x \in \gC, z \in \setz}} f(x) \geq 
\max_{\betavar{L-1} \geq 0} \min_{\substack{x \in \gC \\ \tz{L-2} \in \tsetz{L-2}}} \Big ( \W{L} \D{L-1} + {\betavar{L-1}}^\top \S{L-1} \Big ) \W{L-1} \hz{L-2} + \text{const}
\label{eq:layer_L_2}
\end{equation}
We define a matrix $\A{i}$ to represent the linear relationship between $f(x)$ and $\hz{i}$, where $\A{L-1}=\W{L}$ according to~\eqref{eq:layer_L_1} and $\A{L-2}=(\A{L-1} \D{L-1} + {\betavar{L-1}}^\top \S{L-1}) \W{L-1}$ by~\eqref{eq:layer_L_2}. Considering 1-dimension output $f(x)$, $\A{i}$ has only 1 row. 
With $\A{L-2}$, \eqref{eq:layer_L_2} becomes:
\[\small
\min_{\substack{x \in \gC, z \in \setz}} f(x) \geq \max_{\betavar{L-1} \geq 0} \min_{\substack{x \in \gC \\ \tz{L-2} \in \tsetz{L-2}}}  \A{L-2} \hz{L-2} + \text{const}, 
\]
which is in a form similar to~\eqref{eq:layer_L_1} except for the outer maximization over $\betavar{L-1}$. This allows the back-substitution process (\eqref{eq:layer_L_1}, \eqref{eq:layer_activation}, and~\eqref{eq:layer_L_2}) to continue. In each step, we swap $\max$ and $\min$ as in~\eqref{eq:layer_activation}, so every maximization over $\betavar{i}$ is outside of $\min_{x \in \gC}$. Eventually, we have:
\[\small
\min_{\substack{x \in \gC, z \in \setz}} f(x) \geq \max_{\betavar \geq 0} \min_{x \in \gC} \A{0} x + \text{const},
\]
where $\betavar := \left [\betavar{1}^\top \ \betavar{2}^\top \ \cdots \ \betavar{L-1}^\top \right ]^\top$ concatenates all $\betavar{i}$ vectors.
Following the above idea, we present the main theorem in Theorem~\ref{thm:beta_CROWN} (proof is given in Appendix~\ref{ref:app_proof}). 

\begin{theorem}[$\beta$-CROWN bound] 
\label{thm:beta_CROWN}
Given an $L$-layer NN $f(x): \R^{d_0} \rightarrow \R$ with weights $\W{i}$, biases $\bias{i}$, pre-ReLU bounds $\bl{i} \leq \z{i} \leq \bu{i}$ ($1 \leq i \leq L$), input bounds $\gC$, split constraints $\setz$. We have:
\begin{equation}
    \min_{\substack{x \in \gC, z \in \setz}} f(x) \geq \max_{\betavar \geq 0} \min_{x \in \gC} (\aaa + \bbb \betavar)^\top x + \ccc^\top \betavar + \constc, 
\label{eq:beta_crown_linear}
\end{equation}
where $\aaa\in \R^{d_0}$,$\bbb \in \R^{d_0 \times (\sum_{i=1}^{L-1} {d_i})}$,$\ccc \in \R^{\sum_{i=1}^{L-1} d_i}$ and $c\in \R$ are functions of $\W{i}$, $\bias{i}$, $\bl{i}$, $\bu{i}$.
\end{theorem}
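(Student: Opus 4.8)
The plan is to promote the informal back-substitution sketched in~\eqref{eq:layer_L_1}, \eqref{eq:layer_activation}, and~\eqref{eq:layer_L_2} to a formal downward induction on the layer index $i$, maintaining the invariant
\begin{equation*}
\min_{\substack{x \in \gC, z \in \setz}} f(x) \ \geq\ \max_{\betavar{i+1},\dots,\betavar{L-1} \geq 0}\ \min_{\substack{x \in \gC \\ \tz{i} \in \tsetz{i}}} \A{i}\hz{i} + c_i,
\end{equation*}
in which both the single-row matrix $\A{i}$ and the scalar $c_i$ are \emph{affine} in the multipliers $\betavar{i+1},\dots,\betavar{L-1}$ introduced so far. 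The base case $i=L-1$ is precisely~\eqref{eq:layer_L_1}, with $\A{L-1}=\W{L}$ and $c_{L-1}=\bias{L}$, which contain no multipliers and are trivially affine. Before starting, I would fix a valid ReLU relaxation, i.e., fix the diagonal matrices $\D{i}$ (and, for unstable neurons, the free slopes $0 \leq \alphavar \leq 1$) as prescribed by Lemma~\ref{lemma:relax_single}; the coefficients $\aaa,\bbb,\ccc,\constc$ asserted by the theorem are then the ones this induction produces for that fixed relaxation.

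For the inductive step I would process layer $i$ in four moves mirroring~\eqref{eq:layer_activation} and~\eqref{eq:layer_L_2}. (i) Relax the ReLU via Lemma~\ref{lemma:relax_single} applied to $w=\A{i}$: since $\bl{i} \leq \z{i} \leq \bu{i}$ holds throughout the feasible region, $\A{i}\hz{i} = \A{i}\ReLU(\z{i}) \geq \A{i}\D{i}\z{i} + b'$. (ii) Encode the layer-$i$ split constraint $\z{i}\in\setz{i}$ as an inner maximization over $\betavar{i}\geq 0$ of the penalty ${\betavar{i}}^\top\S{i}\z{i}$; by the sign convention of $\S{i}$ in~\eqref{eq:S} this penalty is non-positive on $\setz{i}$ (so its maximizer is $0$) and unbounded off it, whence the constrained minimum equals the minimum of the penalized objective over the relaxed set $\tz{i-1}\in\tsetz{i-1}$. (iii) Exchange $\min$ and $\max_{\betavar{i}\geq 0}$ by weak duality, the sole genuine inequality introduced at this layer. (iv) Substitute $\z{i}=\W{i}\hz{i-1}+\bias{i}$, giving the recursions $\A{i-1}=(\A{i}\D{i}+{\betavar{i}}^\top\S{i})\W{i}$ and $c_{i-1}=c_i+(\A{i}\D{i}+{\betavar{i}}^\top\S{i})\bias{i}+b'$. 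The point needing care is the order of quantifiers: the new $\max_{\betavar{i}\geq 0}$ from (iii) sits directly inside the already-extracted $\max_{\betavar{i+1},\dots,\betavar{L-1}}$ and outside the shrunken $\min_{x,\tz{i-1}}$, so the two maximizations \emph{merge} into one joint maximum rather than forcing an invalid exchange of a $\min$ past an outer $\max$.

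To finish, I would verify that the recursion preserves affineness and read off the decomposition. Because $\A{i}$ is affine in $\betavar{i+1},\dots,\betavar{L-1}$ by hypothesis and $\D{i},\W{i}$ are fixed, $\A{i}\D{i}\W{i}$ is affine in those blocks while ${\betavar{i}}^\top\S{i}\W{i}$ is linear in the fresh, disjoint block $\betavar{i}$; hence $\A{i-1}$ is affine in $\betavar{i},\dots,\betavar{L-1}$ with no cross terms, and the same argument---using that the $b'$ of Lemma~\ref{lemma:relax_single} is linear in $w=\A{i}$---gives affineness of $c_{i-1}$. Unrolling to $i=0$ leaves $\min f \geq \max_{\betavar\geq 0}\min_{x\in\gC}\A{0}x + c_0$ with $\A{0}$ and $c_0$ affine in the stacked vector $\betavar$; splitting each into its $\betavar$-linear and $\betavar$-free parts yields $\A{0}{}^\top=\aaa+\bbb\betavar$ and $c_0=\ccc^\top\betavar+\constc$, which is exactly~\eqref{eq:beta_crown_linear}. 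I expect the main obstacle to be not the algebra but the soundness bookkeeping around the relaxation: the pattern of $\D{i}$ prescribed by Lemma~\ref{lemma:relax_single} depends on the signs of the entries of $\A{i}$, which themselves depend on $\betavar{i+1},\dots$, so the clean affine-in-$\betavar$ form only holds with the relaxation held fixed (globally the bound is piecewise affine). One must therefore argue that for each $\betavar\geq 0$ the relaxation in step (i) keeps every ReLU inequality valid, so the whole chain of $\geq$ survives and $\max_{\betavar\geq 0}$ of these lower bounds remains a lower bound on $\min_{x\in\gC,z\in\setz}f(x)$.
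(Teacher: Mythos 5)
Your proof is correct and takes essentially the same route as the paper's own proof in Appendix~\ref{ref:app_proof}: a downward induction over layers that applies Lemma~\ref{lemma:relax_single} with $w=\A{i}$, encodes $\z{i}\in\setz{i}$ through the penalty ${\betavar{i}}^\top\S{i}\z{i}$, swaps $\min$ and $\max$ by weak duality, and substitutes $\z{i}=\W{i}\hz{i-1}+\bias{i}$, exactly as in the paper's induction on $m$. The only differences are cosmetic---the paper carries explicit closed-form coefficients $\aaa{m},\bbb{}{m},\ccc{}{m},\constc{m}$ built from the products $\AA{i}{k}$, where you instead track affineness in $\betavar$ and split off the linear part at the end, which suffices for the statement as given---and your closing caveat (that $\D{i}$ depends on the signs of the entries of $\A{i}$, hence on $\betavar$, so the relaxation must be chosen sign-consistently for each fixed $\betavar$ and the bound is only piecewise affine) is a genuine subtlety that the paper's proof shares but leaves implicit.
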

Detailed formulations for  $\aaa$, $\bbb$, $\ccc$ and $c$ are given in Appendix~\ref{ref:app_proof}. Theorem~\ref{thm:beta_CROWN} shows that when neuron split constraints exist, $f(x)$ can still be bounded by a linear equation containing optimizable multipliers $\betavar$. 
Observing~\eqref{eq:layer_activation}, the main difference between CROWN and $\beta$-CROWN lies in the relaxation of each $\ReLU$ layer, where we need an extra term $\betavar{i}^\top \S{i}$ in the linear relationship matrix (for example, $\W{L}\D{L-1}$ in~\eqref{eq:layer_activation}) between $f(x)$ and $\z{i}$ to enforce neuron split constraints. This extra term in every $\ReLU$ layer yields $\bbb$ and $\ccc$ in~\eqref{eq:beta_crown_linear} after bound propagations. 

To solve the optimization problem in~\eqref{eq:beta_crown_linear}, we note that in the $\ell_p$ norm robustness setting ($\gC = \{ x \ | \ \| x - x_0 \|_p \leq \epsilon \}$), the inner minimization has a closed solution:
\begin{equation}
    \min_{\substack{x \in \gC, z \in \setz}} f(x) \geq \max_{\betavar \geq 0} - \|\aaa + \bbb \betavar\|_q \epsilon + (\bbb^\top x_0 + \ccc)^\top \betavar + \aaa^\top x_0 + c
    := \max_{\betavar \geq 0} g(\betavar)
\label{eq:beta_crown_lp}
\end{equation}
where $\frac{1}{p}+\frac{1}{q}=1$. The maximization is concave in $\betavar$ ($q \geq 1$), so we can simply optimize it using projected (super)gradient ascent with gradients from an automatic differentiation library. Since any $\betavar \geq 0$ yields a valid lower bound for $\min_{\substack{x \in \gC, z \in \setz}} f(x)$, convergence is not necessary to guarantee soundness.
$\beta$-CROWN is efficient - it has the same asymptotic complexity as CROWN when $\betavar$ is fixed. When $\betavar=0$, $\beta$-CROWN yields the same results as CROWN; however the additional optimizable $\betavar$ allows us to maximize and tighten the lower bound due to neuron split constraints.

We define $\alphavar{i} \in \R^{d_i}$  for free variables associated with unstable ReLU neurons in Lemma~\ref{lemma:relax_single} for layer $i$ and define all free variables $\alphavar = \{ \alphavar{1} \cdots \alphavar{L-1} \}$. Since any $0 \leq \alphavar{i}{j} \leq 1$ yields a valid bound, we can optimize it to tighten the bound, similarly as done in~\citep{xu2020fast}. Formally, we rewrite~\eqref{eq:beta_crown_lp} with $\alphavar$ explicitly:
\begin{equation}
\min_{\substack{x \in \gC, z \in \setz}} f(x) \geq \max_{\substack{0 \leq \alphavar \leq 1, \  \betavar \geq 0}} g(\alphavar, \betavar).
\label{eq:opt_alpha_beta}
\end{equation}
\subsection{Connections to the Dual Problem}
\label{sec:algorithm_dual}
In this subsection, we show that $\beta$-CROWN can also be derived from a dual LP problem. Based on~\eqref{eq:nn_verify} and linear relaxations in Section~\ref{ref:verification_problem}, we first construct an LP problem for $\ell_\infty$ robustness verification in~\eqref{eq:lp_verify_split} where $i \in \{1, \cdots, L-1\}$.
\begin{equation}
\begin{split}
    &\hspace{15em}\min \ f(x) := \z{L}(x) \quad \text{s.t. } \\
    &\text{Network and Input Bounds: } \z{i} = \W{i}\hz{i-1} + \bias{i}; \hz{0} \geq x_0 - \epsilon; \hz{0} \leq x_0 + \epsilon;\\
    &\text{Stable ReLUs: } \hz{i}{j} = \z{i}{j} \ (\text{if }\bl{i}{j} \geq 0);  \hz{i}{j} = 0 \ (\text{if }\bu{i}{j} \leq 0); \\
    &\text{Unstable: } \hz{i}{j} \geq 0, \hz{i}{j} \geq \z{i}{j},\hz{i}{j} \leq \textstyle\frac{\bu{i}{j}}{\bu{i}{j} - \bl{i}{j}} \left ( \z{i}{j} - \bl{i}{j} \right ) (\text{if }\bl{i}{j} < 0 < \bu{i}{j}, j \notin \setzp{i} \cup \setzn{i})\\
    &\text{Neuron Split Constraints: } \hz{i}{j} = \z{i}{j}, \z{i}{j} \geq 0 \ (\text{if }j \in \setzp{i});
     \hz{i}{j} = 0, \z{i}{j} < 0 \ (\text{if } j \in \setzn{i}) \\
\end{split}
\label{eq:lp_verify_split}
\end{equation}
Compared to the formulation in~\citep{wong2018provable}, we have neuron split constraints. Many BaB based complete verifiers~\citep{bunel2018unified,lu2019neural} use an LP solver for~\eqref{eq:lp_verify_split} as the incomplete verifier. We first show that it is possible to derive Theorem~\ref{thm:beta_CROWN} from the dual of this LP, leading to Theorem~\ref{thm:objective_dual}:
\begin{theorem}
\label{thm:objective_dual}
The objective $d_{\text{LP}}$ for the dual problem of~\eqref{eq:lp_verify_split} can be represented as
\[
    d_{\text{LP}} = -\|\aaa + \bbb \betavar\|_1 \cdot \epsilon + (\bbb^\top x_0 + \ccc)^\top \betavar + \aaa^\top x_0 + c, 
    \label{eq:beta_crown_linear_lp}
\]
where $\aaa$, $\bbb$, $\ccc$ and $c$ are defined in the same way as in Theorem~\ref{thm:beta_CROWN}, and $\betavar \geq 0$ corresponds to the dual variables of neuron split constraints in~\eqref{eq:lp_verify_split}.
\end{theorem}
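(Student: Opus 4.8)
The plan is to form the Lagrangian dual of the linear program~\eqref{eq:lp_verify_split} directly and then show, by carrying out the backward elimination of dual variables layer by layer, that the resulting objective collapses into the claimed closed form with exactly the coefficients $\aaa,\bbb,\ccc,c$ of Theorem~\ref{thm:beta_CROWN}. First I would assign a dual variable to every constraint: a free multiplier $\nuvar{i}$ to each network equality $\z{i}=\W{i}\hz{i-1}+\bias{i}$, nonnegative multipliers to the three triangle inequalities of each unstable ReLU, and --- crucially --- the nonnegative multipliers $\betavar{i}{j}$ to the sign constraints $\z{i}{j}\geq 0$ (for $j\in\setzp{i}$) and $\z{i}{j}<0$ (for $j\in\setzn{i}$) of the split neurons. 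Writing the Lagrangian and then taking the minimum over the primal variables yields the dual objective $d_{\text{LP}}$.

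Second, I would separate the primal minimization into the bounded input block and the unbounded internal blocks. Minimizing over the box-constrained input $\hz{0}\in[x_0-\epsilon,\,x_0+\epsilon]$ produces, by the same argument that gives H\"older duality for an $\ell_\infty$ ball, a term $-\|\cdot\|_1\,\epsilon$ together with a $\cdot^\top x_0$ term; this is the origin of the $-\|\aaa+\bbb\betavar\|_1\epsilon$ and $\aaa^\top x_0$ pieces. Minimizing over each unbounded internal variable $\z{i}$ and $\hz{i}$ is finite only when its coefficient in the Lagrangian vanishes, and these stationarity conditions are precisely the dual-feasibility equations that determine how $\nuvar{i}$ is obtained from $\nuvar{i+1}$.

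Third --- and this is the technical heart --- I would solve those stationarity conditions to obtain an explicit backward recursion for $\nuvar{i}$. For a stable or split neuron the relaxation is an equality, so the recursion is exact; for an unstable neuron the three triangle multipliers are eliminated by selecting the binding relaxation according to the sign of the incoming coefficient, which reproduces exactly the diagonal relaxation matrix $\D{i}$ and the constant of Lemma~\ref{lemma:relax_single}. The split multipliers enter this recursion additively through the term $\betavar{i}^\top\S{i}$, matching~\eqref{eq:layer_activation}. Consequently the recursion for $\nuvar{i}$ is identical to that for $\A{i}$ in the primal derivation of Theorem~\ref{thm:beta_CROWN}, so after back-substituting to the input layer the accumulated input coefficient is $\aaa+\bbb\betavar$ and the accumulated constants are $(\bbb^\top x_0+\ccc)^\top\betavar+\aaa^\top x_0+c$, with $\aaa,\bbb,\ccc,c$ defined exactly as in that theorem.

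The main obstacle I anticipate is the bookkeeping for the unstable-ReLU triangle relaxation: showing that, at the dual optimum, the three nonnegative multipliers always combine into the single effective lower-bound slope used by CROWN, and that the induced constant terms aggregate cleanly into the $\ccc$ and $c$ coefficients rather than leaving residual cross-terms between the relaxation and split multipliers. Making this matching precise --- including the degenerate cases where a neuron is stable or split and the triangle collapses to an equality --- is where the care is needed; once the $\nuvar{i}$ recursion is shown to coincide with the $\A{i}$ recursion, equality of the two objectives follows by identifying terms.
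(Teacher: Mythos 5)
Your proposal follows essentially the same route as the paper's own proof: form the Lagrangian dual of~\eqref{eq:lp_verify_split} with multipliers $\nuvar$ for the network equalities, triangle multipliers for unstable ReLUs, and $\betavar$ for the split constraints; eliminate the internal variables via stationarity to obtain a backward recursion for $\nuvar$ that coincides---including the additive $\betavar{i}^\top\S{i}$ term and the sign-dependent selection reproducing $\D{i}$ and the constants of Lemma~\ref{lemma:relax_single}---with the $\A{i}$ recursion of Theorem~\ref{thm:beta_CROWN}; and recover the $-\|\aaa+\bbb\betavar\|_1\epsilon$ and $x_0$ terms from the input box constraint. The paper implements exactly this plan by rewriting the dual in Wong--Kolter form and proving by induction that $\hnuvar{m} = -\aaa{m}-\bbb{}{m}\tbetavar{m+1}$ together with a matching claim for the accumulated constants, which is precisely the recursion-matching step (and the bookkeeping of constant terms) that you identify as the heart of the argument.
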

A similar connection between CROWN and dual LP based verifier~\citep{wong2018provable} was shown in~\citep{salman2019convex}, and their results can be seen as a special case of ours when $\betavar=0$ (none of the split constraints are active). An immediate consequence is that $\beta$-CROWN can potentially solve~\eqref{eq:lp_verify_split} as well as using an LP solver:
\begin{corollary}
\label{corollary:optimal_alpha_beta}
When $\alphavar$ and $\betavar$ are optimally set and intermediate bounds $\bl, \bu$ are fixed, $\beta$-CROWN produces $p^*_{\text{LP}}$, the optimal objective of LP with split constraints in~\eqref{eq:lp_verify_split}:
\[
\max_{0 \leq \alphavar \leq 1, \betavar \geq 0} g(\alphavar, \betavar) = p^*_{\text{LP}}, 
\]
\end{corollary}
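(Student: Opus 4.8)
The plan is to route everything through the dual of the linear program \eqref{eq:lp_verify_split} and then invoke LP strong duality. First I would record that \eqref{eq:lp_verify_split} is a genuine linear program whose feasible set is nonempty (for any $x \in \gC$ the triangle relaxation together with the split constraints admits a valid assignment of the $\hz{i}$) and whose objective is bounded below because all intermediate bounds $\bl{i}, \bu{i}$ are finite and the input set $\gC$ is compact. Hence strong duality for LPs applies and gives $p^*_{\text{LP}} = d^*_{\text{LP}}$, where $d^*_{\text{LP}}$ denotes the optimal value of the dual maximization over all dual-feasible variables. This reduces the corollary to showing that the dual maximization is \emph{exactly} the optimization $\max_{0 \leq \alphavar \leq 1,\ \betavar \geq 0} g(\alphavar, \betavar)$.

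The core step is this identification, and Theorem~\ref{thm:objective_dual} already supplies one half of it: it exhibits a dual-feasible assignment whose objective is the closed form $d_{\text{LP}} = -\|\aaa + \bbb \betavar\|_1 \epsilon + (\bbb^\top x_0 + \ccc)^\top \betavar + \aaa^\top x_0 + c$, with $\betavar \geq 0$ the dual variables of the neuron split constraints and with $\alphavar$ entering implicitly through the diagonal relaxation matrices $\D{i}$ inside $\aaa, \bbb, \ccc, c$. I would make the $\alphavar$-dependence explicit by tracing the dual variables of each \emph{unstable} ReLU: the two lower-bounding constraints $\hz{i}{j} \geq 0$ and $\hz{i}{j} \geq \z{i}{j}$ contribute a pair of nonnegative multipliers whose normalized ratio is precisely the free slope $\alphavar{i}{j} \in [0,1]$ appearing in Lemma~\ref{lemma:relax_single}, while the multiplier of the upper ``triangle'' constraint is pinned down (not free). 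Eliminating in closed form every dual variable other than those parametrizing $\alphavar$ and $\betavar$ is exactly the backward substitution that produces $\aaa, \bbb, \ccc, c$ in Theorem~\ref{thm:beta_CROWN}, and the feasibility constraints on these eliminated variables collapse precisely to the box $0 \leq \alphavar \leq 1$ together with $\betavar \geq 0$, with no residual constraints. For the converse inclusion I would check surjectivity: every $(\alphavar, \betavar)$ in the box $\{0 \leq \alphavar \leq 1\} \times \{\betavar \geq 0\}$ is realized by some dual-feasible point, so $g$ ranging over the box attains values that are both dominated by $d^*_{\text{LP}}$ and, via the explicit assignment, dominate it; the two inclusions give $\max_{0 \leq \alphavar \leq 1,\ \betavar \geq 0} g(\alphavar, \betavar) = d^*_{\text{LP}}$, and strong duality then yields the claimed equality with $p^*_{\text{LP}}$.

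The main obstacle is the bookkeeping in the middle step: verifying that the $\beta$-CROWN back-substitution coincides term-by-term with the partial maximization over the non-$\alphavar$, non-$\betavar$ dual variables, and that the dual feasible region projects exactly onto $\{0 \leq \alphavar \leq 1\} \times \{\betavar \geq 0\}$. This requires matching the relaxation slopes of Lemma~\ref{lemma:relax_single} against the convex-combination structure of the ReLU lower-bound dual variables and confirming that the upper-bound multiplier is uniquely determined by stationarity rather than free. Once that identification is secured, the equality of the two optimization problems, and hence the corollary, follow immediately from LP strong duality.
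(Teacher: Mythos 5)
Your proposal is correct and takes essentially the same route as the paper's own proof: identify $g(\alphavar,\betavar)$ with the dual objective of \eqref{eq:lp_verify_split} via Theorem~\ref{thm:objective_dual}, using exactly the mapping $\alphavar{i}{j}=\gammavar{i}{j}/(\muvar{i}{j}+\gammavar{i}{j})$ for the two lower-bound multipliers with the triangle multiplier pinned to $\lambdavar{i}{j}=[\hnuvar{i}{j}]^+/(\bu{i}{j}-\bl{i}{j})$, establish the two-sided inequality between $\max_{0\leq\alphavar\leq 1,\,\betavar\geq 0} g(\alphavar,\betavar)$ and $d^*_{\text{LP}}$ (optimal dual point $\to$ some $(\alphavar,\betavar)$ in the box, and conversely every $(\alphavar,\betavar)$ in the box $\to$ a dual-feasible point), and finish with LP strong duality. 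The one blemish is your parenthetical justification of primal feasibility --- split constraints \emph{can} be jointly infeasible for all $x\in\gC$ (the paper devotes Section~\ref{sec:infeasible} to detecting exactly this) --- but this is immaterial here because the corollary presupposes that $p^*_{\text{LP}}$ exists, an assumption the paper's proof also makes implicitly when it posits an optimal dual solution.
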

In Appendix~\ref{ref:app_proof}, we give detailed formulations for conversions between the variables $\alphavar$, $\betavar$ in $\beta$-CROWN and their corresponding dual variables in the LP problem.

\vspace{-5pt}
\subsection{Joint Optimization of Free Variables in $\beta$-CROWN}
\label{sec:optimization_variable}
\vspace{-5pt}

In~\eqref{eq:opt_alpha_beta}, $g$ is also a function of $\bl{i}{j}$ and $\bu{i}{j}$, the intermediate layer bounds for each neuron $\z{i}{j}$. They are also computed using $\beta$-CROWN. To obtain $\bl{i}{j}$, we set $f(x) := \z{i}{j}(x)$ and apply Theorem~\ref{thm:beta_CROWN}:
\begin{equation}
    \min_{\substack{x \in \gC, \tz{i-1} \in \tsetz{i-1}}} \z{i}{j}(x) \geq \max_{\substack{0 \leq \alphavar^\prime \leq 1, \  \betavar^\prime \geq 0}} g^\prime(\alphavar^\prime, \betavar^\prime) := \bl{i}{j}
\end{equation}
For computing $\bu{i}{j}$ we simply set $f(x) := -\z{i}{j}(x)$. Importantly, during solving these intermediate layer bounds, the $\alphavar^\prime$ and $\betavar^\prime$ are \emph{independent sets of variables}, not the same ones for the objective $f(x) := \z{L}$. Since $g$ is a function of $\bl{i}{j}$, it is also a function of $\alphavar^\prime$ and $\betavar^\prime$. In fact, there are a total of
$\sum_{i=1}^{L-1} d_i$ intermediate layer neurons, and each neuron is associated with a set of independent $\alphavar^\prime$ and $\betavar^\prime$ variables. Optimizing these variables allowing us to tighten the relaxations on unstable ReLU neurons (which depend on $\bl{i}{j}$ and $\bu{i}{j}$) and produce tight final bounds, which is impossible in LP. In other words, we need to optimize $\hat{\alphavar}$ and $\hat{\betavar}$, which are two vectors concatenating $\alphavar$, $\betavar$ as well as a large number of $\alphavar^\prime$ and $\betavar^\prime$ used to compute each intermediate layer bound:
\begin{equation}
\min_{\substack{x \in \gC, z \in \setz}} f(x) \geq \max_{\substack{0 \leq \hat{\alphavar} \leq 1, \  \hat{\betavar} \geq 0}} g(\hat{\alphavar}, \hat{\betavar}).
\label{eq:opt_non_convex}
\end{equation}
This formulation is non-convex and has a large number of variables. Since any $0 \leq \hat{\alphavar} \leq 1$, $\hat{\betavar} \geq 0$ leads to a valid lower bound, the non-convexity does not affect soundness. When intermediate layer bounds are also allowed to be tightened during optimization, we can outperform the LP verifier for~\eqref{eq:lp_verify_split} using fixed intermediate layer bounds. Typically, in many previous works~\citep{bunel2018unified,lu2019neural,Bunel2020}, when the LP formulation~\eqref{eq:lp_verify_split} is formed, intermediate layer bounds are pre-computed with bound propagation procedures~\citep{bunel2018unified,lu2019neural}, which are far from optimal. 
To estimate the dimension of this problem, we denote the number of unstable neurons at layer $i$ as $s_i :=\Tr(|\S{i}|)$. Each neuron in layer $i$ is associated with $2 \times \sum_{k=1}^{i-1} s_k$ variables $\alphavar^\prime$. Suppose each hidden layer has $d$ neurons ($s_i = O(d)$), then $\hat{\alphavar}$ has $2\times \sum_{i=1}^{L-1} d_i \sum_{k=1}^{i-1} s_k = O(L^2 d^2)$ variables in total. This can be too large for efficient optimization, so we share $\alphavar^\prime$ and $\betavar^\prime$ among the intermediate neurons of the same layer, leading to a total number of $O(L^2 d)$ variables to optimize. Note that a weaker form of joint optimization was also discussed in \citep{xu2020fast} without $\betavar$, and a detailed analysis can be found in~Appendix~\ref{app:gpu_comparison}.

\subsection{$\beta$-CROWN with Branch and Bound ($\beta$-CROWN BaB)}
\label{sec:alg_bab}
\vspace{-5pt}
We perform complete verification following BaB framework~\citep{bunel2018unified} using $\beta$-CROWN as the incomplete solver, and we use simple branching heuristics like {BaBSR}~\citep{bunel2020branch} or {FSB}~\citep{de2021improved}.
To efficiently utilize GPU, we also use batch splits to evaluate multiple subdomains in the same batch as in~\citep{xu2020automatic,DePalma2021}. We list our full algorithm $\beta$-CROWN BaB in Appendix~\ref{sec:app_bab} and we show it is sound and complete here:

\begin{theorem}
\label{thm:soundness_completeness}
$\beta$-CROWN with Branch and Bound on splitting ReLUs is sound and complete.
\end{theorem}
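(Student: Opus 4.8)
The plan is to prove soundness and completeness separately, treating $\beta$-CROWN as a subroutine whose only properties we invoke are the validity guarantee of Theorem~\ref{thm:beta_CROWN} and the optimality guarantee of Corollary~\ref{corollary:optimal_alpha_beta}. For soundness, the key observation is that Theorem~\ref{thm:beta_CROWN} holds for \emph{every} $\betavar \geq 0$ and $0 \leq \alphavar \leq 1$, not merely at the optimum, so regardless of how many ascent steps are taken or whether the inner optimization has converged, the value $\beta$-CROWN returns on a subdomain $\gC_k$ with split set $\gZ_k$ satisfies $\lf_{\gC_k} \leq \min_{x \in \gC_k,\, z \in \gZ_k} f(x)$. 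First I would note that each branching step partitions a domain into $\gC_1 = \{x \in \gC,\ \z{i}{j} \geq 0\}$ and $\gC_2 = \{x \in \gC,\ \z{i}{j} < 0\}$, which are complementary and exhaustive, so by induction the leaf subdomains maintained by BaB always partition the original region $\gC$. Consequently $\min_{x\in\gC} f(x)$ equals the minimum over the leaves, and the global lower bound BaB reports (the minimum of the per-subdomain lower bounds over unverified leaves) never exceeds $f^*$. Hence a ``verified'' decision, triggered when every leaf lower bound is nonnegative, correctly certifies $f^* \geq 0$, and any reported counterexample is a concrete feasible $x \in \gC$ with $f(x) < 0$; both are sound.

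For completeness I would argue termination and leaf-exactness. Branching only ever splits an \emph{unstable} ReLU neuron, and once a neuron is split it has a fixed sign in both children, so it can never be selected again along that branch. Since the network has finitely many neurons $N = \sum_{i=1}^{L-1} d_i$, every root-to-leaf path performs at most $N$ splits; the BaB tree therefore has depth at most $N$ with finite branching factor and halts after finitely many subdomains. It then remains to show that on a fully-split leaf the verifier reaches a correct decision. On such a leaf no unstable neuron survives, so each ReLU acts as an exact identity or zero map and $f$ restricted to the leaf is affine in $x$ over a polytope (the input box intersected with the split half-spaces). By Corollary~\ref{corollary:optimal_alpha_beta}, at the optimal $\alphavar,\betavar$ the $\beta$-CROWN value equals the LP optimum $p^*_{\text{LP}}$ of~\eqref{eq:lp_verify_split}; but with no unstable ReLUs the LP contains no relaxation and exactly encodes the affine problem, so $p^*_{\text{LP}}$ equals the true minimum over the leaf (or $+\infty$ if the leaf is infeasible). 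Thus each leaf is correctly verified or falsified, which combined with termination yields completeness.

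The hard part will be the treatment of infeasible leaves and the reliance on \emph{optimal} $\alphavar,\betavar$ rather than finitely many ascent steps. For an infeasible leaf the true minimum is $+\infty$, and certifying this requires the outer $\max_{\betavar\geq 0}$ over the dual split multipliers to be driven unbounded, equivalently LP infeasibility through the dual of Theorem~\ref{thm:objective_dual}; I would make this rigorous by showing that once every unstable neuron is split, the split constraints are either automatically consistent with the recomputed intermediate bounds $\bl{i},\bu{i}$ (a feasible leaf, where the relaxation vanishes so even fixed parameters give the exact affine minimum) or contradicted by them (an infeasible leaf, correctly pruned). A secondary subtlety is verifying that the joint optimization of intermediate-layer bounds in Section~\ref{sec:optimization_variable} compromises neither property: soundness survives because any valid intermediate bounds induce a valid relaxation, and leaf-exactness survives because the affine structure forces the intermediate bounds themselves to be tight. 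I expect these edge cases, rather than the termination counting, to demand the most care.
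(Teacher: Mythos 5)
Your route is the same as the paper's: soundness from the fact that Theorem~\ref{thm:beta_CROWN} yields a valid bound for \emph{every} $\betavar \geq 0$ (so convergence is irrelevant to soundness), and completeness by reducing to fully-split leaves where Corollary~\ref{corollary:optimal_alpha_beta} identifies the $\beta$-CROWN value with the exact LP optimum of \eqref{eq:lp_verify_split}. Your termination count and soundness argument are fine. The gap sits precisely in the two edge cases you flag at the end, and your proposed fixes do not close it. The missing idea is \emph{concavity}: once every unstable neuron is split, no $\alphavar$ variables remain (no neuron is relaxed) and no intermediate-bound variables $\alphavar^\prime,\betavar^\prime$ are in play, so the generally non-convex objective \eqref{eq:opt_non_convex} collapses to \eqref{eq:beta_crown_lp}, which is concave in $\betavar$. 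This is what turns the hypothesis of Corollary~\ref{corollary:optimal_alpha_beta} (``$\alphavar$ and $\betavar$ optimally set'') into something the algorithm actually attains: (super)gradient ascent with line search provably converges to the global maximizer $\betavar^*$ of a concave problem. Without this observation, finitely many fixed-step ascent iterations on a possibly non-concave objective never license you to invoke the corollary, and your leaf-exactness step fails exactly where you predicted it would.

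Concavity also resolves infeasible leaves, and your proposed dichotomy for them is false as stated. You suggest that after all splits, the constraints are either ``automatically consistent with the recomputed intermediate bounds'' (feasible leaf) or ``contradicted by them'' (infeasible leaf, pruned). But intermediate bounds are themselves produced by an incomplete, finitely-iterated relaxation, and a leaf can be infeasible only through the \emph{joint} interaction of many half-space constraints $\z{i}{j} \geq 0$ or $\z{i}{j} < 0$, each of which is individually consistent with any per-neuron interval $[\bl{i}{j}, \bu{i}{j}]$; no per-neuron consistency check can certify this. The paper instead argues via LP duality: an infeasible primal makes the dual objective \eqref{eq:beta_crown_lp} unbounded above, and --- again because the leaf problem is concave --- line search is guaranteed to drive $g(\betavar)$ past the maintained upper bound $\overline{f}$, at which point the leaf is pruned (Appendix~\ref{sec:infeasible}). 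A minor additional slip: your claim that ``the affine structure forces the intermediate bounds themselves to be tight'' is neither true nor needed; once all neurons are split, the relaxation matrices $\D{i}$ have $0/1$ entries determined by the splits alone, so the relaxation no longer depends on $\bl{i},\bu{i}$ at all.
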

Soundness is trivial because $\beta$\nobreakdash-CROWN is a sound verifier. For completeness, it suffices to show that when all unstable ReLU neurons are split, $\beta$-CROWN gives the global minimum for~\eqref{eq:lp_verify_split}. In contrast, combining CROWN~\citep{zhang2018efficient} with BaB does \emph{not} yield a complete verifier, as it cannot detect infeasible splits and a slow LP solver is still needed to guarantee completeness~\citep{xu2020fast}. Instead, $\beta$-CROWN can detect infeasible subdomains - according to duality theory, an infeasible primal problem leads to an unbounded dual objective, which can be detected (see Sec.~\ref{sec:infeasible} for more details).

Additionally, we show the potential of \emph{early stopping a complete verifier as an incomplete verifier}. BaB approaches the exact solution of~\eqref{eq:nn_verify} by splitting the problem into multiple subdomains, and more subdomains give a tighter lower bound for~\eqref{eq:nn_verify}. Unlike traditional complete verifiers, $\beta$-CROWN is efficient to explore a large number of subdomains during a very short time, making $\beta$-CROWN BaB an attractive solution for efficient incomplete verification. %

\vspace{-1mm}
\section{Experimental Results}
\vspace{-5pt}
\label{sec:exp}

\vspace{-3pt}
\subsection{Comparison to Complete Verifiers}
\vspace{-5pt}

We evaluate complete verification performance on dataset provided in \citep{lu2019neural,DePalma2021} and used in VNN-COMP 2020~\citep{vnn2020}. The benchmark contains three CIFAR-10 models (Base, Wide, and Deep) with 100 examples each. Each data example is associated with an $\ell_\infty$ norm $\epsilon$ and a target label for verification (referred to as a \emph{property} to verify).
The details of neural network structures and experimental setups can be found in Appendix~\ref{sec:app_exp}. We compare against multiple baselines for complete verification: (1) {BaBSR}~\citep{bunel2020branch}, a basic BaB and LP based verifier; (2) {MIPplanet}~\citep{ehlers2017formal}, a customized MIP solver for NN verification where unstable ReLU neurons are randomly selected for splitting; (3) ERAN~\citep{singh2019beyond,singh2018fast,singh2019abstract,singh2018boosting}, an abstract interpretation based verifier which performs well on this benchmark in VNN-COMP 2020; (4) {GNN-Online}~\cite{lu2019neural}, a BaB and LP based verifiers using a learned Graph Neural Network (GNN) to guide the ReLU splits; (5) {BDD+ BaBSR}~\citep{Bunel2020}, a verification framework based on Lagrangian decomposition on GPUs ({BDD+}) with {BaBSR} branching strategy; (6) OVAL (BDD+ GNN)~\citep{Bunel2020,lu2019neural}, a strong verifier in VNN-COMP 2020 using {BDD+} with {GNN} guiding the ReLU splits; (7) {A.set BaBSR} and (8) {Big-M+A.set BaBSR}~\citep{DePalma2021}, very recent dual-space verifiers on GPUs with a tighter linear relaxation than triangle LP relaxations; (9) {Fast-and-Complete}~\citep{xu2020fast}, which uses CROWN (LiRPA) on GPUs as the incomplete verifier in BaB without neuron split constraints; (10) BaDNB ({BDD+ FSB})~\citep{de2021improved}, a concurrent state-of-the-art complete verifier, using {BDD+} on GPUs with FSB branching strategy. $\beta$-CROWN BaB can use either BaBSR or FSB branching heuristic, and we include both in evaluation. All methods use a 1 hour timeout threshold.

We report the average verification time and branch numbers in Table~\ref{table:average_complete} and plot the percentage of solved properties over time in Figure~\ref{fig:rate_time_complete}. {$\beta$-CROWN FSB} achieves the fastest average running time compared to all other baselines with minimal timeouts, and also clearly leads on the cactus plot. When using a weaker branching heuristic, $\beta$-CROWN BaBSR still outperforms all literature baselines, including very recent ones such as {A.set BaBSR}~\citep{DePalma2021}, Fast-and-Complete~\citep{xu2020fast} and BaDNB~\citep{de2021improved}.
Our benefits are more clearly shown in Figure~\ref{fig:rate_time_complete}, where we solve over 90\% examples under 10s and most other verifiers can verify much less or none of the properties within 10s. We see a 2 to 3 orders of magitudes speedup in Figure~\ref{fig:rate_time_complete} compared to CPU based verifiers such as MIPplanet and BaBSR.

\begin{table*}[t]
\centering
\caption{\footnotesize Average runtime and average number of branches on three CIFAR-10 models over 100 properties. 
}
\vspace{-5pt}
\label{table:average_complete}
\scriptsize
\setlength{\tabcolsep}{4.2pt}
\begin{adjustbox}{center}
\begin{threeparttable}[hbt]
\begin{tabular}{cccc|ccc|ccc}

    & \multicolumn{3}{ c }{CIFAR-10 Base} & \multicolumn{3}{ c }{CIFAR-10 Wide} & \multicolumn{3}{ c }{CIFAR-10 Deep} \\
    \toprule

    \multicolumn{1}{ c| }{Method} &
    \multicolumn{1}{ c }{time(s)} &
    \multicolumn{1}{ c }{branches} &
    \multicolumn{1}{ c| }{$\%$timeout} &
    \multicolumn{1}{ c }{time(s)} &
    \multicolumn{1}{ c }{branches} &
    \multicolumn{1}{ c| }{$\%$timeout} &
    \multicolumn{1}{ c }{time(s)} &
    \multicolumn{1}{ c }{branches} &
    \multicolumn{1}{ c }{$\%$timeout} \\

    \cmidrule(lr){1-1} \cmidrule(lr){2-4} \cmidrule(lr){5-7} \cmidrule(lr){8-10}

    \multicolumn{1}{ c| }{{BaBSR}~\citep{bunel2020branch}}
        &2367.78
        &1020.55
        &36.00

        &2871.14	
        &812.65
        &49.00

        &2750.75	
        &401.28
        &39.00\\

    \multicolumn{1}{ c| }{{MIPplanet}~\citep{ehlers2017formal}}
        & 2849.69
        & -
        & 68.00

        &2417.53
        &-
        &46.00

        &2302.25
        &-
        &40.00\\
    
    \multicolumn{1}{ c| }{{ERAN}$^*$\citep{singh2019beyond,singh2018fast,singh2019abstract,singh2018boosting}}
        & 805.94
        & -
        & 5.00

        &632.20
        &-
        &9.00

        &545.72
        &-
        &\textbf{0.00}\\
        
    \multicolumn{1}{ c| }{{GNN-online}~\citep{lu2019neural}}
        &1794.85
        &565.13
        &33.00

        &1367.38	
        &372.74
        &15.00	

        &1055.33	
        &131.85	
        &4.00 \\

    \multicolumn{1}{ c| }{{BDD+ BaBSR}~\citep{Bunel2020}}
        &807.91
        &195480.14	
        &20.00	

        &505.65
        &74203.11	
        &10.00

        &266.28
        &12722.74
        &4.00 \\
    
     \multicolumn{1}{ c| }{{OVAL (BDD+ GNN)}$^*$\citep{Bunel2020,lu2019neural}}
        &662.17
        &67938.38
        &16.00	

        &280.38
        &17895.94
        &6.00

        &94.69
        &1990.34
        &1.00 \\
        
    \multicolumn{1}{ c| }{{A.set BaBSR}~\citep{DePalma2021}}
        &381.78	
        &12004.60
        &{7.00}

        &{165.91}
        &2233.10
        &{3.00}	

        &190.28
        &2491.55
        &2.00 \\
    
     \multicolumn{1}{ c| }{{BigM+A.set BaBSR}~\citep{DePalma2021}}
        &390.44	
        &11938.75
        &{7.00}

        &172.65
        &4050.59
        &{3.00}	

        &177.22
        &3275.25
        &2.00\\
    
    \multicolumn{1}{ c| }{{Fast-and-Complete}~\citep{xu2020fast}}
        &695.01
        &119522.65	
        &17.00	

        &495.88
        &80519.85	
        &9.00	

        &105.64
        &2455.11	
        &1.00\\

        \multicolumn{1}{ c| }{BaDNB ({BDD+ FSB})\citep{de2021improved}}
        &309.29
        &38239.04
        &7.00	

        &165.53
        &11214.44	
        &4.00	

        &10.50
        &368.16	
        &\textbf{0.00}\\ \hline
        
      \multicolumn{1}{ c| }{{$\beta$-CROWN BaBSR}}
        &{226.06}	
        &509608.50	
        &6.00	

        &118.26
        &217691.24
        &{3.00}

        &{6.12}	
        &204.66
        &\textbf{0.00} \\

        \multicolumn{1}{ c| }{{$\beta$-CROWN FSB }}
        &\textbf{118.23}
        &208018.21	
        &\textbf{3.00}	

        &\textbf{78.32}
        &116912.57	
        &\textbf{2.00}	

        &\textbf{5.69}
        &{41.12}	
        &\textbf{0.00}\\
    \bottomrule

\end{tabular}
\begin{tablenotes}
\item [*] OVAL (BDD+ GNN) and ERAN results are from VNN-COMP 2020 report~\citep{vnn2020}. Other results were reported by their authors.
\end{tablenotes}
\end{threeparttable}
\end{adjustbox}
\vspace{-10pt}
\end{table*}

\begin{figure*}[t]
    \centering
    \begin{tabular}{ccc}
        \hspace{-2mm} \includegraphics[width=.42\textwidth]{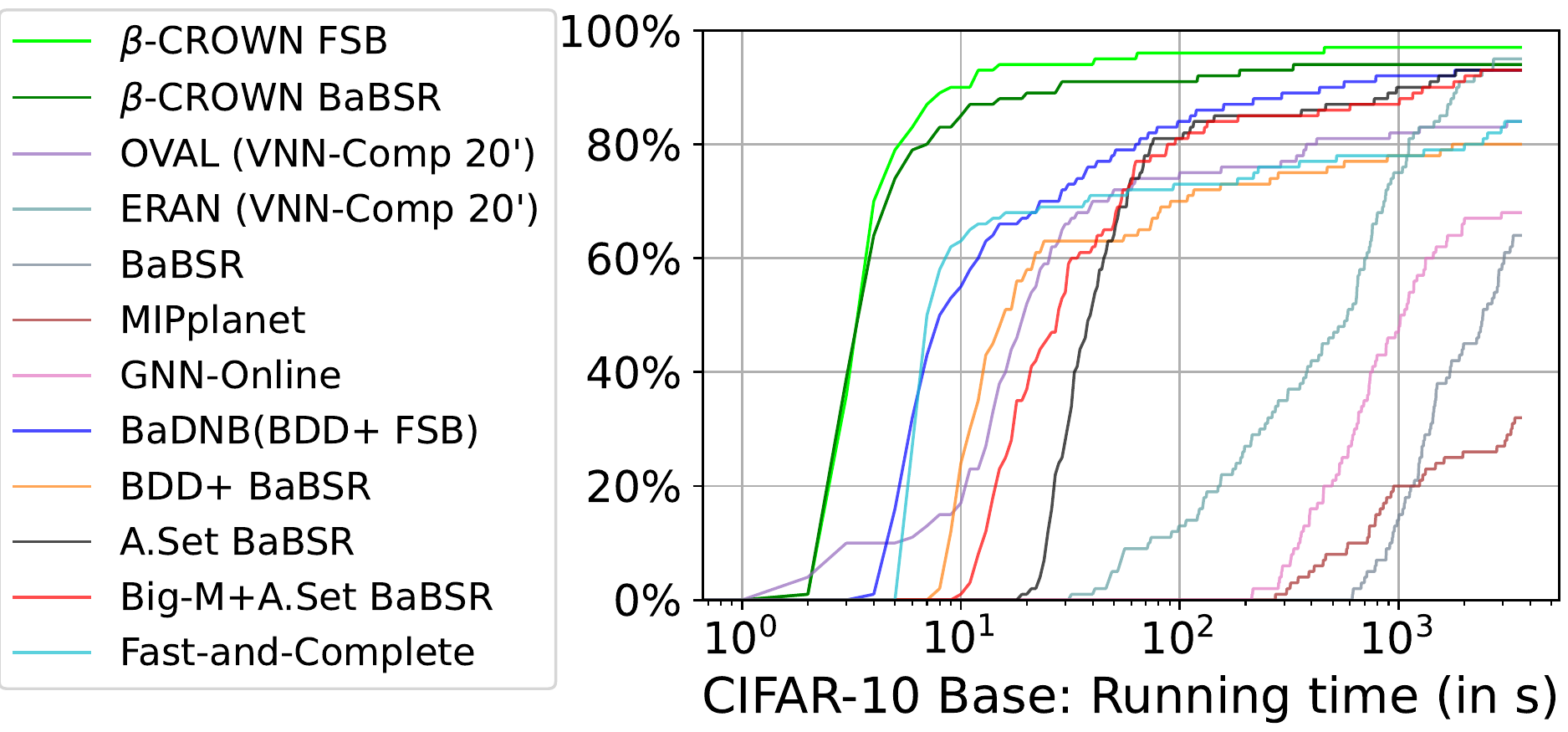}
         &\hspace{-5mm}  \includegraphics[width=.275\textwidth]{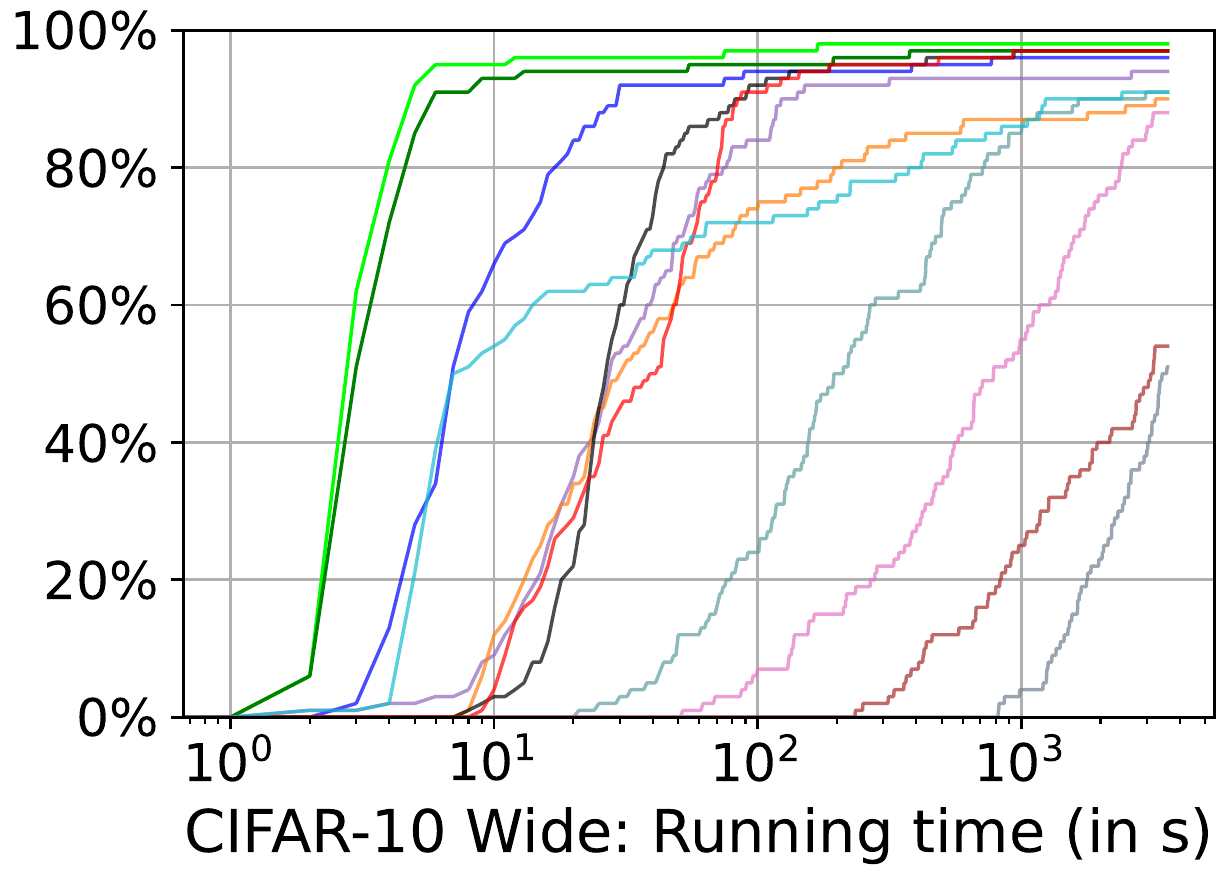}
         & \hspace{-5mm} \includegraphics[width=.275\textwidth]{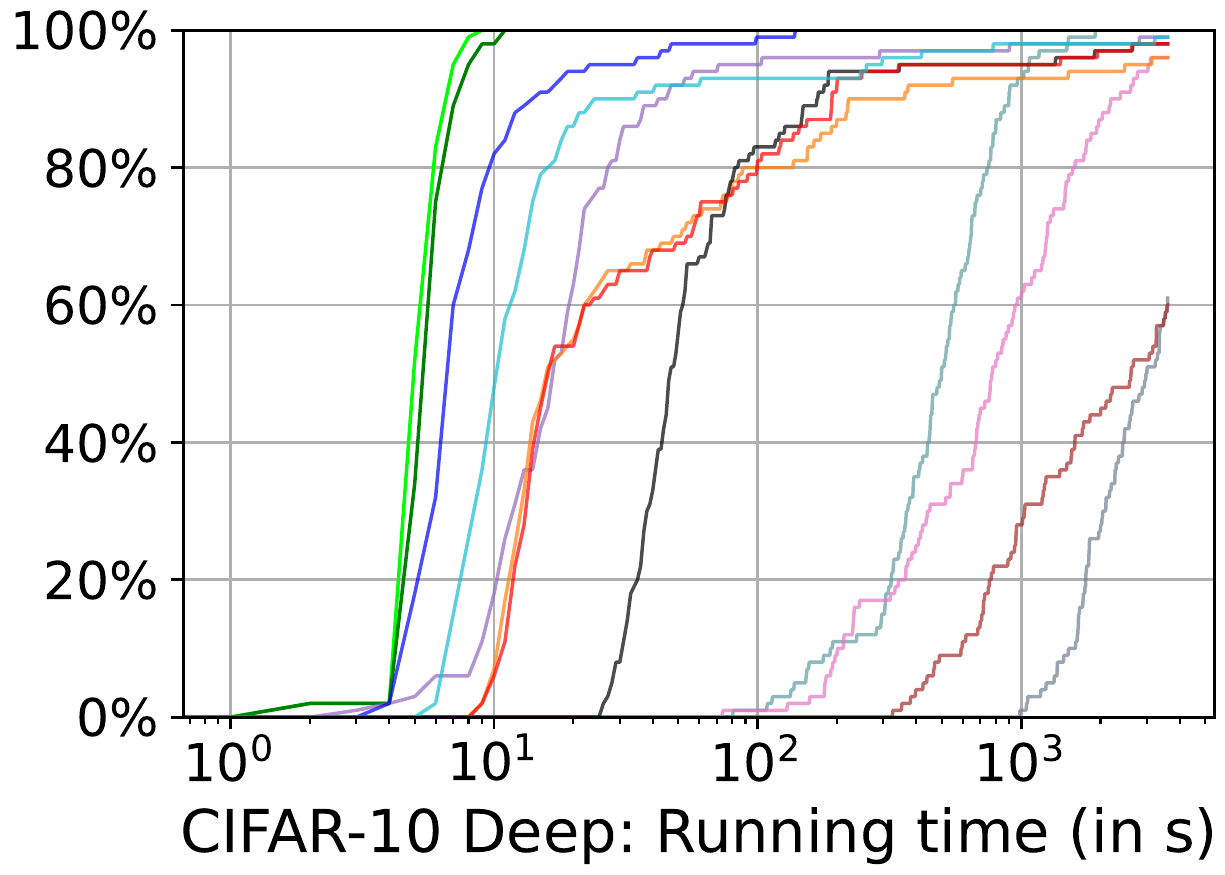}
    \end{tabular}
    \vspace{-7pt}
    \caption{Percentage of solved properties with growing running time. $\beta$-CROWN {FSB} (light green) and $\beta$-CROWN {BaBSR} (dark green) clearly lead in all 3 settings and solve over 90\% properties within 10 seconds.}
    \label{fig:rate_time_complete}
    \vspace{-15pt}
\end{figure*}

\begin{table}[t]
    \centering
    \caption{{\bf Verified accuracy (\%)} and avg.\ time (s)  of 1000 images evaluated on the ERAN models in~\cite{singh2019beyond,tjandraatmadja2020convex,muller2021precise}. kPoly, OptC2V and PRIMA are strong incomplete verifiers that can break the convex relaxation barrier~\citep{salman2019convex}. The average time reported by us excludes examples that are classified incorrectly.}
    \label{tab:eran_model}
    \adjustbox{max width=1.0\textwidth}{
    \begin{threeparttable}[hbt]
        \begin{tabular}{c|c|rr|rr|rr|rr|rr|r}
        \toprule
        Dataset                 & Model       & \multicolumn{2}{c|}{CROWN/DeepPoly$^*$\citep{singh2019abstract}} & \multicolumn{2}{c|}{kPoly~\citep{singh2019beyond}} & \multicolumn{2}{c|}{OptC2V~\cite{tjandraatmadja2020convex}}         & \multicolumn{2}{c|}{PRIMA$^\dagger$~\cite{muller2021precise}} & \multicolumn{2}{c|}{{$\beta$-CROWN FSB}}  & Upper \\
        \multicolumn{2}{c|}{(Same settings as {\citep{singh2019beyond,tjandraatmadja2020convex,muller2021precise}})} & Verified\%       & Time (s)      & Ver.\%           & Time(s)           & Ver.\%       & Time(s)      & Ver.\%      & Time(s)  & Ver.\%      & Time(s)         & bound\\ \midrule
        \multirow{3}{*}{MNIST}  & MLP $5 \times 100^\ddagger$       &  16.0     &    0.7     &       44.1       &    307      &  42.9                &        137      &  51.0           &    159      &    \textbf{69.9}        &     102    & 84.2    \\
                                & MLP $8 \times 100$       & 18.2  &    1.4     &   36.9           &    171      &      38.4            &    759          &     42.8        &   301       & \textbf{62.0}        &   103          & 82.0 \\
                                & MLP $5 \times 200$       & 29.2   &  2.4       &    57.4          &   187       &     60.1             &  403            &   69.0       &    224      &     \textbf{77.4}       &        86 &  90.1   \\
                                & MLP $8 \times 200$       & 25.9  &    5.6     &    50.6          &   464       &         52.8         & 3451      &    62.4       &  395       & \textbf{73.5}     &     95                &  91.1 \\
                                & ConvSmall     & 15.8 & 3      & 34.7            & 477       & 43.6                & 55             & 59.8            & 42        &  {\bf 72.7}          & 7.0    & 73.2       \\
                                & ConvBig       & 71.1 & 21       & 73.6            & 40        & 77.1                & 102            & {77.5}            & 11        &  \textbf{79.3}          & 3.1       & 80.4    \\ \hline
        \multirow{3}{*}{CIFAR}  & ConvSmall    & 35.9 & 4     & 39.9            & 86        & 39.8                & 105            & 44.6            & 13        & {\bf 46.3}     & 6.8     & 48.1      \\
                                & ConvBig      & 42.1 & 43       & 45.9            & 346       & \multicolumn{2}{c|}{No public code} & 48.3            & 176        & {\bf 51.6}           & 15.3      &55.0    \\
                                & ResNet       & 24.1 & 1       & 24.5            & 91        & \multicolumn{2}{c|}{cannot run}  & \textbf{24.8}        & 1.7        &   {\bf 24.8}        &   1.6      &24.8     \\ 
        \bottomrule 
        \end{tabular}
        \begin{tablenotes}[flushleft,para]
\item [*] CROWN/DeepPoly evaluated on CPU. $^\dagger$ PRIMA is a concurrent work and its results are from~\citep{muller2021precise} (Oct 26, 2021 version), except that ResNet results are from personal communications with the authors due to a different input normalization used. 
\item [$\ddagger$] Because these MLP models are fairly small, some of their intermediate layer bounds are computed by mixed integer programming (MIP) using 80\% time budget before branch and bound starts and $\beta$-CROWN FSB is used during the branch and bound process. We find that tighter intermediate bounds by MIP is beneficial for these small MLP models.
\end{tablenotes}
\end{threeparttable}
        }
    \label{tab:verified_acc_eran}
\vspace{-15pt}
\end{table}

\begin{table}[]
    \small
        \caption{{\bf Verified accuracy (\%)} and avg.\ per-example verification time (s) on 7 models from SDP-FO~\cite{dathathri2020enabling}. 
        CROWN/DeepPoly are fast but loose bound propagation based methods, and they cannot be improved with more running time. SDP-FO uses stronger semidefinite relaxations, which can be very slow and sometimes has convergence issues. PRIMA, a concurrent work, is the state-of-the-art relaxation barrier breaking method; we did not include kPoly and OptC2V because they are weaker than PRIMA (see Table~\ref{tab:eran_model}).}
        \centering
        \label{tab:sdp_model}
      \adjustbox{max width=.95\textwidth}{
          \begin{threeparttable}[hbt]
        \begin{tabular}{c|c|rr|rr|rr|rr|r}
        \toprule
        Dataset                       & Model        & \multicolumn{2}{c|}{CROWN/DeepPoly}      & \multicolumn{2}{c|}{SDP-FO~\cite{dathathri2020enabling}$^*$} & \multicolumn{2}{c|}{PRIMA~\cite{muller2021precise}} & \multicolumn{2}{c|}{{$\beta$-CROWN FSB}} & Upper  \\
        \multicolumn{2}{c|}{$\epsilon=0.3$ and $\epsilon=2/255$} & Verified\%      & Time (s)      & Ver.\%         & Time(s)        & Ver.\%        & Time(s)  & Ver.\%        & Time(s)      &  bound    \\ \midrule
        MNIST                         & CNN-A-Adv    & 1.0 &  0.1   &    43.4             & $>$20h          & 44.5            & 135.9           &    \textbf{70.5}         &   21.1       & 76.5          \\ \hline
        \multirow{6}{*}{CIFAR}        & CNN-B-Adv    & 21.5 & 0.5    &    32.8          &  $>$25h         &  38.0           & 343.6       &     \textbf{46.5}        &     32.2     & 65.0         \\
                                      & CNN-B-Adv-4   & 43.5 & 0.9    &     46.0            & $>$25h          & 53.5          & 43.8        &    \textbf{54.0}         &     11.6   & 63.5           \\
                                      & CNN-A-Adv    & 35.5 & 0.6    &   39.6              &   $>$25h        & 41.5          & 4.8          &      \textbf{44.0}       & 5.8       & 50.0            \\
                                      & CNN-A-Adv-4  & 41.5 & 0.7      &    40.0           &    $>$25h       & 45.0         & 4.9         &      \textbf{46.0}       &  5.6             & 49.5     \\
                                      & CNN-A-Mix    & 23.5  & 0.4    &      39.6           & $>$25h          & 37.5        & 34.3          &     \textbf{41.5}       &  49.6          & 53.0       \\
                                      & CNN-A-Mix-4   & 38.0 & 0.5    &    47.8            &     $>$25h      & 48.5           & 7.0         &     \textbf{50.5}        &     5.9    & 57.5         \\ \bottomrule
        \end{tabular}
        \begin{tablenotes}[flushleft,para]
\item [\textsuperscript{*}] SDP-FO results are directly from their paper due to its very long running time (>20h per example). 
\item [$\dagger$] PRIMA experiments were done using commit \href{https://github.com/eth-sri/eran/commit/396dc7a118d0b8772b4f0e9684f9203c9c9b6f63}{396dc7a}, released on June 4, 2021. PRIMA and $\beta$-CROWN FSB results are on the same set of 200 examples (first 200 examples of CIFAR-10 dataset) and we don't run verifiers on examples that are classified incorrectly or can be attacked by a 200-step PGD. $\beta$-CROWN uses 1 GPU and 1 CPU; PRIMA uses 1 GPU and 20 CPUs.
\end{tablenotes}
\end{threeparttable}
        }
        \label{tab:verified_acc_sdp}
\vspace{-15pt}
\end{table}

\vspace{-5pt}
\subsection{Comparison to Incomplete Verifiers}
\vspace{-5pt}
\label{sec:exp_incomplete}

\paragraph{Verified accuracy.} In Table~\ref{tab:verified_acc_eran}, we compare against a few representative and strong incomplete verifiers on 5 convolutional networks and 4 MLP networks for MNIST and CIFAR-10 under the same set of 1000 images and perturbation $\epsilon$ as reported in~\citep{singh2019beyond,tjandraatmadja2020convex,muller2021precise}. Among the baselines, kPoly~\citep{singh2019beyond}, OptC2V~\citep{tjandraatmadja2020convex} and PRIMA~\citep{muller2021precise} utilize state-of-the-art multi-neuron linear relaxation for ReLUs and can bypass the single-neuron convex relaxation barrier~\citep{salman2019convex}, and are among the strongest incomplete verifiers. $\beta$-CROWN FSB achieves better verified accuracy on all models using a similar or less amount of time. Some models, such as MNIST ConvBig and CIFAR ConvBig, are quite challenging - the verified accuracy obtained by $\beta$-CROWN FSB is close to the upper bound found via PGD attack.

To make more comprehensive evaluations, in Table~\ref{tab:verified_acc_sdp} we further compare against a state-of-the-art semidefinite programming (SDP) based verifier, SDP-FO~\citep{dathathri2020enabling}, on one MNIST and six CIFAR-10 models reported in their paper. The models were trained using adversarial training, which posed a challenge for verification~\citep{raghunathan2018semidefinite}.
The SDP formulation can be tighter than linear relaxation based ones, but it is computationally expensive - SDP-FO takes 2 to 3 hours to converge on one GPU for verifying a single property, resulting 5,400 GPU hours to verify 200 testing images with 10 labels each. Due to resource limitations,  we directly quote SDP-FO results from~\cite{dathathri2020enabling} on the same set of models. We evaluate verified accuracy on the same set of 200 test images for other baselines. We include a concurrent work PRIMA~\citep{muller2021precise}, the strongest multi-neuron linear relaxation baseline in Table~\ref{tab:eran_model}, which generally outperforms kPoly and OptC2V. Table~\ref{tab:verified_acc_sdp} shows that overall we are 3 orders of magnitude faster than SDP-FO while still achieving consistently higher verified accuracy on average. 

\vspace{-10pt}\paragraph{Tightness of verification.} In Figure~\ref{fig:tightness}, we compare the tightness of verification bounds against SDP-FO on two adversarially trained networks from~\cite{dathathri2020enabling}. Specifically, we use the verification objective $\f(x):=z^{(L)}_{y}(x)-z^{(L)}_{y'}(x)$, where $z^{(L)}$ is the logit layer output, $y$ and $y'$ are the true label and the runner-up label. For each test image, a 200-step PGD attack~\cite{madry2017towards} provides an adversarial upper bound $\overline{\f}$ of the optimal objective: $\f^*\leq\overline{\f}$. Verifiers, on the other hand, can provide a verified lower bound $\underline{\f}\leq f^*$. 
Bounds from tighter verification methods lie closer to line $y=x$ in Figure~\ref{fig:tightness}.
Figure~\ref{fig:tightness} shows that on both PGD adversarially trained networks, {$\beta$-CROWN FSB} consistently outperforms SDP-FO for all 100 random test images. 
Importantly, for each point on the plots, {$\beta$-CROWN FSB} needs 3 minutes while SDP-FO needs 178 minutes on average. LP verifier with triangle relaxations produces much looser bounds than {$\beta$-CROWN FSB} and SDP-FO. Additional results are in Appendix~\ref{app:additional_exp}. 

\vspace{-5pt}
\paragraph{VNN-COMP 2021 results.} We encourage the readers to checkout the report of the Second International Verification of Neural Networks Competition (VNN-COMP 2021)~\citep{bak2021second} with 9 additional benchmarks and 12 competing methods evaluated in a standardized testing environment on AWS. Our entry $\alpha,\!\beta$-CROWN is based on the $\beta$-CROWN algorithm in this work and uses the same codebase.

\begin{figure}[]
    \centering
    \vspace{-15pt}
    \begin{tabular}{cc}
         \includegraphics[width=.43\linewidth]{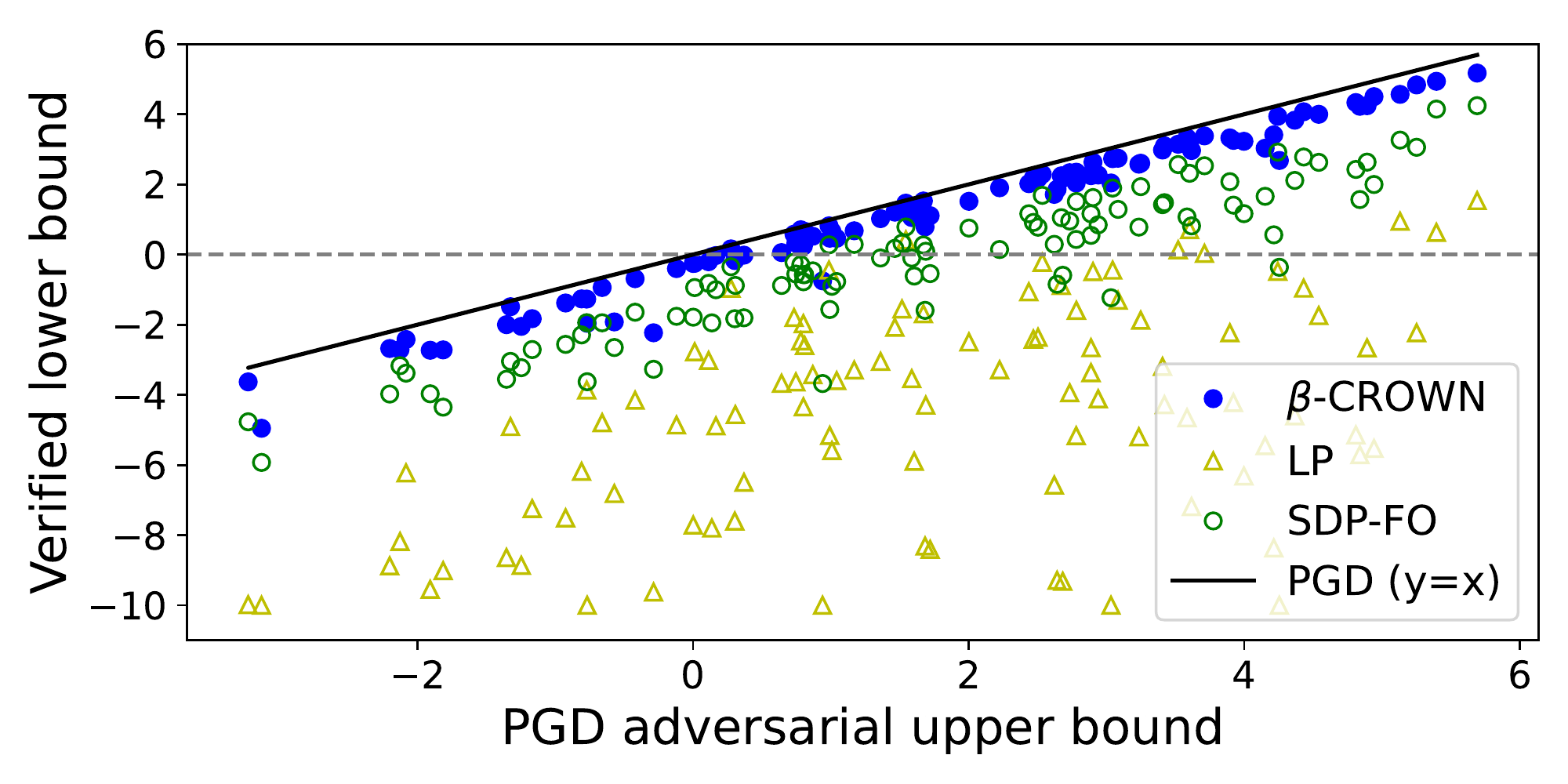} & \includegraphics[width=.43\linewidth]{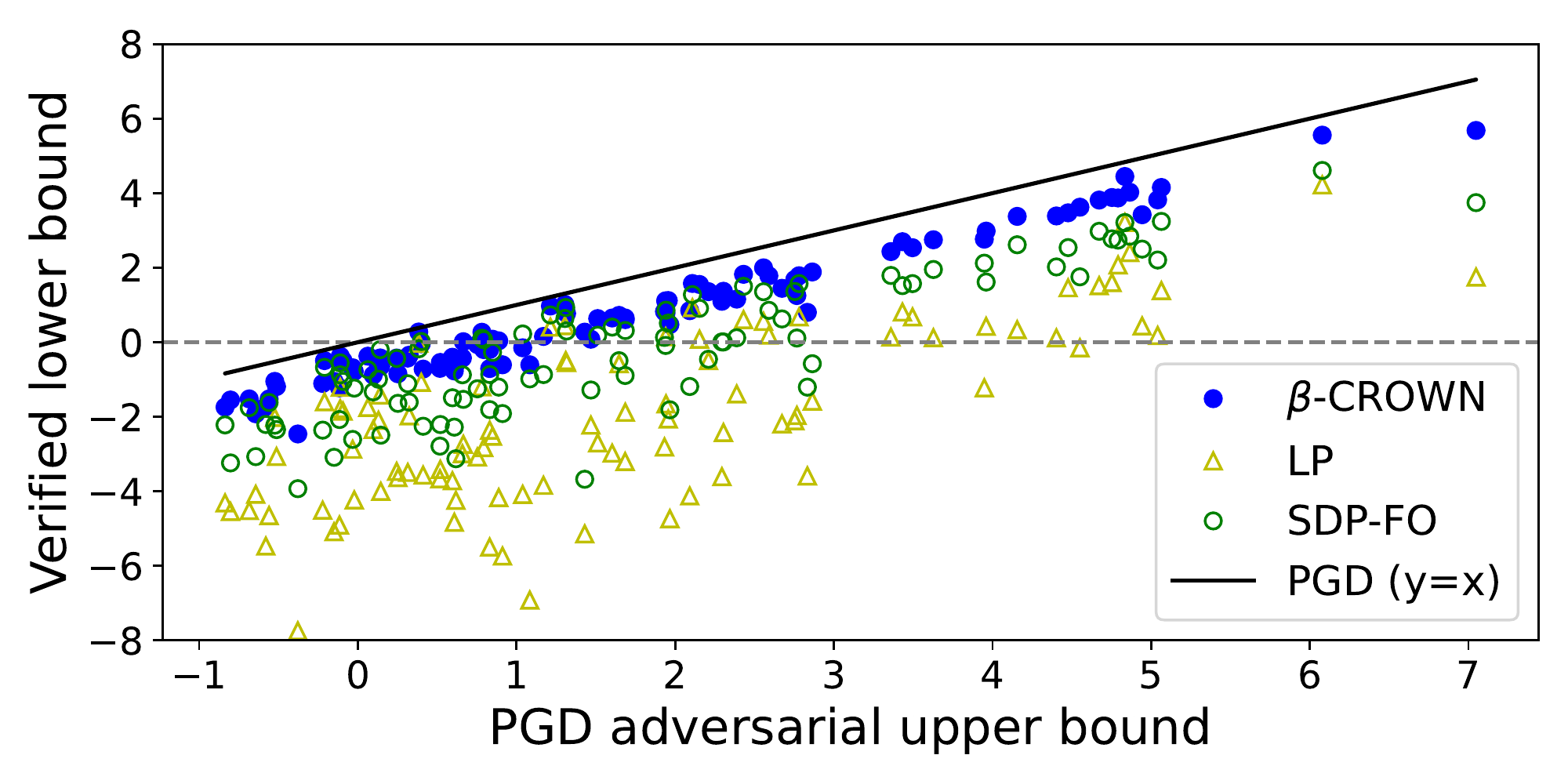} \\
        {\scriptsize (a) MNIST CNN-A-Adv, runner-up targets, $\epsilon=0.3$} & {\scriptsize (b) CIFAR CNN-B-Adv, runner-up targets, $\epsilon=2/255$}
    \end{tabular}
    \vspace{-5pt}
    \caption{\small 
    Verified lower bound v.s.\ PGD adversarial upper bound.
    A lower bound closer to the upper bound (closer to the line $y=x$) is better.
    {$\beta$-CROWN FSB} uses 3mins while SDP-FO needs 2 to 3 hours per point.}
    \label{fig:tightness}
\end{figure}
\vspace{-5pt}

\section{Related Work}
\vspace{-10pt}
Many early complete verifiers for neural networks relied on existing solvers such as MILP or SMT solvers~\citep{katz2017reluplex,ehlers2017formal,huang2017safety,dutta2018output,tjeng2017evaluating} and were limited to very small problem instances. 
Branch and bound (BaB) based method was proposed to better exploit the network structure using LP-based incomplete verifier for bounding and ReLU splits for branching~\citep{bunel2018unified,wang2018efficient,lu2019neural,botoeva2020efficient}. Besides branching on ReLU neurons, input domain branching was also considered in~\citep{wang2018formal,royo2019fast,anderson2019optimization} but limited by input dimensions~\citep{bunel2018unified}. 
Recently, a few approaches have been proposed to use efficient iterative solvers or bound propagation methods on GPUs without relying on LP solvers.
\citet{Bunel2020} decomposed the verification problem layer by layer, solved each layer in a closed form on GPUs, and used Lagrangian to enforce consistency between layers. However, their formulation only has the same power as LP and needs many iterations to converge. \citet{DePalma2021} used a dual-space verifier with a linear relaxation~\citep{anderson2020strong,tjandraatmadja2020convex} tighter than triangle LP relaxation, but in most settings the extra computational costs and difficulties for an efficient implementation offset its benefits (more discussions in section~\ref{app:gpu_comparison}). {A concurrent work BaDNB~\citep{de2021improved} proposed a new branching strategy, filtered smart branching (FSB), combined with Lagrangian decomposition to get better verification performance.}
\citet{xu2020automatic} used CROWN as a massively paralleled incomplete solver on GPUs for complete verification, but it cannot handle neuron split constraints, leading to suboptimal efficiency and high timeout rates.

For incomplete verification, \citet{salman2019convex} shows the inherent limitation of using per-neuron convex relaxations for verification problems. \citet{singh2019beyond} and \citet{muller2021precise} broke this barrier by considering constraints involving multiple ReLU neurons; \citet{tjandraatmadja2020convex} proposed to relax a linear layer with a ReLU neuron together using a strong mixed-integer programming formulation~\citep{anderson2019optimization}.
SDP based relaxations~\citep{raghunathan2018semidefinite,fazlyab2020safety,dvijotham2020efficient} typically produce tight bounds but with significantly higher cost. The most recent GPU based SDP verifier~\citep{dathathri2020enabling} is still relatively slow and can take 2 hours to verify a single image. In this work, we impose neuron split constraints using $\beta$-CROWN and combine it with branch and bound done in parallel on GPUs. Although for each subdomain in BaB, $\beta$-CROWN is still subject to the convex relaxation barrier, the efficiency of $\beta$-CROWN BaB allows it to quickly explore a very large number of subdomains and outperform existing convex barrier breaking incomplete verifiers under many scenarios in both runtime and tightness.
Additionally, another line of works train networks to enhance verified accuracy, typically using cheap incomplete verifiers at training time~\citep{wong2018provable,wang2018mixtrain,mirman2018differentiable,wong2018scaling,gowal2018effectiveness,mirman2018differentiable,zhang2019towards,balunovic2020adversarial,shi2021fast}. Traditionally only these verification-customized networks can have reasonable verified accuracy, while $\beta$-CROWN BaB can also give non-trivial verified accuracy on relatively large networks agnostic to verification.

\vspace{-5pt}
\section{Conclusion}
\label{sec:conclusion}
\vspace{-5pt}
We proposed $\beta$-CROWN, a new bound propagation method that can fully encode the neuron split constraints introduced in BaB, which clearly leads in both complete and incomplete verification settings. The success of $\beta$-CROWN comes from a few factors: (1) In Section~\ref{sec:beta_crown_primal}, we show that $\beta$-CROWN is an GPU-friendly bound propagation algorithm \emph{significantly faster than LP solvers}. (2) In Section~\ref{sec:algorithm_dual}, we show that $\beta$-CROWN is solving an equivalent problem of the LP verifier \emph{with neuron split constraints}. (3) In Section~\ref{sec:optimization_variable}, we show that $\beta$-CROWN can jointly optimize intermediate layer bounds and \emph{achieve tighter bounds than typical LP verifiers} using fixed intermediate layer bounds. 

\paragraph{Limitations.} Our verifier has several limitations which are commonly shared by most existing BaB-based complete verifiers. First, we focused on ReLU which can be split into two linear cases. For other non-piecewise linear activation functions, although it is still possible to conduct branch and bound, it is difficult to guarantee completeness. Second, we discussed only the norm perturbations for input domains. In practice, the threat model may involve complicated and nonconvex perturbation specifications. Third, although our GPU accelerated verifier outperforms existing ones, all BaB based verifiers, including ours, are still limited to relatively small models far from the ImageNet scale. Finally, we have only demonstrated robustness verification of image classification tasks, and generalizing it to give verification guarantees for other tasks such as robust deep reinforcement learning~\citep{pattanaik2017robust,zhang2020robust,zhang2021robust} is an interesting direction for future work.

\paragraph*{Societal Impact}

NNs have been used in an increasingly wide range of real-world applications and play an important role in artificial intelligence (AI). The trustworthiness and robustness of NNs have become crucial factors since AI plays an important role in modern society. $\beta$-CROWN is a strong neural network verifier which can be used to check certain properties of neural networks, which can be helpful for guaranteeing the robustness, correctness, and fairness of NNs in applications that can directly or indirectly impact human life. We believe our work has overall positive societal impacts, although it may potentially be misused to identify the weakness of NNs and guide attacks.

\section*{Acknowledgement}

This work is supported by NSF grant CNS18-01426; an ARL Young Investigator (YIP) award; an NSF CAREER award; a Google Faculty Fellowship; a Capital One Research Grant; and a J.P. Morgan Faculty Award; Air Force Research Laboratory under FA8750-18-2-0058; NSF IIS-1901527, NSF IIS-2008173 and NSF CAREER-2048280; and NSF CNS-1932351. Huan Zhang is supported by funding from the Bosch Center for Artificial Intelligence.

\bibliography{reference}

\onecolumn

\setcounter{section}{0}
\setcounter{figure}{0}
\setcounter{equation}{0}
\makeatletter 
\renewcommand{\thefigure}{A\@arabic\c@figure}
\makeatother
\setcounter{table}{0}
\renewcommand{\thetable}{A\arabic{table}}

\appendix

\section{Proofs for $\beta$-CROWN}
\label{ref:app_proof}

\subsection{Proofs for deriving $\beta$-CROWN using bound propagation}
\label{subsec:primal_proof}
Lemma~\ref{lemma:relax_single} is from part of the proof of the main theorem in~\citet{zhang2018efficient}. Here we present it separately to use it as an useful subprocedure for our later proofs.
\begin{customlemma}{\ref{lemma:relax_single}}[Relaxation of a ReLU layer in CROWN]
Given two vectors $w, v \in \R^d, \bl \leq v \leq \bu$ (element-wise), we have 
\[
w^\top \ReLU(v) \geq w^\top \D v + b^\prime, 
\]
where $\D$ is a diagonal matrix defined as:
\begin{equation}
\D_{j,j} = \begin{cases}
1, & \text{if $\bl_j \geq 0$} \\
0, & \text{if $\bu_j \leq 0$} \\
\alphavar_j, & \text{if $\bu_j > 0 > \bl_j$ and $w_j \geq 0$} \\
\frac{\bu_j}{\bu_j - \bl_j}, & \text{if $\bu_j > 0 > \bl_j$ and $w_j < 0$}, 
\end{cases}
\label{eq:relax_single_d}
\end{equation}
$0 \leq \alphavar_j \leq 1$ are free variables, $b^\prime = w^\top \lowerb$ and each element in $\lowerb$ is
\begin{equation}
\lowerb_j = \begin{cases}
0, & \text{if $\bl_j > 0$ or $\bu_j \leq 0$} \\
0, & \text{if $\bu_j > 0 > \bl_j$ and $w_j \geq 0$} \\
-\frac{\bu_j \bl_j}{\bu_j - \bl_j}, & \text{if $\bu_j > 0 > \bl_j$ and $w_j < 0$}.
\end{cases}
\label{eq:relax_single_b}
\end{equation}
\end{customlemma}

\begin{proof}
For the $j$-th ReLU neuron, if $\bl_j \geq 0$, then $\ReLU(v_j)=v_j$; if $\bu_j < 0$, then $\ReLU(v_j)=0$. For the case of $\bl_j < 0 < \bu_j$, the $\ReLU$ function can be linearly upper and lower bounded within this range:
\[
\alphavar_j v_j \leq \ReLU(v_j) \leq \frac{\bu_{j}}{\bu_{j} - \bl_{j}} \left (v_j - \bl_{j} \right ) \quad  \forall \ \bl_j \leq v_j \leq \bu_j
\]
where $0 \leq \alphavar_j \leq 1$ is a free variable - any value between 0 and 1 produces a valid lower bound. To lower bound $w^\top \ReLU(v) = \sum_j w_j \ReLU(v_j)$, for each term in this summation, we take the lower bound of $\ReLU(v_j)$ if $w_j$ is positive and take the upper bound of $\ReLU(v_j)$ if $w_j$ is negative (reflected in the definitions of $\D$ and $\lowerb$). This conservative choice allows us to always obtain a lower bound $\forall \ \bl \leq v \leq \bu$:
\[
\sum_j w_j \ReLU(v_j) \geq \sum_j w_j \left ( \D_{j,j} v_j + \lowerb_j \right ) =  w^\top \D v + w^\top \lowerb = w^\top \D v + b^\prime
\]
where $\D_{j,j}$ and $\lowerb_j$ are defined in \eqref{eq:relax_single_d} and \eqref{eq:relax_single_b} representing the lower or upper bounds of ReLU.
\end{proof}

Before proving our main theorem (Theorem~\ref{thm:beta_CROWN}), we first define matrix $\AA$, which is the product of a series of model weights $\W$ and ``weights'' for relaxed ReLU layers $\D$:

\begin{definition}
Given a set of matrices $\W{2}, \cdots, \W{L}$ and $\D{1}, \cdots, \D{L-1}$, we define a recursive function $\AA{k}{i}$ for $1 \leq i \leq k \leq L$ as
\[
\AA{i} = \bm{I}, \ \AA{k\!+\!1}{i} = \W{k+1} \D{k} \AA{k}{i}
\]
\end{definition}

For example, $\AA{3}{1}=\W{3}\D{2}\W{2}\D{1}$, $\AA{5}{2}=\W{5}\D{4}\W{4}\D{3}\W{3}\D{2}$.
Now we present our main theorem with each term explicitly written:

\begin{customthm}{\ref{thm:beta_CROWN}}[$\beta$-CROWN bound] 
\label{thm:beta_CROWN_full}
Given a $L$-layer neural network $f(x): \R^{d_0} \rightarrow \R$ with weights $\W{i}$, biases $\bias{i}$, pre-ReLU bounds $\bl{i} \leq \z{i} \leq \bu{i}$ ($1 \leq i \leq L$), input constraint $\gC$ and split constraint $\setz$. We have
\begin{equation}
    \min_{\substack{x \in \gC, z \in \setz}} f(x) \geq \max_{\betavar \geq 0} \min_{x \in \gC} (\aaa + \bbb \betavar)^\top x + \ccc^\top \betavar + \constc, 
\label{eq:beta_crown_linear_app}
\end{equation}
where $\bbb \in \R^{d_0 \times (\sum_{i=1}^{L-1} {d_i})}$ is a matrix containing blocks $\bbb:=\left [ \bbb{1}^\top \ \bbb{2}^\top \  \cdots \ \bbb{L-1}^\top \right ]$, $\ccc \in \R^{\sum_{i=1}^{L-1} d_i}$ is a vector $\ccc := \left [ \ccc{1}^\top \ \cdots \ \ccc{L-1}^\top \right ]^\top$, and each term is defined as:
\begin{equation}
    \aaa = \left [ \AA{L}{1} \W{1} \right ]^\top \in \R^{d_0 \times 1}
\end{equation}
\begin{equation}
    \bbb{i} = \S{i} \AA{i}{1} \W{1} \in \R^{d_i \times d_0}, \quad \forall \ 1 \leq i \leq L-1
\end{equation}
\begin{equation}
    \ccc{i} = \sum_{k=1}^{i} \S{i} \AA{i}{k} \bias{k} + \sum_{k=2}^{i} \S{i} \AA{i}{k} \W{k} \lowerb{k-1} \in \R^{d_i}, \quad \forall \ 1 \leq i \leq L-1
\end{equation}
\begin{equation}
    \constc = \sum_{i=1}^{L} \AA{L}{i} \bias{i} + \sum_{i=2}^{L} \AA{L}{i} \W{i} \lowerb{i-1}
\end{equation}
diagonal matrices $\D{i}$ and vector $\lowerb{i}$ are determined by the relaxation of ReLU neurons, and $\A{i} \in \R^{1 \times d_i}$ represents the linear relationship between $f(x)$ and $\hz{i}$. $\D{i}$ and $\lowerb{i}$ depend on $\A{i}$, $\bl{i}$ and $\bu{i}$:
\begin{equation}
    \D{i}{j,j} = \begin{cases}
1, & \text{if $\bl{i}{j} \geq 0$ or $j \in \setzp{i}$} \\
0, & \text{if $\bu{i}{j} \leq 0$ or $j \in \setzn{i}$}\\
\alphavar_j, & \text{if $\bu{i}{j} > 0 > \bl{i}{j}$ and $j \notin \setzp{i} \cup \setzn{i}$ and $\A{i}{1,j} \geq 0$} \\
\frac{\bu_j}{\bu_j - \bl_j}, & \text{if $\bu{i}{j} > 0 > \bl{i}{j}$ and $j \notin \setzp{i} \cup \setzn{i}$ and $\A{i}{1,j} < 0$}
\end{cases}
\label{eq:definition_D}
\end{equation}
\begin{equation}
\lowerb{i}{j} = \begin{cases}
0, & \text{if $\bl{i}{j} > 0$ or $\bu{i}{j} \leq 0$ or $j \in \setzp{i} \cup \setzn{i}$} \\
0, & \text{if $\bu{i}{j} > 0 > \bl{i}{j}$ and $j \notin \setzp{i} \cup \setzn{i}$ and $\A{i}{1,j} \geq 0$} \\
-\frac{\bu{i}{j} \bl{i}{j}}{\bu{i}{j} - \bl{i}{j}}, & \text{if $\bu{i}{j} > 0 > \bl{i}{j}$ and $j \notin \setzp{i} \cup \setzn{i}$ and $\A{i}{1,j}  < 0$}
\end{cases}
\label{eq:definition_lowerb}
\end{equation}
\begin{equation}
\A{i} = \begin{cases}
\W{L}, & i = L - 1 \\
(\A{i+1} \D{i+1} + \betavar{i+1}^\top \S{i+1}) \W{i+1}, & 0 \leq i \leq L - 2
\end{cases}
\label{eq:definition_A}
\end{equation}
\end{customthm}

\begin{proof}
We prove this theorem by induction: assuming we know the bounds with respect to layer $\hz{m}$, we derive bounds for $\hz{m-1}$ until we reach $m=0$ and by definition $\hz{0} = x$. We first define a set of matrices and vectors $\aaa{m}$, $\bbb{}{m}$, $\ccc{}{m}$, $\constc{m}$, where $\bbb{}{m} \in \R^{d_m \times (\sum_{i=m+1}^{L-1} {d_i})}$ is a matrix containing blocks $\bbb:=\left [ \bbb{m+1}{m}^\top \  \cdots \ \bbb{L-1}{m}^\top \right ]$, $\ccc \in \R^{\sum_{i=m+1}^{L-1} d_i}$ is a vector $\ccc := \left [ \ccc{m+1}{m}^\top \ \cdots \ \ccc{L-1}{m}^\top \right ]^\top$, and each term is defined as:

\begin{equation}
    \aaa{m} = \left [ \AA{L}{m+1} \W{m+1} \right ]^\top \in \R^{d_{m} \times 1}
    \label{eq:aaa_m}
\end{equation}
\begin{equation}
    \bbb{i}{m} = \S{i} \AA{i}{m+1} \W{m+1} \in \R^{d_i \times d_m}, \quad \forall \ m+1 \leq i \leq L-1
    \label{eq:bbb_m}
\end{equation}
\begin{equation}
    \ccc{i}{m} = \sum_{k=m+1}^{i} \S{i} \AA{ i}{k} \bias{k} + \sum_{k=m+2}^{i} \S{i} \AA{i}{k} \W{k} \lowerb {k-1} \in \R^{d_m}, \quad \forall \ m+1 \leq i \leq L-1
    \label{eq:ccc_m}
\end{equation}
\begin{equation}
    \constc{m} = \sum_{i=m+1}^{L} \AA{L}{i} \bias{i} + \sum_{i=m+2}^{L} \AA{L}{i} \W{i} \lowerb{i-1}
    \label{eq:constc_m}
\end{equation}
and we claim that 

\begin{equation}
    \min_{\substack{x \in \gC \\ z \in \setz}} f(x) \geq \max_{\tbetavar{m+1} \geq 0} \min_{\substack{x \in \gC \\ \tz{m} \in \tsetz{m}}} (\aaa{m} + \bbb{}{m} \tbetavar{m+1})^\top \hz{m} + \ccc{}{m}^\top \tbetavar{m+1} +
    \constc{m}
    \label{eq:proof_layer_m}
\end{equation}

where $\tbetavar{m+1} := \left [ \betavar{m+1}^\top \cdots \betavar{L-1}^\top \right ]^\top$ concatenating all $\betavar{i}$ variables up to layer $m+1$. 

For the base case $m = L-1$, we simply have
\[
\begin{split}
\min_{\substack{x \in \gC, z \in \setz}} f(x) = \min_{\substack{x \in \gC, z \in \setz}} \W{L} \hz{L-1} + \bias{L}.
\end{split}
\]
No maximization is needed and $\aaa{m}=\left[\AA{L}{L} \W{L}\right]^\top = {\W{L}}^\top$, $\constc{m}=\sum_{i=L}^{L} \AA{L}{i} \bias{i}=\bias{L}$. Other terms are zero.

In Section~\ref{sec:beta_crown_primal} we have shown the intuition of the proof by demonstrating how to derive the bounds from layer $\hz{L-1}$ to $\hz{L-2}$. The case for $m=L-2$ is presented in~\eqref{eq:layer_L_2}. 

Now we show the induction from $\hz{m}$ to $\hz{m-1}$. Starting from~\eqref{eq:proof_layer_m}, since $\hz{m} = \ReLU(\z{m})$ we apply Lemma~\ref{lemma:relax_single} by setting $w = \left[\aaa{m} +  \bbb{}{m} \tbetavar{m+1}\right]^\top := \A{m}$. It is easy to show that $\A{m}$ can also be equivalently and recursively defined in~\eqref{eq:definition_A} (see Lemma~\ref{lemma:connection}). Based on Lemma~\ref{lemma:relax_single} we have $\D{m}$ and $\lowerb{m}$ defined as in~\eqref{eq:definition_D} and~\eqref{eq:definition_lowerb}, so~\eqref{eq:proof_layer_m} becomes

\begin{equation}
\begin{split}
    \min_{\substack{x \in \gC \\ z \in \setz}} f(x) \geq \max_{\tbetavar{m+1} \geq 0} &\min_{\substack{x \in \gC \\ \tz{m} \in \tsetz{m}}} (\aaa{m} + \bbb{}{m}\tbetavar{m+1})^\top \D{m} \z{m}
    \\ &\quad+(\aaa{m} + \bbb{}{m}\tbetavar{m+1})^\top \lowerb{m} + \ccc{}{m}^\top \tbetavar{m+1}  + \constc{m}
\end{split}
\label{eq:proof_layer_m_next_2}
\end{equation}

Note that when we apply Lemma~\ref{lemma:relax_single}, for $j \in \setzp{i}$ (positive split) we simply treat the neuron $j$ as if $\bl{i}{j} \geq 0$, and for $j \in \setzn{i}$ (negative split) we simply treat the neuron $j$ as if $\bu{i}{j} \leq 0$. 
Now we add the multiplier $\betavar{m}$ to $\z{m}$ to enforce per-neuron split constraints:

\begin{alignat*}{2}
    \min_{\substack{x \in \gC \\ z \in \setz}} f(x) & \geq \max_{\tbetavar{m+1} \geq 0} \min_{\substack{x \in \gC \\ \tz{m-1} \in \tsetz{m-1}}} \max_{\betavar{m} \geq 0} & (\aaa{m} + \bbb{}{m}\tbetavar{m+1})^\top \D{m} \z{m} + \betavar{m}^\top \S{m} \z{m} \\ 
    & &+ (\aaa{m} + \bbb{}{m}\tbetavar{m+1})^\top \lowerb{m} +  \ccc{}{m}^\top \tbetavar{m+1} + \constc{m}  \\
    & \geq \max_{\tbetavar{m} \geq 0} \min_{\substack{x \in \gC \\ \tz{m-1} \in \tsetz{m-1}}} & (\aaa{m}^\top\D{m} + \tbetavar{m+1}^\top {\bbb{}{m}}^\top\D{m} + \betavar{m}^\top \S{m}) \z{m} \\ 
    & &+ (\aaa{m} + \bbb{}{m}\tbetavar{m+1})^\top \lowerb{m} + \ccc{}{m}^\top \tbetavar{m+1} + \constc{m}
\end{alignat*}

Similar to what we did in~\eqref{eq:layer_activation}, we swap the $\min$ and $\max$ in the second inequality due to weak duality, such that every maximization on $\betavar{i}$ is before $\min$. Then, we substitute $\hz{m}=\W{m}\hz{m-1} + \bias{m}$ and obtain:

\begin{alignat*}{2}\small
    \min_{\substack{x \in \gC \\ z \in \setz}} f(x) & \geq \max_{\tbetavar{m} \geq 0} \min_{\substack{x \in \gC \\ \tz{m-1} \in \tsetz{m-1}}} (\aaa{m}^\top\D{m} + \tbetavar{m+1}^\top {\bbb{}{m}}^\top\D{m} + \betavar{m}^\top \S{m})^\top \W{m} \hz{m-1} \\ 
    &\quad+ (\aaa{m}^\top\D{m} + \tbetavar{m+1}^\top {\bbb{}{m}}^\top\D{m} + \betavar{m}^\top \S{m})^\top\bias{m} \\ &\quad+ (\aaa{m} + \bbb{}{m}\tbetavar{m+1})^\top \lowerb{m} + \ccc{}{m}^\top \tbetavar{m+1} + \constc{m} \\
    & = \left [ \underbrace{\left[\aaa{m}^\top\D{m}\W{m}\right]^\top}_{\aaa^\prime} + \underbrace{(\tbetavar{m+1}^\top {\bbb{}{m}}^\top\D{m}\W{m} + \betavar{m}^\top \S{m}\W{m})}_{\bbb^\prime\tbetavar{m}} \right ]^\top \hz{m-1} \\
    &\quad+ \underbrace{\left ( ({\bbb{}{m}}^\top\D{m} \bias{m} + {\bbb{}{m}}^\top \lowerb{m} +  \ccc{}{m})^\top \tbetavar{m+1} + (\S{m} \bias{m})^\top \betavar{m} \right )}_{{\ccc^\prime}^\top \tbetavar{m}} \\ &\quad+ \underbrace{{\aaa{m}}^\top\D{m} \bias{m} + \aaa{m}^\top \lowerb{m} + \constc{m}}_{\constc^\prime}
\end{alignat*}
\normalsize

Now we evaluate each term $\aaa^\prime$, $\bbb^\prime$, $\ccc^\prime$ and $\constc^\prime$ and show the induction holds. For $\aaa^\prime$ and $\ccc^\prime$ we have:

\[
\aaa^\prime = \left[\aaa{m}^\top\D{m}\W{m}\right]^\top = \left[\AA{L}{m+1} \W{m+1} \D{m}\W{m}\right]^\top = \left[\AA{L}{m} \W{m}\right]^\top = \aaa{m-1}
\]
\begin{align*}
\constc^\prime &= \constc{m} + \AA{L}{m+1} \W{m+1} \D{m}\bias{m} + \AA{L}{m+1} \W{m+1} \lowerb{m} \\
&= \sum_{i=m+1}^{L} \AA{L}{i} \bias{i} + \sum_{i=m+2}^{L} \AA{L}{i} \W{i} \lowerb{i-1} + \AA{L}{m}\bias{m} + \AA{L}{m+1} \W{m+1} \lowerb{m}\\
&= \sum_{i=m}^{L} \AA{L}{i} \bias{i} + \sum_{i=m+1}^{L} \AA{L}{i} \W{i} \lowerb{i-1} \\
&= \constc{m-1}
\end{align*}

For $\bbb^\prime := \left [ \bbb^\prime_{m}{}^\top \  \cdots \ \bbb^\prime_{L-1}{}^\top \right ]$, we have a new block $\bbb^\prime_{m}$ where
\[
\bbb^\prime_{m} = \S{m}\W{m} = \S{m} \AA{m}{m} \W{m} = \bbb{m}{m-1}
\]
for other blocks where $m+1 \leq i \leq L-1$,
\[
\bbb^\prime_{i} = \bbb{i}{m}\D{m}\W{m} = \S{i} \AA{i}{m+1} \W{m+1}\D{m}\W{m} = \S{i}\AA{i}{m}\W{m} = \bbb{i}{m-1}
\]

For $\ccc^\prime := \left [ \ccc^\prime_{m}{}^\top \  \cdots \ \ccc^\prime_{L-1}{}^\top \right ]$, we have a new block $\ccc^\prime_{m}$ where
\[
\ccc^\prime_{m} = \S{m}\bias{m} = \sum_{k=m}^{m} \S{i} \AA{i}{k} \bias{i} = \ccc{m}{m-1}
\]
for other blocks where $m+1 \leq i \leq L-1$,
\begin{align*}\small
\ccc^\prime_{i} &= \ccc{i}{m} + {\bbb{}{m}}^\top\D{m} \bias{m} + {\bbb{}{m}}^\top \lowerb{m} \\
&= \sum_{k=m+1}^{i} \S{i} \AA{ i}{k} \bias{k} + \sum_{k=m+2}^{i} \S{i} \AA{i}{k} \W{k} \lowerb {k-1} +  \bbb{}{m}^\top\D{m} \bias{m} + \bbb{}{m}^\top \lowerb{m} \\
&= \sum_{k=m+1}^{i} \S{i} \AA{ i}{k} \bias{k} + \sum_{k=m+2}^{i} \S{i} \AA{i}{k} \W{k} \lowerb {k-1} \\ &\quad+  \S{i} \AA{i}{m+1} \W{m+1}\D{m} \bias{m} + \S{i} \AA{i}{m+1} \W{m+1} \lowerb{k} \\
&= \sum_{k=m+1}^{i} \S{i} \AA{ i}{k} \bias{k} + \sum_{k=m+2}^{i} \S{i} \AA{i}{k} \W{k} \lowerb {k-1} +  \S{i} \AA{i}{m} \bias{m} \\ &\quad+ \S{i} \AA{i}{m+1} \W{m+1} \lowerb{m} \\
&= \sum_{k=m}^{i} \S{i} \AA{ i}{k} \bias{k} + \sum_{k=m+1}^{i} \S{i} \AA{i}{k} \W{k} \lowerb {k-1} \\
&= \ccc{i}{m-1}
\end{align*}

Thus, $\aaa^\prime = \aaa{m-1}$, $\bbb^\prime = \bbb{}{m-1}$, $\ccc^\prime = \ccc{}{m-1}$ and $\constc^\prime = \constc{m-1}$ so the induction holds for layer $\hz{m-1}$:
\begin{equation}
    \min_{\substack{x \in \gC \\ z \in \setz}} f(x) \geq \max_{\tbetavar{m} \geq 0} \min_{\substack{x \in \gC \\ \tz{m-1} \in \tsetz{m-1}}} (\aaa{m-1} + \bbb{}{m-1}\tbetavar{m})^\top \hz{m-1} + \ccc{}{m-1}^\top \tbetavar{m}+
    \constc{m-1}
    \label{eq:proof_layer_m_1}
\end{equation}

Finally, Theorem~\ref{thm:beta_CROWN} becomes the special case where $m = 0$ in~\eqref{eq:aaa_m}, \eqref{eq:bbb_m}, \eqref{eq:ccc_m} and \eqref{eq:constc_m}.
\end{proof}

The next Lemma unveils the connection with CROWN~\citep{zhang2018efficient} and is also useful for drawing connections to the dual problem.

\begin{lemma}
\label{lemma:connection}
With $\D$, $\lowerb$ and $\A$ defined in~\eqref{eq:definition_D}, \eqref{eq:definition_lowerb} and \eqref{eq:definition_A}, we can rewrite~\eqref{eq:beta_crown_linear_app} in Theorem~\ref{thm:beta_CROWN} as:
\begin{equation}
\label{eq:eq_a}
\min_{\substack{x \in \gC \\ z \in \setz}} f(x) \geq \max_{\betavar \geq 0} \min_{x \in \gC} \A{0} x + \sum_{i=1}^{L-1} \A{i} (\D{i} \bias{i} + \lowerb{i})
\end{equation}
where $\A{i}$, $0 \leq i \leq L-1$ contains variables $\betavar$.
\end{lemma}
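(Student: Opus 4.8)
The plan is to prove Lemma~\ref{lemma:connection} by the same backward induction over layers already used in the proof of Theorem~\ref{thm:beta_CROWN}, but keeping the constant terms in their un-expanded, recursive form rather than collapsing them into the explicit blocks $\ccc$ and $\constc$. The central observation is that the matrix $\A{m}$ defined recursively in~\eqref{eq:definition_A} is exactly the linear coefficient of $\hz{m}$ appearing at the $m$-th stage of back-substitution, i.e.\ $\A{m} = (\aaa{m} + \bbb{}{m}\tbetavar{m+1})^\top$. This identity is in fact already established inside the proof of Theorem~\ref{thm:beta_CROWN}: the induction step there sets $w = (\aaa{m} + \bbb{}{m}\tbetavar{m+1})^\top := \A{m}$ before invoking Lemma~\ref{lemma:relax_single}, and then verifies $\A{m-1} = (\aaa{m-1} + \bbb{}{m-1}\tbetavar{m})^\top$. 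Taking $m=0$ gives $\A{0} = (\aaa + \bbb\betavar)^\top$, so the coefficient of $x = \hz{0}$ matches the $x$-term of~\eqref{eq:beta_crown_linear_app} immediately.

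It then remains to match the constant (the part independent of $x$). Here I would carry a running scalar $C_m$ defined by $f(x) \geq \A{m}\hz{m} + C_m$, with base case $C_{L-1} = \bias{L}$, and track how it evolves. Relaxing $\hz{m}$ via Lemma~\ref{lemma:relax_single} contributes $\A{m}\lowerb{m}$; substituting $\z{m} = \W{m}\hz{m-1} + \bias{m}$ into both $\A{m}\D{m}\z{m}$ and the Lagrangian term $\betavar{m}^\top\S{m}\z{m}$ contributes $\A{m}\D{m}\bias{m}$ and $\betavar{m}^\top\S{m}\bias{m}$ respectively. This yields the one-step recursion $C_{m-1} = C_m + \A{m}(\D{m}\bias{m} + \lowerb{m}) + \betavar{m}^\top\S{m}\bias{m}$, whose telescoped form produces the sum $\sum_{i=1}^{L-1}\A{i}(\D{i}\bias{i} + \lowerb{i})$ of~\eqref{eq:eq_a}, together with the last-layer bias $\bias{L}$ and the split-Lagrangian biases $\sum_{i}\betavar{i}^\top\S{i}\bias{i}$ that the substitution also generates.

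Finally, I would confirm consistency with Theorem~\ref{thm:beta_CROWN} by expanding $\A{i} = (\aaa{i} + \bbb{}{i}\tbetavar{i+1})^\top$ inside this telescoped constant and checking that it reproduces $\ccc^\top\betavar + \constc$. The workhorse for this reorganization is the product identity $\AA{k}{i+1}\W{i+1}\D{i} = \AA{k}{i}$, immediate from the recursive definition $\AA{k+1}{i} = \W{k+1}\D{k}\AA{k}{i}$, which lets the $\aaa{i}^\top(\D{i}\bias{i}+\lowerb{i})$ pieces telescope into $\constc = \sum_{i}\AA{L}{i}\bias{i} + \sum_{i}\AA{L}{i}\W{i}\lowerb{i-1}$. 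The main obstacle is the bookkeeping for the $\tbetavar{i+1}$-dependent pieces: after distributing $\bbb{}{i}^\top(\D{i}\bias{i}+\lowerb{i})$ one obtains a double sum over layer pairs $(i,k)$ with $k > i$, and swapping the order of summation so that $k$ becomes the outer index carrying $\betavar{k}$ is precisely what reassembles the block structure of $\ccc{k}$. Keeping the index ranges and the $\AA{k}{\cdot}$ subscripts aligned through this swap, and reconciling the residual $\bias{L}$ and $\betavar{i}^\top\S{i}\bias{i}$ constants with $\constc$ and $\ccc^\top\betavar$, is where the care is required rather than in any single algebraic step.
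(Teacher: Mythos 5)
Your proposal is correct, and it is more thorough than the paper's own proof. The paper works in the opposite direction on the leading coefficient: it unrolls the recursion \eqref{eq:definition_A} to get $\A{0} = \AA{L}{1}\W{1} + \sum_{i=1}^{L-1}\betavar{i}^\top\S{i}\AA{i}{1}\W{1} = \left[\aaa+\bbb\betavar\right]^\top$, and then dismisses the rest with ``other terms can be shown similarly''; you instead inherit the identity $\A{m} = (\aaa{m}+\bbb{}{m}\tbetavar{m+1})^\top$ from the induction inside Theorem~\ref{thm:beta_CROWN} (legitimate rather than circular, since that induction manipulates only the explicit quantities and shows they satisfy \eqref{eq:definition_A} with base case $\W{L}$), and you spend the effort on the constants, which is precisely the part the paper skips. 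That bookkeeping surfaces a real discrepancy: your telescoped constant is $\sum_{i=1}^{L-1}\A{i}(\D{i}\bias{i}+\lowerb{i}) + \sum_{i=1}^{L-1}\betavar{i}^\top\S{i}\bias{i} + \bias{L}$, and it is this full expression, not the bare sum displayed in \eqref{eq:eq_a}, that equals $\ccc^\top\betavar + \constc$ from Theorem~\ref{thm:beta_CROWN}. (Sanity check at $L=2$: back-substitution gives $\A{0}x + \W{2}(\D{1}\bias{1}+\lowerb{1}) + \betavar{1}^\top\S{1}\bias{1} + \bias{2}$, whereas the display in \eqref{eq:eq_a} keeps only the first two of these terms.) So the lemma as printed silently drops $\bias{L}$ and the $\betavar{i}^\top\S{i}\bias{i}$ terms---harmless for how the lemma is used, but your version, with the summand written as $(\A{i}\D{i}+\betavar{i}^\top\S{i})\bias{i}+\A{i}\lowerb{i}$ plus $\bias{L}$, is the correct statement. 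Your final reconciliation step---expanding $\A{i}$, using $\AA{k}{i+1}\W{i+1}\D{i}=\AA{k}{i}$, and swapping the double sum so that $\betavar{k}$ carries the outer index---is exactly the computation hiding behind the paper's ``similarly,'' and it does reproduce the blocks $\ccc{k}$ and $\constc$.
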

\begin{proof}
To prove this lemma, we simply follow the definition of $\A{i}$ and check the resulting terms are the same as~\eqref{eq:beta_crown_linear_app}. For example, 

\begin{align*}
\A{0} &= (\A{1} \D{1} + \betavar{1}^\top \S{1}) \W{1} \\
&=\A{1} \D{1} \W{1} + \betavar{1}^\top \S{1} \W{1} \\
&= (\A{2} \D{2} + \betavar{2}^\top \S{2}) \W{2} \D{1} \W{1} + \betavar{1}^\top \S{1} \W{1} \\
&= \A{2} \D{2} \W{2} \D{1} \W{1} + \betavar{2}^\top \S{2} \W{2} \D{1} \W{1} + \betavar{1}^\top \S{1} \W{1} \\
&= \cdots \\
&= \AA{L}{1} \W{1} + \sum_{i=1}^{L-1} \betavar{i}^\top \S{i} \AA{i}{1} \W{1}\\
&= \left[\aaa + \bbb\betavar \right]^\top
\end{align*}
Other terms can be shown similarly.

\end{proof}

With this definition of $\A$, we can see~\eqref{eq:beta_crown_linear_app} as a modified form of CROWN, with an extra term $\betavar{i+1}^\top \S{i+1}$ added when computing $\A{i}$. When we set $\betavar=0$, we obtain the same bound propagation rule for $\A$ as in CROWN. Thus, only a small change is needed to implement $\beta$-CROWN given an existing CROWN implementation: we add $\betavar{i+1}^\top \S{i+1}$ after the linear bound propagates backwards through a ReLU layer. We also have the same observation in the dual space, as we will show this connection in the next subsection.

\subsection{Proofs for the connection to the dual space}

\begin{customthm}{\ref{thm:objective_dual}}
The objective $d_{\text{LP}}$ for the dual problem of~\eqref{eq:lp_verify_split} can be represented as
\[
    d_{\text{LP}} = -\|\aaa + \bbb\betavar\|_1 \cdot \epsilon + (\bbb^\top x_0 + \ccc)^\top\betavar + \aaa^\top x_0 + c, 
\]
where $\aaa$, $\bbb$, $\ccc$ and $c$ are defined in the same way as in Theorem~\ref{thm:beta_CROWN}, and $\betavar \geq 0$ corresponds to the dual variables of neuron split constraints in~\eqref{eq:lp_verify_split}.
\end{customthm}

\begin{proof}
To prove the Theorem~\ref{thm:objective_dual}, we demonstrate the detailed dual objective $d_{\text{LP}}$ for~\eqref{eq:lp_verify_split}, following a construction similar to the one in ~\citet{wong2018provable}. We first associate a dual variable for each constraint involved in~\eqref{eq:lp_verify_split} including dual variables $\betavar$ for the per-neuron split constraints introduced by BaB. Although it is possible to directly write the dual LP for~\eqref{eq:lp_verify_split}, for easier understanding, we first rewrite the original primal verification problem into its Lagrangian dual form as~\eqref{eq:beta_dual}, with dual variables $\nuvar, \xivar^+, \xivar^- \muvar, \gammavar, \lambdavar, \betavar$:

\begin{equation}
    \begin{aligned}\relax
    &L(\z,\hz;\nuvar, \xivar, \muvar,\gammavar,\lambdavar,\betavar) = \z{L}+ \sum_{i=1}^{L}{\nuvar{i}}^\top(\z{i}-\W{i}\hz{i-1}-\bias{i})\\
    &\quad + {\xivarp}^\top (\hz{0}-x_0-\epsilon)+{\xivarn}^\top (-\hz{0}+x_0-\epsilon)\\
    &\quad+ \sum_{i=1}^{L-1}\sum_{\substack{j \notin  \setzp{i}\bigcup\setzn{i} \\ \bl{i}{j}<0<\bu{i}{j}}}  \left[
    {\muvar{i}{j}}^\top (-\hz{i}{j})+ 
    {\gammavar{i}{j}}^\top (\z{i}{j}-\hz{i}{j}) +
    {\lambdavar{i}{j}}^\top (-\bu{i}{j}\z{i}{j}+(\bu{i}{j}-\bl{i}{j})\hz{i}{j}+\bu{i}{j}\bl{i}{j})
    \right]\\
    &\quad+ \sum_{i=1}^{L-1} \left[ \sum_{\z{i}{j}\in\setzn{i}} \betavar^{(i)}_j \z{i}{j} + \sum_{\z{i}{j}\in\setzp{i}} -\betavar^{(i)}_j  \z{i}{j}\right]\\
    & \text{Subject to:}\\
    & \quad \xivar^+\geq 0, \xivar^-\geq 0, \muvar\geq 0, \gammavar\geq 0, \lambdavar\geq 0, \betavar \geq 0
    \end{aligned}
    \label{eq:beta_dual}
\end{equation}
The original minimization problem then becomes:
\[
\max_{\nuvar, \xivar^+, \xivar^-, \muvar,\gammavar,\lambdavar,\betavar}\min_{\z, \hz}L(\z,\hz,\nuvar, \xivar^+, \xivar^-, \muvar,\gammavar,\lambdavar, \betavar)
\]
Given fixed intermediate bounds $\bl, \bu$, the inner minimization is a linear optimization problem and we can simply transfer it to the dual form. To further simplify the formula, we introduce notations similar to those in~\citep{wong2018provable}, where $\hnuvar{i-1}={\W{i}}^\top\nuvar{i}$ and $\alphavar{i}{j}=\frac{\gammavar{i}{j}}{\muvar{i}{j}+\gammavar{i}{j}}$. Then the dual form can be written as~\eqref{eq:final_beta_dual}.

\begin{equation}
    \begin{aligned}\relax
    &\max_{0\leq\alphavar\leq1, \betavar\geq0} g(\alphavar, \betavar), \text{ where }\\
    & g(\alphavar, \betavar) = -\sum_{i=1}^{L}\nuvar{i}^\top\bias{i}-\hnuvar{0}^\top x_0-||\hnuvar{0}||_1\cdot\epsilon + \sum_{i=1}^{L-1}\sum_{\substack{j \notin  \setzp{i}\bigcup\setzn{i}\\\bl{i}{j}<0<\bu{i}{j}}} \bl{i}{j}[\nuvar{i}{j}]^+\\
    & \text{Subject to:}\\
    & \quad \nuvar{L}=-1, \hnuvar{i-1}={\W{i}}^\top\nuvar{i}, \quad i\in\{1,\dots, L\} \\
    & \quad \nuvar{i}{j}=0, \quad \text{when } \bu{i}{j}\leq0, i\in\{1,\dots, L-1\}\\
    & \quad \nuvar{i}{j}=\hnuvar{i}{j}, \quad \text{when } \bl{i}{j}\geq0, i\in\{1,\dots, L-1\}\\
    & \begin{rcases}
    & [\nuvar{i}{j}]^+=\bu{i}{j}\lambdavar{i}{j}, [\nuvar{i}{j}]^-=\alphavar{i}{j}[\hnuvar{i}{j}]^-\\ & \lambdavar{i}{j}=\frac{[\hnuvar{i}{j}]^+}{\bu{i}{j}-\bl{i}{j}}, \alphavar{i}{j}=\frac{\gammavar{i}{j}}{\muvar{i}{j}+\gammavar{i}{j}} 
    \end{rcases}
    \text{when } \bl{i}{j}<0<\bu{i}{j},  j \notin  \setzp{i}\bigcup\setzn{i}, i\in\{1,\dots, L-1\} \\
    & \quad {\color{blue} \nuvar{i}{j}=-\betavar{i}{j}, \quad j\in\setzn{i}, i\in\{1,\dots, L-1\}}\\
    & \quad {\color{blue} \nuvar{i}{j}= \betavar{i}{j}+\hnuvar{i}{j}, \quad j\in\setzp{i}, i\in\{1,\dots, L-1\}} \\
    & \quad \muvar\geq 0, \gammavar\geq 0, \lambdavar\geq 0, \betavar \geq 0, 0\leq \alphavar\leq1
    \end{aligned}
\label{eq:final_beta_dual}
\end{equation}

Similar to the dual form in~\cite{wong2018provable} (our differences are highlighted in \textcolor{blue}{blue}), the dual problem can be viewed in the form of another deep network by backward propagating $\nuvar{L}$ to $\hnuvar{0}$ following the rules in~\eqref{eq:final_beta_dual}. If we look closely at the conditions and coefficients when backward propagating $\nuvar{i}{j}$ for $j$-th ReLU at layer $i$ in~\eqref{eq:final_beta_dual}, we can observe that they match exactly to the propagation of diagonal matrices $\D{i}$, $\S{i}$, and vector $\lowerb{i}$ defined in~\eqref{eq:definition_D} and~\eqref{eq:definition_lowerb}. Therefore, using notations in~\eqref{eq:definition_D} and~\eqref{eq:definition_lowerb} we can essentially simplify the dual LP problem in~\eqref{eq:final_beta_dual} to:
\begin{equation}
\begin{aligned}
\nuvar{L}=-1,
\hnuvar{i-1}={\W{i}}^\top\nuvar{i}, \nuvar{i}=\D{i}\hnuvar{i}-\betavar{i}\S{i},
i\in\{L,\cdots,1\}\\
\sum_{\substack{\bl{i}{j}<0<\bu{i}{j}\\ j \notin  \setzp{i}\bigcup\setzn{i}}} \bl{i}{j}[\nuvar{i}{j}]^+=-\hnuvar{i}^T\lowerb{i}, j\in\{1,\cdots,d_i\}, i\in\{L-1,\cdots,1\}
\end{aligned}
\label{eq:nu_eq}
\end{equation}

Then we prove the key claim for this proof with induction where $\aaa{m}$ and $\bbb{}{m}$ are defined in~\eqref{eq:aaa_m} and~\eqref{eq:bbb_m}:

\begin{equation}
    \hnuvar{m} = -\aaa{m}-\bbb{}{m}\tbetavar{m+1}
    \label{eq:nu_claim}
\end{equation}

When $m=L-1$, we can have $\hnuvar{L-1} = -\aaa{L-1}-\bbb{}{L-1}\tbetavar{L}=-\left[\AA{L}{L}\W{L}\right]^\top-\bm{0}=-{\W{L}}^\top$ which is true according to~\eqref{eq:nu_eq}.

Now we assume that $\hnuvar{m} = -\aaa{m}-\bbb{}{m}\tbetavar{m+1}$ holds, and we show that $\hnuvar{m-1} = -\aaa{m-1}-\bbb{}{m-1}\tbetavar{m}$ will hold as well:
\begin{align*}
\hnuvar{m-1}&={\W{m}}^\top\left(\D{m}\hnuvar{m}-\betavar{m}\S{m}\right)\\
&=-{\W{m}}^\top\D{m}\aaa{m}-{\W{m}}^\top\D{m}\bbb{}{m}\tbetavar{m+1}-{\W{m}}^\top\betavar{m}\S{m}\\
&=-\aaa{m-1}-\left[\left(\S{m}\W{m}\right)^\top, \left(\bbb{}{m}^\top\D{m}\W{m}\right)^\top\right]\left[\betavar{m}, \tbetavar{m+1}\right]\\
&=-\aaa{m-1}-\bbb{}{m-1}\tbetavar{m}
\end{align*}

Therefore, the claim~\eqref{eq:nu_claim} is proved with induction. Lastly, we prove the following claim where $\ccc{}{m}$ and $\constc{m}$ are defined in~\eqref{eq:ccc_m} and~\eqref{eq:constc_m}.

\begin{equation}
    -\sum_{i=m+1}^{L}\nuvar{i}^\top\bias{i}+ \sum_{i=m+1}^{L-1}\sum_{\substack{\bl{i}{j}<0<\bu{i}{j}\\ j \notin  \setzp{i}\bigcup\setzn{i}}} \bl{i}{j}[\nuvar{i}{j}]^+=\ccc{}{m}^\top\tbetavar{m+1}+\constc{m}
    \label{eq:const_claim}
\end{equation}

This claim can be proved by applying~\eqref{eq:nu_eq} and~\eqref{eq:nu_claim}.
\begin{align*}
\small
    -\sum_{i=m+1}^{L}&\nuvar{i}^\top\bias{i}+ \sum_{i=m+1}^{L-1}\sum_{\substack{\bl{i}{j}<0<\bu{i}{j}\\ j \notin  \setzp{i}\bigcup\setzn{i}}} \bl{i}{j}[\nuvar{i}{j}]^+ \\ &= -\sum_{i=m+1}^{L}\left(\D{i}\hnuvar{i}-\betavar{i}\S{i}\right)^\top\bias{i}+\sum_{i=m+2}^{L}\left(-\hnuvar{i-1}^T\lowerb{i-1}\right)\\
    & = \sum_{i=m+1}^{L}\left[\left(\aaa{i}^\top+\tbetavar{i+1}^\top\bbb{}{i}^\top\right)\D{i}\bias{i}+{\betavar{i}}^\top\S{i}\bias{i}\right]\\ &\quad+\sum_{i=m+2}^{L}\left(\aaa{i-1}^\top+\tbetavar{i}^\top{\bbb{}{i-1}}^\top\right)\lowerb{i-1}\\
    & = \sum_{i=m+1}^{L}\tbetavar{i}^\top\left[\S{i}, \bbb{}{i}^\top\D{i}\right]\bias{i}+\sum_{i=m+2}^{L}\tbetavar{i}^\top\bbb{}{i-1}^\top\lowerb{i-1} \\ &\quad+\sum_{i=m+1}^{L}\aaa{i}^\top\D{i}\bias{i}+\sum_{i=m+2}^{L}\aaa{i-1}^\top\lowerb{i-1}\\
    & = \ccc{}{m}^\top \tbetavar{m+1}  + \constc{m}
\end{align*}

Finally, we apply claims~\eqref{eq:nu_claim} and~\eqref{eq:const_claim} into the dual form solution~\eqref{eq:final_beta_dual} and prove the Theorem~\ref{thm:objective_dual}.

\begin{align*}
\small
g(\alphavar, \betavar) &= -\sum_{i=1}^{L}\nuvar{i}^\top\bias{i}-\hnuvar{0}^\top x_0-||\hnuvar{0}||_1\cdot\epsilon + \sum_{i=1}^{L-1}\sum_{\substack{\bl{i}{j}<0<\bu{i}{j}\\ j \notin  \setzp{i}\bigcup\setzn{i}}} \bl{i}{j}[\nuvar{i}{j}]^+\\
&= -||-\aaa{0}-\bbb{}{0}\tbetavar{1}||_1\cdot\epsilon+\left(\aaa{0}^\top+\tbetavar{1}^\top\bbb{}{0}^\top\right)x_0+ \ccc{}{0}^\top\tbetavar{1}  + \constc{0}\\
&= -||\aaa+\bbb\tbetavar{1}||_1\cdot\epsilon+\left(\bbb^\top x_0+\ccc\right)^\top\tbetavar{1} +\aaa^\top x_0 + \constc
\end{align*}

\textbf{A more intuitive proof.} Here we provide another intuitive proof showing why the dual form solution of verification objective in~\eqref{eq:final_beta_dual} is the same as the primal one in Thereom~\ref{thm:beta_CROWN}. $d_{\text{LP}}=g(\alphavar, \betavar)$ is the dual objective for~\eqref{eq:lp_verify_split} with free variables $\alphavar$ and $\betavar$.
We want to show that the dual problem can be viewed in the form of backward propagating $\nuvar{L}$ to $\hnuvar{0}$ following the same rules in $\beta$-CROWN. 
\citet{salman2019convex} showed that CROWN computes the same solution as the dual form in~\citet{wong2018provable}: $\hnuvar{i}$ is corresponding to $-\A{i}$ in CROWN (defined in the same way as in~\eqref{eq:definition_A} but with $\betavar{i+1}=0$) and $\nuvar{i}$ is corresponding to $-\A{i+1}\D{i+1}$. When the split constraints are introduced, extra terms for the dual variable $\betavar$ modify $\nuvar{i}$ (highlighted in \textcolor{blue}{blue} in~\eqref{eq:final_beta_dual}). The way $\beta$-CROWN modifies $\A{i+1}\D{i+1}$ is exactly the same as the way $\betavar{i}$ affects $\nuvar{i}$: when we split $\z{i}{j} \geq 0$, we add $\betavar{i}{j}$ to the $\nuvar{i}{j}$ in~\citet{wong2018provable}; when we split $\z{i}{j} \geq 0$, we add $-\betavar{i}{j}$ to the $\nuvar{i}{j}$ in~\citet{wong2018provable} ($\nuvar{i}{j}$ is 0 in this case because it is set to be inactive). To make this relationship more clear, we define a new variable $\nuvar^\prime$, and rewrite relevant terms involving $\nuvar$, $\hnuvar$ below:

\begin{equation}
\begin{aligned}
    & \nuvar{i}{j}=0 , \quad j\in\setzn{i};\\
    & \nuvar{i}{j}=\hnuvar{i}{j}, \quad j\in\setzp{i};\\
    & \nuvar{i}{j} \text{ is defined in the same way as in~\eqref{eq:final_beta_dual} for other cases}\\
    & \nuvar{i}{j}'=-\betavar{i}{j}+\nuvar{i}{j}, \quad j\in\setzn{i};\\
    & \nuvar{i}{j}'= \betavar{i}{j}+\nuvar{i}{j}, \quad j\in\setzp{i};\\
    & \nuvar{i}{j}'= \nuvar{i}{j}, \quad \text{otherwise}\\
    & \hnuvar{i-1}={\W{i}}^\top\nuvar{i}^\prime;\\
\end{aligned}
\end{equation}

It is clear that $\nuvar^\prime$ corresponds to the term $-(\A{i+1} \D{i+1} + \betavar{i+1}^\top \S{i+1})$ in~\eqref{eq:definition_A}, by noting that $\nuvar{i}$ in~\citep{wong2018provable} is equivalent to $-\A{i+1} \D{i+1}$ in CROWN and the choice of signs in $\S{i+1}$ reflects neuron split constraints. Thus, the dual formulation will produce the same results as~\eqref{eq:eq_a}, and thus also equivalent to~\eqref{eq:beta_crown_linear_app}.

\end{proof}

\begin{customcorollary}{\ref{corollary:optimal_alpha_beta}}
When $\alphavar$ and $\betavar$ are optimally set, $\beta$-CROWN produces the same solution as LP with split constraints when intermediate bounds $\bl, \bu$ are fixed. Formally, 
\vspace{-1mm}
\[
\max_{0 \leq \alphavar \leq 1, \betavar \geq 0} g(\alphavar, \betavar) = p^*_{\text{LP}}
\vspace{-1mm}
\]
where $p^*_{\text{LP}}$ is the optimal objective of~\eqref{eq:lp_verify_split}.
\end{customcorollary}

\begin{proof}
Given fixed intermediate layer bounds $\bl$ and $\bu$, the dual form of the verification problem in~\eqref{eq:lp_verify_split} is a linear programming problem with dual variables defined in~\eqref{eq:beta_dual}. Suppose we use an LP solver to obtain the optimal dual solution $\nuvar^*, \xivar^*, \muvar^*,\gammavar^*,\lambdavar^*, \betavar^*$. Then we can set $\alphavar{i}{j} = \frac{\gammavar{i}{j}{}^*}{\muvar{i}{j}{}^*+\gammavar{i}{j}{}^*}$, $\betavar = \betavar^*$ and plug them into~\eqref{eq:final_beta_dual} to get the optimal dual solution $d_{\text{LP}}^*$. Theorem~\ref{thm:objective_dual} shows that, $\beta$-CROWN can compute the same objective $d_{\text{LP}}^*$ given the same $\alphavar{i}{j} = \frac{\gammavar{i}{j}{}^*}{\muvar{i}{j}{}^*+\gammavar{i}{j}{}^*}$, $\betavar = \betavar^*$, thus $\max_{0 \leq \alphavar \leq 1, \betavar \geq 0} g(\alphavar, \betavar) \geq d^*_{\text{LP}}$. On the other hand, for any setting of $\alphavar$ and $\betavar$, $\beta$-CROWN produces the same solution $g(\alphavar, \betavar)$ as the rewritten dual LP in~\eqref{eq:final_beta_dual}, so $g(\alphavar, \betavar) \leq d^*_{\text{LP}}$. Thus, we have $\max_{0 \leq \alphavar \leq 1, \betavar \geq 0} g(\alphavar, \betavar) = d^*_{\text{LP}}$. 
Finally, due to the strong duality in linear programming, $p_{\text{LP}}^*=d_{\text{LP}}^*=\max_{0 \leq \alphavar \leq 1, \betavar \geq 0} g(\alphavar, \betavar)$.
\end{proof}

The variables $\alphavar$ in $\beta$-CROWN can be translated to dual variables in LP as well. Given $\alphavar$ in $\beta$-CROWN, we can get the corresponding dual LP variables $\muvar, \gammavar$ given $\alphavar$ by setting $\muvar{i}{j}=(1-{\alphavar{i}{j}})[\hnuvar{i}{j}]^-$ and $\gammavar{i}{j}={\alphavar{i}{j}}[\hnuvar{i}{j}]^-$.

\subsection{Proof for soundness and completeness}
\label{sec:proof_sound_complete}

\begin{customthm}{\ref{thm:soundness_completeness}}
$\beta$-CROWN with branch and bound on splitting ReLU neurons is sound and complete.
\end{customthm}

\begin{proof} 

\textbf{Soundness.} Branch and bound (BaB) with $\beta$-CROWN is sound because for each subdomain $\gC_i := \{ x \in \gC, z \in \setz_i \}$, we apply Theorem~\ref{thm:beta_CROWN} to obtain a sound lower bound $\underline{f}_{\gC_i}$ (the bound is valid for any $\betavar \geq 0$). The final bound returned by BaB is $\min_i \underline{f}_{\gC_i}$ which represents the worst case over all subdomains, and is a sound lower bound for $x \in \gC := \cup_i \gC_i$.

\textbf{Completeness.} To show completeness, we need to solve~\eqref{eq:nn_verify} to its global minimum. When there are $N$ unstable neurons, we have up to $2^N$ subdomains, and in each subdomain we have all unstable ReLU neurons split into one of the $\z{i}{j} \geq 0$ or $\z{i}{j} < 0$ case. The final solution obtained by BaB is the min over these $2^N$ subdomains. To obtain the global minimum, we must ensure that in every of these $2^N$ subdomain we can solve~\eqref{eq:lp_verify_split} exactly.

When all unstable neurons are split in a subdomain $\gC_i$, the network becomes a linear network and neuron split constraints become linear constraints w.r.t.\ inputs. Under this case, an LP with ~\eqref{eq:lp_verify_split} can solve the verification problem in $\gC_i$ exactly. In $\beta$-CROWN, we solve the subdomain using the usually non-concave formulation~\eqref{eq:opt_non_convex}; however, in this case, it becomes concave in $\hat{\betavar}$ because no intermediate layer bounds are used (no $\alphavar^\prime$ and $\betavar^\prime$) and no ReLU neuron is relaxed (no $\alphavar$), thus the only optimizable variable is $\betavar$ (\eqref{eq:opt_non_convex} becomes \eqref{eq:beta_crown_lp}). \eqref{eq:beta_crown_lp} is concave in $\betavar$ so (super)gradient ascent guarantees to converge to the global optimal $\betavar^*$. To ensure convergence without relying on a preset learning rate, a line search can be performed in this case. Then, according to Corollary~\ref{corollary:optimal_alpha_beta}, this optimal $\betavar^*$ corresponds to the optimal dual variable for the LP in~\eqref{eq:lp_verify_split} and the objective is a global minimum of~\eqref{eq:lp_verify_split}.
\end{proof}

\section{More details on $\beta$-CROWN with branch and bound (BaB)}
\label{sec:app_bab}

\subsection{$\beta$-CROWN with branch and bound for complete verification}

We list our $\beta$-CROWN with branch and bound based complete verifier (\textsc{$\beta$-CROWN BaB}) in Algorithm~\ref{alg:bab}. The algorithm takes a target NN function $f$ and a domain $\gC$ as inputs. 
The subprocedure $\mtt{optimized\_beta\_CROWN}$ optimizes $\hat{\alphavar}$ and $\hat{\betavar}$ (free variables for computing intermediate layer bounds and last layer bounds) as~\eqref{eq:opt_non_convex} in Section~\ref{sec:optimization_variable}. It operates in a batch and returns the lower and upper bounds for $n$ selected subdomains simultaneously: a lower bound is obtained by optimizing~\eqref{eq:opt_non_convex} using $\beta$-CROWN and an upper bound can be the network prediction $f(x^*)$ given the $x^*$ that minimizes~\eqref{eq:beta_crown_linear}\footnote{We want an upper bound of the objective in~\eqref{eq:nn_verify}. Since~\eqref{eq:nn_verify} is an minimization problem, any feasible $x$ produces an upper bound of the optimal objective. When \eqref{eq:nn_verify} is solved exactly as $f^*$ (such as in the case where all neurons are split), we have $f^*=\underline{f}=\overline{f}$. See also the discussions in Section I.1 of~\citet{DePalma2021}.}. 
Initially, we don't have any splits, so we only need to optimize $\hat{\alphavar}$ to obtain $\underline{f}$ for $x \in \gC$ (Line 2).
Then we utilize the power of GPUs to split in parallel and maintain a global set $\sP$ storing all the sub-domains which does not satisfy $\underline{f}_{\gC_i} < 0$
(Line 5-10). 
Specifically, $\mtt{batch\_pick\_out}$ extends branching strategy BaBSR~\citep{bunel2018unified} or FSB~\citep{de2021improved} in a parallel manner to select $n$ (batch size) sub-domains in $\sP$ and determine the corresponding ReLU neuron to split for each of them. If the length of $\sP$ is less than $n$, then we reduce $n$ to the length of $\sP$. $\mtt{batch\_split}$ splits each selected $\gC_i$ to two sub-domains $\gC_i^l$ and $\gC_i^u$ by forcing the selected unstable ReLU neuron to be positive and negative, respectively.
$\mtt{Domain\_Filter}$ filters out verified sub-domains (proved with $\underline{f}_{\gC_i} \geq 0$)
and we insert the remaining ones to $\sP$. 
The loop breaks if the property is proved ($\underline{f} \geq 0$), or a counter-example is found in any sub-domain ($\overline{f} < 0$), or the lower bound $\underline{f}$ and upper bound $\overline{f}$ are sufficiently close, or the length of sub-domains $\sP$ reaches a desired threshold $\eta$ (maximum memory limit).

   \begin{algorithm}[t]
        \caption{$\beta$-CROWN with branch and bound for complete verification. \textcolor{brown}{Comments} are in brown.
        }
        \label{alg:bab}
        {\small
            \begin{algorithmic}[1]
            \State \textbf{Inputs}: $f$, $\gC$, $n$ (batch size), $\delta$ (tolerance), $\eta$ (maximum length of sub-domains)
            \State \textcolor{black}{($\underline{f}, \overline{f})  \gets {\mtt{optimized\_beta\_CROWN}}(f, [\gC])$} \Comment{\textcolor{brown}{Initially there is no split, so optimization is done over $\hat{\alphavar}$}}\label{alg:initial_bound}
            \State $\sP \gets \left[ (\underline{f}, \overline{f}, \gC) \right]$ \Comment{\textcolor{brown}{$\sP$ is the set of all unverified sub-domains}}
            \While{$\underline{f} < 0$  \textbf{and} $\overline{f} \geq 0$ \textbf{and} $\overline{f} - \underline{f} > \delta$ \textbf{and} $\mtt{length(\sP)} < \eta$}
            \State \textcolor{black}{$( \gC_1, \dots, \gC_n ) \gets \mtt{batch\_pick\_out}(\sP, n)$} \Comment{\textcolor{brown}{Pick sub-domains to split and removed them from $\sP$}}
            \State \textcolor{black}{$\left[ \gC_1^l, \gC_1^u, \dots, \gC_n^l, \gC_n^u \right] \gets \mtt{batch\_split}( \gC_1, \dots, \gC_n)$} \Comment{\textcolor{brown}{Each $\gC_i$ splits into two sub-domains $\gC_i^l$ and $\gC_i^u$}}
            \State \textcolor{black}{$\left[ \underline{f}_{\gC_1^l},  \overline{f}_{\gC_1^l}, \underline{f}_{\gC_1^u},  \overline{f}_{\gC_1^u}, \dots,  \underline{f}_{\gC_n^l},  \overline{f}_{\gC_n^l}, \underline{f}_{\gC_n^u},  \overline{f}_{\gC_n^u} \right]\gets {\mtt{optimized\_beta\_CROWN}}(f, \left[ \gC_1^l, \gC_1^u, \dots, \gC_n^l, \gC_n^u  \right])$}\Comment{\textcolor{brown}{Compute lower and upper bounds by optimizing $\hat{\alphavar}$ and $\hat{\betavar}$ mentioned in Section~\ref{sec:optimization_variable} in a batch}}\label{alg:inner_loop}
            \State $\sP \leftarrow \sP \bigcup \mtt{Domain\_Filter} \left(  [\underline{f}_{\gC_1^l},  \overline{f}_{\gC_1^l}, \gC_1^l],  [\underline{f}_{\gC_1^u},  \overline{f}_{\gC_1^u}, \gC_1^u], \dots,  [\underline{f}_{\gC_n^l},  \overline{f}_{\gC_n^1}, \gC_n^l],  [\underline{f}_{\gC_n^u},  \overline{f}_{\gC_n^u}, \gC_n^u] \right)$\Comment{\textcolor{brown}{Filter out verified sub-domains, insert the left domains back to $\sP$}}
            \State $\underline{f} \gets \min\{\underline{f}_{\gC_i}\ |\ (\underline{f}_{\gC_i}, \overline{f}_{\gC_i}, \gC_i) \in \sP\}$, $i = 1, \dots, n$ \Comment{\textcolor{brown}{To ease notation, ${\gC_i}$ here indicates either ${\gC_i^u}$ or ${\gC_i^l}$}}
            \State $\overline{f} \gets \min\{\overline{f}_{\gC_i}\ |\ (\underline{f}_{\gC_i}, \overline{f}_{\gC_i},  \gC_i) \in \sP\}$, $i = 1, \dots, n$ 
            \EndWhile
            \State \textbf{Outputs:} $\underline{f}$, $\overline{f}$
          \end{algorithmic}
          }
      \end{algorithm}
      
Note that for models evaluated in our paper, we find that computing intermediate layer bounds in every iteration at line~\ref{alg:inner_loop} is too costly (although it is possible and supported) so we only compute intermediate layer bounds once at line~\ref{alg:initial_bound}. At line~\ref{alg:inner_loop}, only the neuron with split constraints have their intermediate layer bounds updated, and other intermediate bounds are not recomputed. This makes the intermediate layer bounds looser but it allows us to quickly explore a large number of nodes on the branch and bound search tree and is overall beneficial for verifying most models. A similar observation was also found in~\citet{DePalma2021} (Section 5.1.1).

\subsection{Comparisons to other GPU based complete verifiers}
\label{app:gpu_comparison}
\citet{Bunel2020} proposed to reformulate the linear programming problem in~\eqref{eq:lp_verify_split} through Lagrangian decomposition. \eqref{eq:lp_verify_split} is decomposed layer by layer, and each layer is solved with simple closed form solutions on GPUs. A Lagrangian is used to enforce the equality between the output of a previous layer and the input of a later layer. This optimization formulation has the same power as a LP (\eqref{eq:lp_verify_split}) under convergence. The main drawback of this approach is that it converges relatively slowly (it typically requires hundreds of iterations to converge to a solution similar to the solution of a LP), and it also cannot easily jointly optimize intermediate layer bounds. In Table~\ref{table:average_complete} (\textsc{Prox BaBSR}) and Figure~\ref{fig:rate_time_complete} (\textsc{BDD+ BaBSR}, which refers to the same method) we can see that this approach is relatively slow and has high timeout rates compared to other GPU accelerated complete verifiers. Recently, \citet{de2021improved} proposed a better branching strategy, filtered smart branching (FSB), to further improved verification performance of \citep{Bunel2020}, but the Lagrangian Decomposition based incomplete verifier and the branch and bound procedure stay the same.

\citet{DePalma2021} used a tighter convex relaxation~\citep{anderson2020strong} than the typical LP formulation in~\eqref{eq:lp_verify_split} for the incomplete verifier. This tighter relaxation may contain exponentially many constraints, and \citet{DePalma2021} proposed to solve the verification problem in its dual form where each constraint becomes a dual variable. A small active set of dual variables is maintained during dual optimization to ensure efficiency. This tighter relaxation allows it to outperform~\citep{Bunel2020}, but it also comes with extra computational costs and difficulties for an efficient implementation (e.g.\ a ``masked'' forward/backward pass is needed which requires a customised low-level convolution implementation). Additionally, \citet{DePalma2021} did not optimize intermediate layer bounds jointly.

\citet{xu2020fast} used CROWN~\citep{zhang2018efficient} (categorized as a linear relaxation based perturbation analysis (LiRPA) algorithm) as the incomplete solver in BaB. 
Since CROWN cannot encode neural split constraints, \citet{xu2020fast} essentially solve~\eqref{eq:lp_verify_split} \emph{without} neuron split constraints ($\z{i}{j} \geq 0, i \in \{1, \cdots, L-1\}, j \in \setzp{i}$ and $\z{i}{j} < 0, i \in \{1, \cdots, L-1\}, j \in \setzn{i}$) in~\eqref{eq:lp_verify_split}. The missing constraints lead to looser bounds and unnecessary branches. Additionally, using CROWN as the incomplete solver leads to incompleteness - even when all unstable ReLU neurons are split, \citet{xu2020fast} still cannot solve \eqref{eq:nn_verify} to a global minimum, so a LP solver has to be used to check inconsistent splits and guarantee completeness. Our $\beta$-CROWN BaB overcomes these drawbacks: we consider per-neuron split constraints in $\beta$-CROWN which reduces the number of branches and solving time (Table~\ref{table:average_complete}). Most importantly, $\beta$-CROWN with branch and bound is sound and complete (Theorem~\ref{thm:soundness_completeness}) and we do not rely on any LP solvers.

Another difference between \citet{xu2020fast} and our method is the joint optimization of intermediate layer bounds (Section~\ref{sec:optimization_variable}). Although \citep{xu2020fast} also optimized intermediate layer bounds, they only optimize $\alphavar$ and do not have $\betavar$, and they share the same variable $\alphavar$ for all intermediate layer bounds and final bounds, with a total of $O(Ld)$ variables to optimize. Our analysis in Section~\ref{sec:optimization_variable} shows that there are in fact, $O(L^2 d^2)$ free variables to optimize, and we share less variables as in~\citet{xu2020fast}. This allows us to achieve tighter bounds and improve overall performance.

\subsection{Detection of Infeasibility}
\label{sec:infeasible}

Maximizing~\eqref{eq:beta_crown_lp} with infeasible constraints leads to unbounded dual objective, which can be detected by checking if this optimized lower bound becomes \emph{greater than the upper bound} (which is also maintained in BaB, see Alg.1 in Sec.\ B.1). For the robustness verification problem, a subdomain that has lower bound greater than 0 is dropped, which includes the unbounded case. Due to insufficient convergence, this cannot always detect infeasibility, but it \emph{does not affect soundness}, as this infeasible subdomain only leads to worse overall lower bound in BaB. To guarantee \emph{completeness}, we show that when all unstable neurons are split the problem is concave (see Section~\ref{sec:proof_sound_complete}); in this case, we can use line search to guarantee convergence when feasible, and detect infeasibility if the objective exceeds the upper bound (line search guarantees the objective can eventually exceed upper bound). In most real scenarios, the verifier either finishes or times out before all unstable neurons are split.

\section{Details on Experimental Setup and Results}
\label{sec:app_exp}

\subsection{Experimental Setup }
We run our experiments on a machine with a single NVIDIA RTX 3090 GPU (24GB GPU memory), a AMD Ryzen 9 5950X CPU and 64GB memory. 
Our $\beta$-CROWN solver uses 1 CPU and 1 GPU only, except for the MLP models in Table~\ref{tab:verified_acc_eran} where 16 threads are used to compute intermediate layer bounds with Gurobi\footnote{Note that our $\beta$-CROWN verifier does not rely on MILP/LP solvers. For these very small MLP models, we find that a MILP solver can actually compute intermediate layer bounds pretty quickly and using these tighter intermediate bounds are quite helpful for $\beta$-CROWN. This also enables us to utilize both CPUs and GPUs on a machine. For all other models, intermediate layer bounds are computed through optimizing~\eqref{eq:opt_non_convex}. Practically, MILP is not scalable beyond these very small MLP models and these small models are not the main focus of this work.}. We use the Adam optimizer~\citep{kingma2014adam} to solve both $\hat{\alphavar}$ and $\hat{\betavar}$ in \eqref{eq:opt_non_convex} with 20 iterations. The learning rates are set as 0.1 and 0.05 for optimizing $\hat{\alphavar}$ and $\hat{\betavar}$ respectively. We decay the learning rates with a factor of 0.98 per iteration. To maximize the benefits of parallel computing on GPU, we use batch sizes $n=$1024 for Base (CIFAR-10), Wide (CIFAR-10), Deep (CIFAR-10),  CNN-A-Adv (MNIST) and ConvSmall (MNIST), $n=$2048 for ConvSmall (CIFAR-10), $n=$4096  for CNN-A-Adv (CIFAR-10), CNN-A-Adv-4 (CIFAR-10), CNN-A-Mix (CIFAR-10) and CNN-A-Mix-4 (CIFAR-10), $n=$256 for CNN-B-Adv (CIFAR-10) and CNN-B-Adv-4 (CIFAR-10), $n=$1024 for ConvBig (MNIST), $n=$10 for ConvBig (CIFAR-10),  $n=$8 for  ResNet (CIFAR-10) respectively.
The CNN-A-Adv, CNN-A-Adv-4, CNN-A-Mix, CNN-A-Mix-4, CNN-B-Adv and CNN-B-Adv-4 models are obtained from the authors or \citep{dathathri2020enabling} and are the same as the models used in their paper.
We summarize the model structures in both incomplete verification and complete verification (Base, Wide and Deep) experiments in Table~\ref{tab:model_structres}. Our code is available at \url{http://PaperCode.cc/BetaCROWN}.
 
\begin{table*}[h]
    \centering
        \caption{Model structures used in our experiments.  For example, Conv(1, 16, 4) stands for a conventional layer with 1 input channel, 16 output channels and a kernel size of $4 \times 4$. Linear(1568, 100) stands for a fully connected layer with 1568 input features and 100 output features. We have ReLU activation functions between two consecutive layers.}
    \label{tab:model_structres}
    \scriptsize
    \begin{tabular}{c|c}
    \toprule
         Model name & Model structure  \\
    \midrule
         CNN-A-Adv (MNIST) & Conv(1, 16, 4) - Conv(16, 32, 4) - Linear(1568, 100) - Linear(100, 10)\\
        ConvSmall (MNIST) & Conv(1, 16, 4) - Conv(16, 32, 4) - Linear(800, 100) - Linear(100, 10)\\
        ConvBig (MNIST) & Conv(1, 32, 3) - Conv(32, 32, 4) - Conv(32, 64, 3) - Conv(64, 64, 4) - Linear(3136, 512) - \\
        & Linear(512, 512) - Linear(512, 10)\\
         ConvSmall (CIFAR-10) & Conv(3, 16, 4) - Conv(16, 32, 4) - Linear(1152, 100) - Linear(100, 10)\\
        ConvBig (CIFAR-10) & Conv(3, 32, 3) - Conv(32, 32, 4) - Conv(32, 64, 3) - Conv(64, 64, 4) - Linear(4096, 512) - \\
        & Linear(512, 512) - Linear(512, 10)\\
         
         CNN-A-Adv/-4 (CIFAR-10) & Conv(3, 16, 4) - Conv(16, 32, 4) - Linear(2048, 100) - Linear(100, 10)\\

         CNN-B-Adv/-4 (CIFAR-10)&  Conv(3, 32, 5) - Conv(32, 128, 4) - Linear(8192, 250) - Linear(250, 10) \\
          CNN-A-Mix/-4 (CIFAR-10) & Conv(3, 16, 4) - Conv(16, 32, 4) - Linear(2048, 100) - Linear(100, 10)\\
    \midrule
         Base (CIFAR-10) & Conv(3, 8, 4) - Conv(8, 16, 4) - Linear(1024, 100) - Linear(100, 10) \\
         Wide (CIFAR-10) & Conv(3, 16, 4) - Conv(16, 32, 4) - Linear(2048, 100) - Linear(100, 10) \\
         Deep  (CIFAR-10)& Conv(3, 8, 4) - Conv(8, 8, 3) -  Conv(8, 8, 3) - Conv(8, 8, 4) - Linear(412, 100) - Linear(100, 10) \\
         \bottomrule

    \end{tabular}
\end{table*}

\subsection{Additional Experiments}
\label{app:additional_exp}

\paragraph{More results on incomplete verification}
In this paragraph we compare our $\beta$-CROWN FSB to many other incomplete verifiers. WK~\citep{wong2018provable} and CROWN~\citep{zhang2018efficient} are simple bound propagation methods; CROWN-OPT uses the joint optimization on intermediate layer bounds (optimizing~\eqref{eq:opt_non_convex} with no $\hat{\betavar}$, as done in~\citep{xu2020fast}). We also include triangle relaxation based LP verifiers with intermediate layer bounds obtained from WK, CROWN and CROWN-OPT. In our experiments in Table~\ref{fig:rate_time_complete}, we noticed that \textsc{BigM+A.Set BaBSR}~\citep{DePalma2021} and Fast-and-Complete~\citep{xu2020fast} are also very competitive among existing state-of-the-art complete verifiers\footnote{The concurrent work BaDNB (BDD+ FSB) does not have public available code when our paper was submitted.} - they runs fast in many cases with low timeout rates. Therefore, we also evaluate
\textsc{BigM+A.Set BaBSR} and Fast-and-Complete with an early stop of 3 minutes for the incomplete verification setting as an extension of Section~\ref{sec:exp_incomplete}. The verified accuracy obtained from each method are reported in Table~\ref{tab:verified_acc_sdp_aset}.
\textsc{BigM+A.Set BaBSR} and Fast-and-Complete sometimes produce better bounds than SDP-FO, however \textsc{$\beta$-CROWN FSB} consistently outperforms both of them. Additionally, we found that intermediate layer bounds are important for LP verifier on some models, although even with the tightest possible CROWN-OPT bounds the verified accuracy gap between LP verifiers and ours is still large. Additionally, LP verifiers need significantly more time.

\paragraph{Additional results on the tightness of verification.} In Figure~\ref{fig:tightness_lp}, we include LP based verifiers as baselines and compare the lower bound from verification to the upper bound obtained by PGD. The LP verifiers use the triangle relaxations described in Section~\ref{sec:nn_verification_problem}, with intermediate layer bounds from WK~\citep{wong2018provable}, CROWN~\citep{zhang2018efficient} and CROWN with joint optimization on intermediate layer bounds (denoted as CROWN-OPT). We find that tighter intermediate layer bounds obtained by CROWN can greatly improve the performance of the LP verifier compared to those using looser ones obtained by~\citet{wong2018provable}. Furthermore, using intermediate layer bounds computed by joint optimization can achieve additional improvements. 
However, our branch and bound with $\beta$-CROWN can significantly outperform these LP verifiers. This shows that BaB is an effective approach for incomplete verification, outperforming the bounds produced by a single LP.

\begin{table}[htb]
    \small
        \caption{{\bf Verified accuracy (\%)} and avg.\ per-example verification time (s) on 7 models from SDP-FO~\cite{dathathri2020enabling}. 
        }
        \setlength{\tabcolsep}{2.5pt}
        \centering
        \label{tab:verified_acc_sdp_aset}
      \adjustbox{max width=\textwidth}{
          \begin{threeparttable}[hbt]
        \begin{tabular}{c|rr|rr|rr|rr|rr|rr|rr}
        \toprule
        Dataset & \multicolumn{2}{c|}{MNIST $\epsilon=0.3$} & \multicolumn{12}{c}{CIFAR $\epsilon=2/255$} \\ \midrule
        Model        & \multicolumn{2}{c|}{CNN-A-Adv}      & \multicolumn{2}{c|}{CNN-B-Adv} & \multicolumn{2}{c|}{CNN-B-Adv4} &  \multicolumn{2}{c|}{CNN-A-Adv} & \multicolumn{2}{c|}{CNN-A-Adv4} & \multicolumn{2}{c|}{CNN-A-Mix} & \multicolumn{2}{c}{CNN-A-Mix4}  \\ 
        Methods & Verified\%      & Time (s)      & Ver.\%         & Time(s)        & Ver.\%        & Time(s)  & Ver.\%        & Time(s)    & Ver.\%        & Time(s) & Ver.\% & Time(s) & Ver.\% & Time(s) \\ \midrule
        WK~\citep{wong2018provable} & 0 & 0.1 & 8.5 & 0.4 & 34.5 & 0.8 & 32.5 & 0.4 & 39.5 & 0.5 & 15.0 & 0.3 & 30.0 & 0.4 \\
        CROWN~\citep{zhang2018efficient} & 1.0 & 0.1 & 21.5 & 0.5 & 43.5 & 0.9 & 35.5 & 0.6 & 41.5 & 0.7 & 23.5 & 0.4 & 38.0 & 0.5 \\
        CROWN-OPT~\citep{xu2020fast} & 14.0 & 2.7 & 21.5 & 5.5 & 45.0 & 4.0 & 36.0 & 2.0 & 42.0 & 1.6 & 25.0 & 2.3 & 38.5 & 2.0\\
        LP (WK)$^\mathsection$ & 0.5 & 16 & 14.5 & 612 & 41.0 & 1361 & 35.0 & 114 & 41.5 & 140 & 19.0 & 84 & 36.5 & 117 \\
        LP (CROWN) & 3.5 & 22 & 21.5 & 941 & 45.0 & 1570 & 36.0 & 123 & 41.5 & 147 & 24.0 & 119 & 38.5 & 126 \\
        LP (CROWN-OPT) & 14.0 & 40 & 21.5 & 977 & 45.0 & 1451 & 36.0 & 122 & 42.0 & 152 & 25.0 & 94.8 & 38.5 & 127\\
        SDP-FO~\citep{dathathri2020enabling}$^*$ & 43.4 & $>$20h & 32.8 & $>$25h & 46.0 & $>$25h & 39.6 & $>$25h & 40.0 & $>$25h & 39.6 & $>$25h & 47.8 & $>$25h \\
        PRIMA~\citep{muller2021precise} & 44.5 & 136 & 38.0 & 344 & 53.5 & 44 & 41.5 & 4.8 & 45.0 & 4.9 & 37.5$^\ddagger$ & 34 & 48.5 & 7.0 \\
        {{BigM+A.Set}~\cite{DePalma2021}} & 63.0 & 117 & N/A$^\dagger$ & N/A & N/A & N/A & 41.0 & 79 & \textbf{46.0} & 39 & 30.0 & 122 & 47.0 & 71 \\
        Fast-and-Complete$^\mathsection$ & {66.0} & 49 & {38.5} & 64 & {51.5} & 21 & {41.5} & 14 & \textbf{46.0} & 4.2 & {33.0} & 79 & {46.0} & 10 \\ 
        $\beta$-CROWN FSB & \textbf{70.5} & 21 & \textbf{46.5} & 32 & \textbf{54.0} & 12 & \textbf{44.0} & 5.8 & \textbf{46.0} & 5.6 & \textbf{41.5} & 50 & \textbf{50.5} & 5.9 \\ \hline
        Upper Bound (PGD) & 76.5 &  - & 65.0 & - & 63.0 & - & 50.0 & - & 49.5 & - & 53.0 & - & 57.5 & - \\
        \bottomrule
        \end{tabular}
        \begin{tablenotes}
\item [$^\mathsection$] Names in parentheses are methods to compute intermediate layer bounds for the LP verifier.
\item [*] SDP-FO results are directly from their paper due to the very long running time. All other methods are tested on the same set of 200 examples.  
\item [$\dagger$] The  implementation of {BigM+A.Set BaBSR} is not compatible with CNN-B-Adv and CNN-B-Adv4 models which have an convolution with asymmetric padding. 
\item [$\ddagger$] A recent version (Oct 26, 2021) of~\citep{muller2021precise} reported better results on CNN-A-Mix. We found that their results were produced on a selection of 100 data points, and reruning their method using the same command on the same set of 200 random examples as used in other methods in this table produces different results, as reported here.
\item [$\mathsection$] We use our $\beta$-CROWN code and turn off $\beta$ optimization to emulate the algorithm used in~\citep{xu2020fast}. This in fact leads to better performance than the original approach in~\citep{xu2020fast} because we allow more $\alpha$ variables to be optimized and our implementation is generally better.
\end{tablenotes}
\end{threeparttable}
        }
\end{table}

\begin{figure}[htb]
    \vspace{-5pt}
    \centering
    \caption{\small 
    Verified lower bound on $\f(x)$ by $\beta$-CROWN FSB compared against incomplete LP verifiers using different intermediate layer bounds obtained from~\cite{wong2018provable} (denoted as LP (WK)), CROWN~\cite{zhang2018efficient} (denoted as LP (CROWN)), and jointly optimized intermediate bounds in~\eqref{eq:opt_non_convex} (denoted as LP (CROWN-OPT)), v.s.\ the adversarial upper bound on $\f(x)$ found by PGD. LPs need much longer time to solve than $\beta$-CROWN on CIFAR-10 models (see Table~\ref{tab:verified_acc_sdp_aset}).
    }
    \includegraphics[width=.47\linewidth]{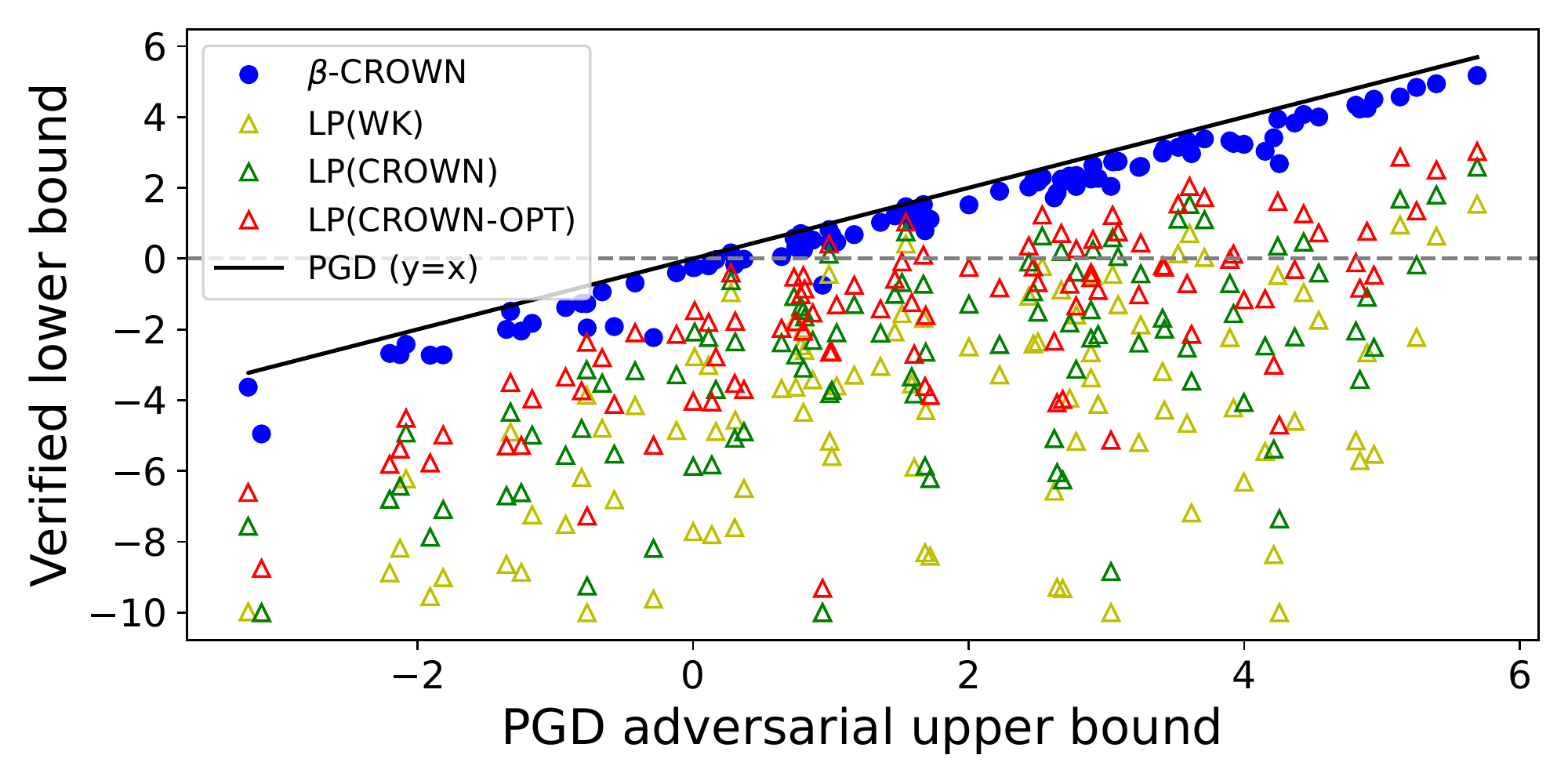}
    \includegraphics[width=.47\linewidth]{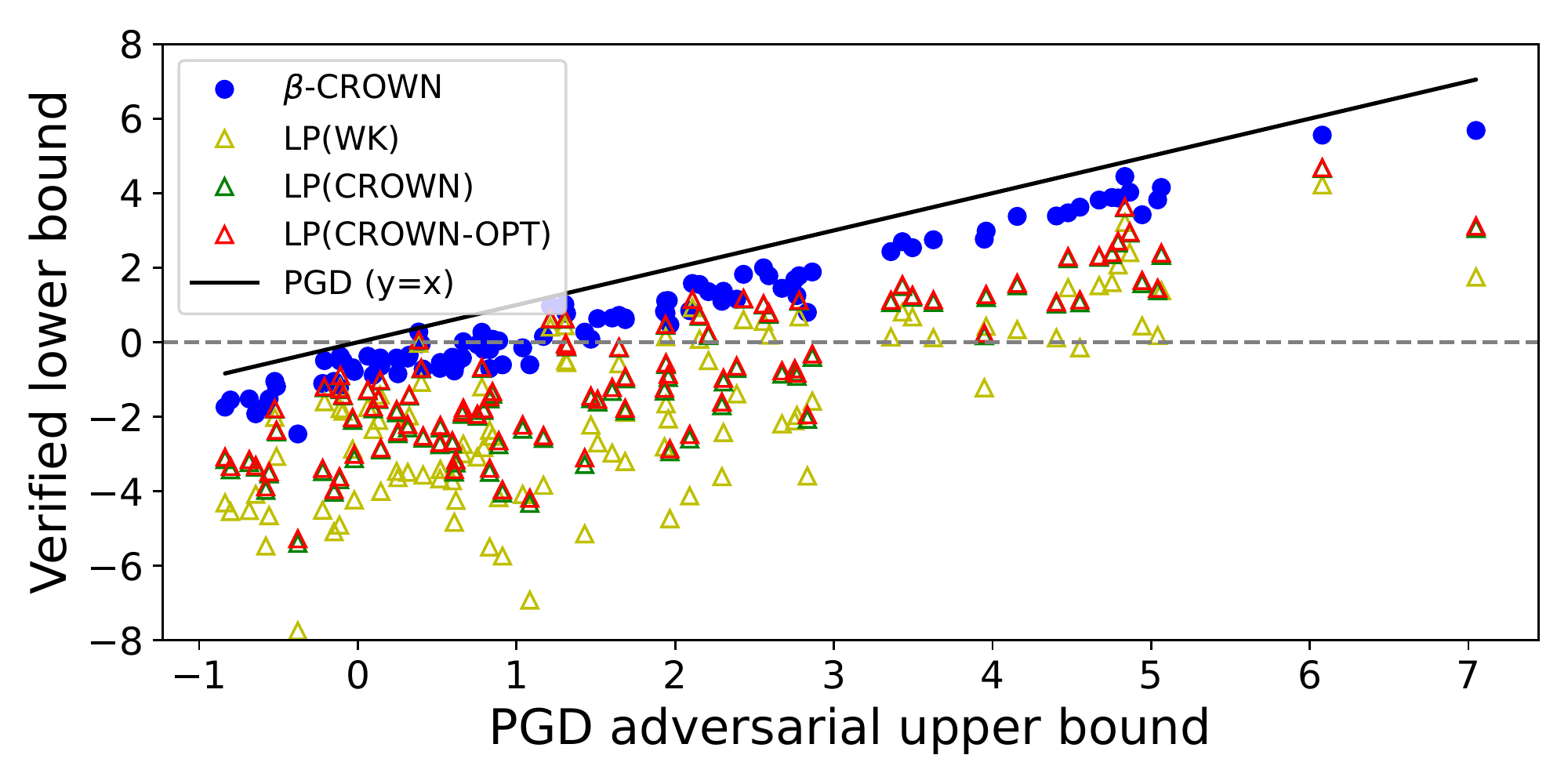}
    
    {\small (a) MNIST CNN-A-Adv, runner-up targets, $\epsilon=0.3$ \hspace{4mm} (b) CIFAR CNN-B-Adv, runner-up targets, $\epsilon=2/255$}
    \label{fig:tightness_lp}
\end{figure}

\paragraph{Lower bound improvements over time} In Figure~\ref{fig:convergence_time}, we plot lower bound values vs.\ time for \textsc{$\beta$-CROWN BaBSR} and \textsc{BigM+A.set BaBSR} (one of the most competitive methods in Table~\ref{table:average_complete}) on the CNN-A-Adv (MNIST) model. Figure~\ref{fig:convergence_time} shows that branch and bound can indeed quickly improve the lower bound, and our \textsc{$\beta$-CROWN BaBSR} is consistently faster than \textsc{BigM+A.set BaBSR}. In contrast, SDP-FO~\citep{dathathri2020enabling}, which typically requires 2 to 3 hours to converge, can only provide very loose bounds during the first 3 minutes of optimization (out of the range on these figures).

\begin{figure}
    \centering
        \caption{For the CNN-A-Adv (MNIST) model, we randomly select four examples from the incomplete verification experiment and plot the lower bound v.s. time (in 180 seconds) of \textsc{$\beta$-CROWN BaBSR} and \textsc{BigM+A.set BaBSR}. Larger lower bounds are better. $\beta$-CROWN BaBSR improves bound noticeably faster in all four situations.}
    \label{fig:convergence_time}
    \includegraphics[width=0.24\textwidth]{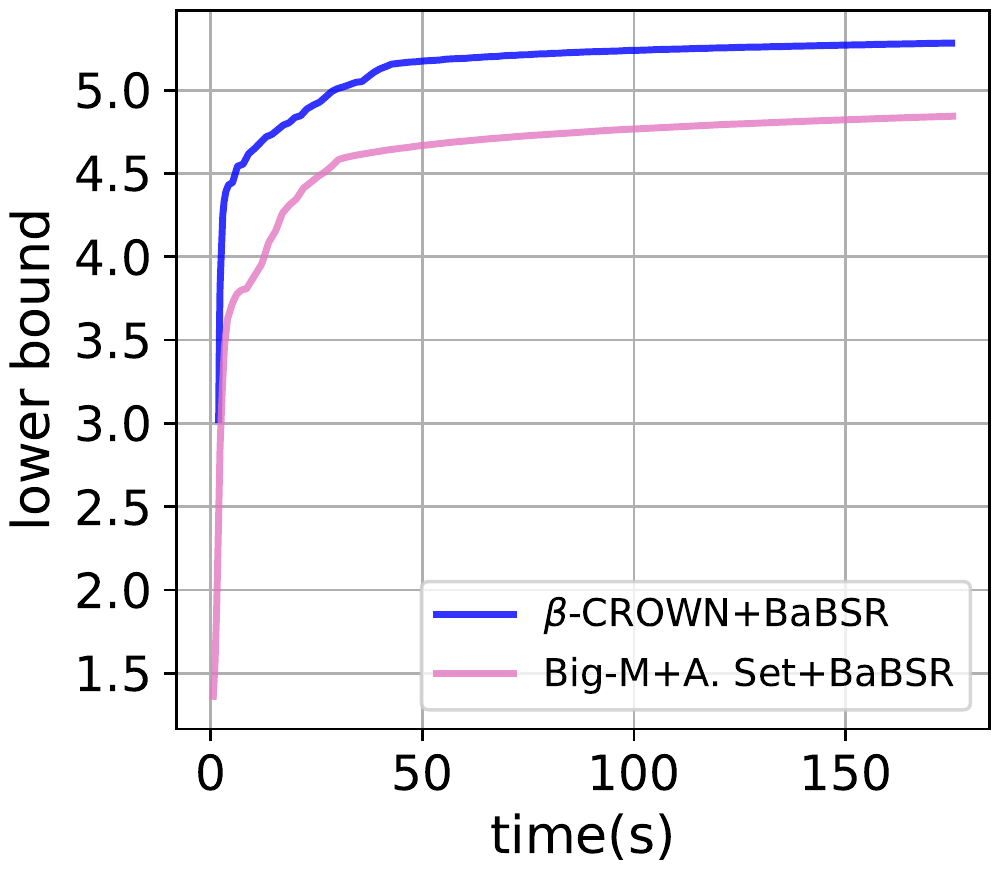}
    \includegraphics[width=0.24\textwidth]{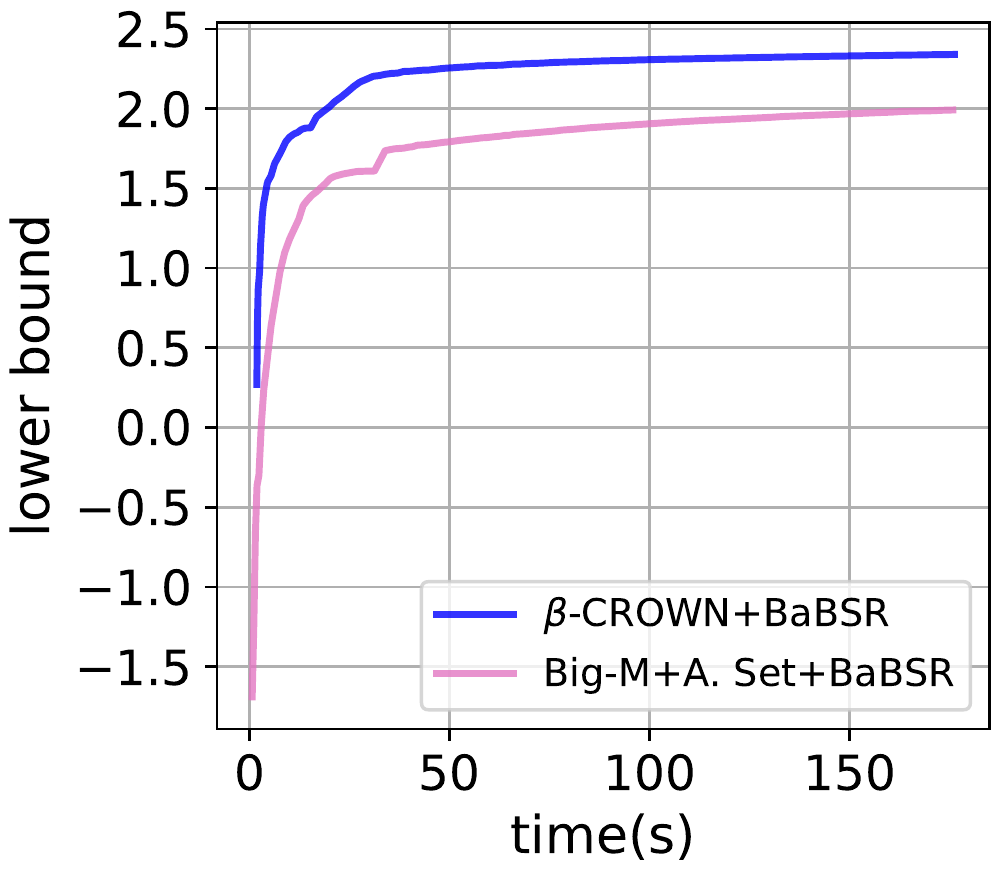}
    \includegraphics[width=0.24\textwidth]{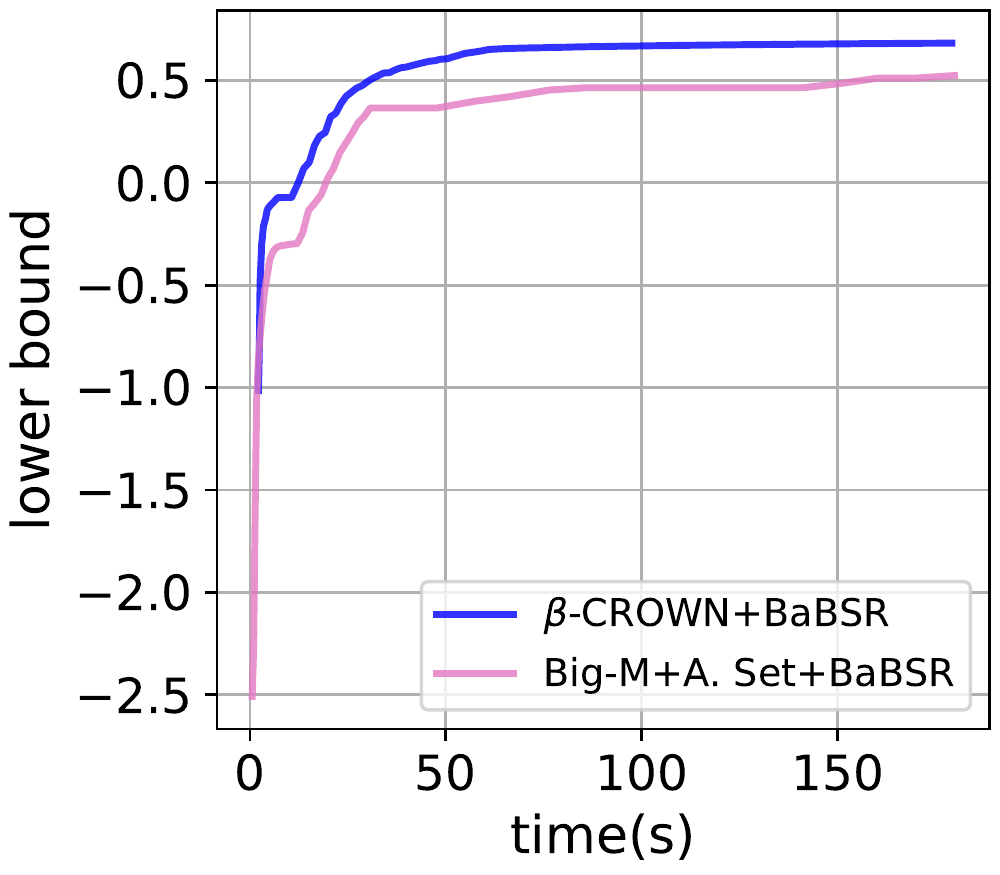}
    \includegraphics[width=0.24\textwidth]{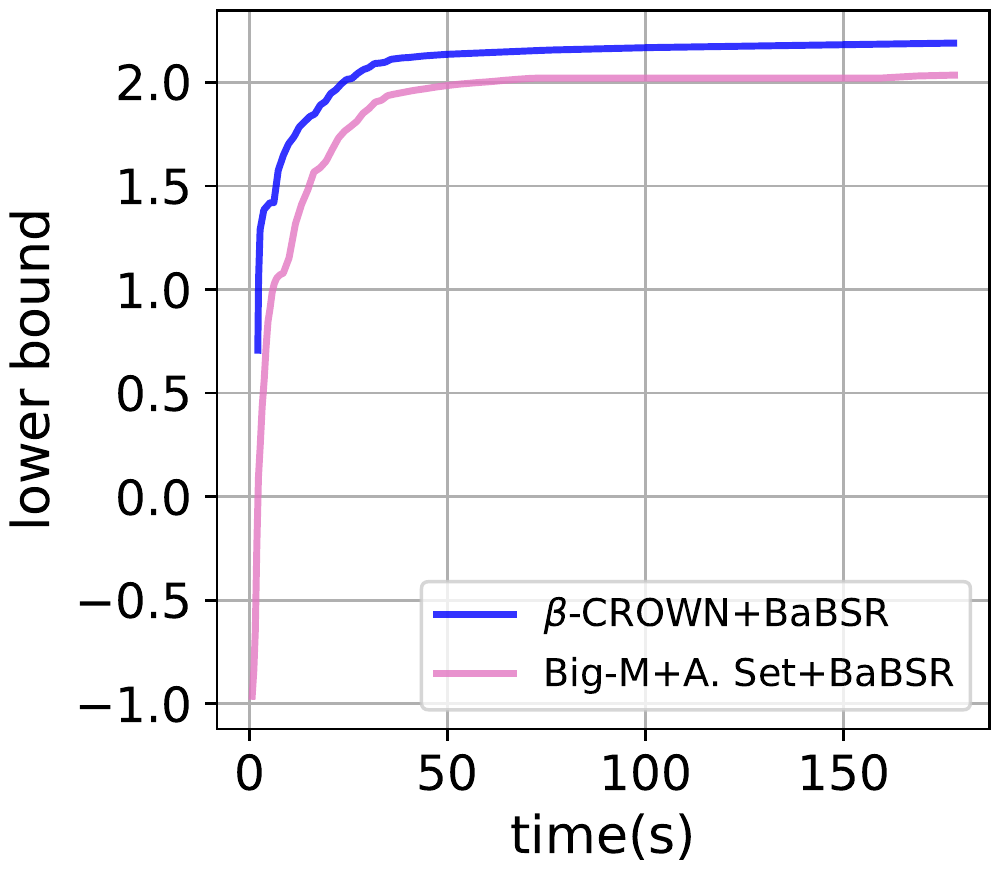}
\end{figure}

\paragraph{Ablation study of running time on GPUs and CPUs} We conduct the same experiments as in Table~\ref{table:average_complete} but run $\beta$-CROWN FSB on CPUs instead of GPUs. As shown in Table~\ref{table:cpu_ablation}, our method is strong even on a single CPU, showing that the good performance does not only come from GPU acceleration; our efficient algorithm also contributes to our success. On the other hand, using GPU can boost the performance by at least 2x. Importantly, the models evaluated in this table are very small ones. Massive parallelization on GPU will lead to more significant acceleration on larger models. The speedup on multi-core CPU is not obvious, possibly due to the limitation of underlying implementations of PyTorch.

\begin{table*}[h]
\centering
\caption{\footnotesize Average runtime and average number of branches on three CIFAR-10 models over 100 properties (the same setting as in Table~\ref{table:average_complete}) by using different numbers of CPU cores, as well as using a single GPU.
}
\vspace{-5pt}
\label{table:cpu_ablation}
\scriptsize
\setlength{\tabcolsep}{4.2pt}
\begin{adjustbox}{center}
\begin{threeparttable}[hbt]
\begin{tabular}{crrr|rrr|rrr}

    & \multicolumn{3}{ c }{CIFAR-10 Base} & \multicolumn{3}{ c }{CIFAR-10 Wide} & \multicolumn{3}{ c }{CIFAR-10 Deep} \\
    \toprule

    \multicolumn{1}{ c| }{Hardware} &
    \multicolumn{1}{ c }{time(s)} &
    \multicolumn{1}{ c }{branches} &
    \multicolumn{1}{ c| }{$\%$timeout} &
    \multicolumn{1}{ c }{time(s)} &
    \multicolumn{1}{ c }{branches} &
    \multicolumn{1}{ c| }{$\%$timeout} &
    \multicolumn{1}{ c }{time(s)} &
    \multicolumn{1}{ c }{branches} &
    \multicolumn{1}{ c }{$\%$timeout} \\

    \cmidrule(lr){1-1} \cmidrule(lr){2-4} \cmidrule(lr){5-7} \cmidrule(lr){8-10}

    \multicolumn{1}{ r| }{{1 CPU}}
        &249.49	
        &7886.37
        &4.00

        &178.01	
        &2749.96	
        &4.00

        &47.46	
        &41.12
        &0.00\\

    \multicolumn{1}{ r| }{{4 CPU}}
        & 228.28
        & 9575.52
        & 4.00

        &172.55
        &3956.17
        &4.00

        & 45.35
        &41.12
        &0.00\\
    
    \multicolumn{1}{ r| }{{16 CPU}}
        & 222.71		
        &	10271.08
        & 4.00

        &172.40
        &4087.15	
        &4.00

        &43.97
        &41.12
        &0.00 \\
        
    \multicolumn{1}{ r| }{{1 GPU}}
         &{118.23}
        &208018.21	
        &{3.00}	

        &{78.32}
        &116912.57
        &{2.00}	

        &{5.69}
        &{41.12}	
        &{0.00}\\

    \bottomrule

\end{tabular}

\end{threeparttable}
\end{adjustbox}
\end{table*}

\paragraph{Ablation study on the impact of $\alpha$, $\beta$, and their joint optimization}
We conduct the same experiments as in Table~\ref{table:average_complete} but turn on or turn off $\alpha$ and $\beta$ optimization to see the contribution of each part.
As shown in Table~\ref{table:opt_ablation}, optimizing both $\alpha$ and $\beta$ leads to optimal performance. Optimizing beta has a greater impact than optimizing $\alpha$. Joint optimization is helpful for CIFAR10-Base and CIFAR10-Wide models, reducing the overall runtime. For simple models like CIFAR10-Deep, disabling joint optimization can help slightly because this model is very easy to verify (within a few seconds) and using looser bounds reduces verification cost.

\begin{table*}[h]
\centering
\caption{\footnotesize Ablation study on the CIFAR-10 Base, Wide and Deep models (the same setting as in Table~\ref{table:average_complete}), including combinations
of optimizing or not optimizing $\alpha$ and/or $\beta$ variables, and using or not using joint optimization for intermediate layer bounds.
}
\vspace{-5pt}
\label{table:opt_ablation}
\scriptsize
\setlength{\tabcolsep}{4.2pt}
\begin{adjustbox}{center}
\begin{threeparttable}[hbt]
\begin{tabular}{ccc|rrr|rrr|rrr}

    \multicolumn{3}{ c }{}& \multicolumn{3}{ c }{CIFAR-10 Base} & \multicolumn{3}{ c }{CIFAR-10 Wide} & \multicolumn{3}{ c }{CIFAR-10 Deep} \\
    \toprule

    joint opt &  $\alpha$ & $\beta$ &
    \multicolumn{1}{ c }{time(s)} &
    \multicolumn{1}{ c }{branches} &
    \multicolumn{1}{ c| }{$\%$timeout} &
    \multicolumn{1}{ c }{time(s)} &
    \multicolumn{1}{ c }{branches} &
    \multicolumn{1}{ c| }{$\%$timeout} &
    \multicolumn{1}{ c }{time(s)} &
    \multicolumn{1}{ c }{branches} &
    \multicolumn{1}{ c }{$\%$timeout} \\

    \cmidrule(lr){1-3} \cmidrule(lr){4-6} \cmidrule(lr){7-9} \cmidrule(lr){10-12}
    
    & $\checkmark$ &  
        & 233.86		
        & 233233.70
        & 6.00

        &148.46
        &113017.10
        &4.00

        &5.77
        &260.18
        &0.00\\
    
    &  &  $\checkmark$
        & 174.10		
        & 163037.05
        & 4.00

        &	102.65
        &86571.18
        &2.00

        & 5.73
        &	134.76
        &0.00\\
    
     &  $\checkmark$ &  $\checkmark$ 
        &139.83		
        &133346.44
        &3.00

        &91.01
        &73713.30	
        &2.00

        &5.22
        &100.44
        &0.00\\
    \cmidrule(lr){1-3} \cmidrule(lr){4-6} \cmidrule(lr){7-9} \cmidrule(lr){10-12}
        
    $\checkmark$ & $\checkmark$ &  
        & 163.69		
        & 160058.80
        & 4.00

        &149.00
        &115509.71
        &4.00

        &8.58
        &65.70
        &0.00\\

    $\checkmark$ &  &  $\checkmark$
        & 162.95						
        &	150631.49
        & 4.00

        &89.22
        &72479.96	
        &2.00

        &8.38
        &52.26
        &0.00 \\
        
    $\checkmark$ & $\checkmark$ &  $\checkmark$
         &{118.23}
        &208018.21
        &{3.00}	

        &{78.32}
        &116912.57	
        &{2.00}	

        &{5.69}
        &{41.12}	
        &{0.00}\\

    \bottomrule

\end{tabular}
\end{threeparttable}
\end{adjustbox}
\end{table*}

\end{document}